\documentclass[10pt,english,journal]{IEEEtran}
\usepackage[T1]{fontenc}
\usepackage[utf8]{inputenc}
\usepackage{babel}
\usepackage{refstyle}
\usepackage{mathtools}
\usepackage{bm}
\usepackage{amsmath}
\usepackage{amsthm}
\usepackage{amssymb}
\usepackage{microtype}
\usepackage[pdfusetitle,
 bookmarks=true,bookmarksnumbered=false,bookmarksopen=false,
 breaklinks=false,pdfborder={0 0 1},backref=false,colorlinks=false]
 {hyperref}
\hypersetup{
 hidelinks}

\makeatletter

\AtBeginDocument{\providecommand\secref[1]{\ref{sec:#1}}}
\AtBeginDocument{\providecommand\thmref[1]{\ref{thm:#1}}}
\AtBeginDocument{\providecommand\propref[1]{\ref{prop:#1}}}
\AtBeginDocument{\providecommand\figref[1]{\ref{fig:#1}}}
\AtBeginDocument{\providecommand\defref[1]{\ref{def:#1}}}
\AtBeginDocument{\providecommand\lemref[1]{\ref{lem:#1}}}
\RS@ifundefined{subsecref}
  {\newref{subsec}{name = \RSsectxt}}
  {}
\RS@ifundefined{thmref}
  {\def\RSthmtxt{theorem~}\newref{thm}{name = \RSthmtxt}}
  {}
\RS@ifundefined{lemref}
  {\def\RSlemtxt{lemma~}\newref{lem}{name = \RSlemtxt}}
  {}

\theoremstyle{plain}
\newtheorem{thm}{\protect\theoremname}
\theoremstyle{definition}
\newtheorem{defn}[thm]{\protect\definitionname}
\theoremstyle{remark}
\newtheorem{rem}[thm]{\protect\remarkname}
\theoremstyle{definition}
\newtheorem{example}[thm]{\protect\examplename}
\theoremstyle{plain}
\newtheorem{prop}[thm]{\protect\propositionname}
\theoremstyle{plain}
\newtheorem{lem}[thm]{\protect\lemmaname}
\theoremstyle{remark}
\newtheorem{claim}[thm]{\protect\claimname}

\usepackage{pgfplots}
\pgfplotsset{compat=1.18}	
\usepackage{xcolor}
\definecolor{dantzigcolor}{RGB}{31,119,180}
\definecolor{lassocolor}{RGB}{214,39,40}

\makeatother

\providecommand{\claimname}{Claim}
\providecommand{\definitionname}{Definition}
\providecommand{\examplename}{Example}
\providecommand{\lemmaname}{Lemma}
\providecommand{\propositionname}{Proposition}
\providecommand{\remarkname}{Remark}
\providecommand{\theoremname}{Theorem}

\begin{document}
\title{Sharp Restricted Isometry Thresholds for\\
Global Minima of Rank-Restricted Matrix LASSO}
\author{Richard~Y.~Zhang\thanks{R. Y. Zhang is with the Department of Electrical and Computer Engineering,
University of Illinois Urbana--Champaign, Urbana, IL 61801 USA (e-mail:
ryz@illinois.edu).}\thanks{This work was supported in part by NSF CAREER Award ECCS-2047462 and
ONR Award N00014-24-1-2671.}}
\maketitle
\begin{abstract}
We determine the sharp restricted isometry threshold for recovery
at global minima of the rank-restricted matrix LASSO. For target rank
$r_{\star}$, if the rank-$k$ RIP constant satisfies $\delta<\delta_{\mathrm{sharp}}(k/r_{\star})$,
where $\delta_{\mathrm{sharp}}(t)=t/(4-t)$ for $0<t<4/3$ and $\delta_{\mathrm{sharp}}(t)=\sqrt{(t-1)/t}$
for $t\ge4/3$, then every global minimizer has Frobenius error $\lesssim\sqrt{r_{\star}}\lambda$
for all $\lambda\gtrsim\|\mathcal{A}^{*}(\xi)\|_{\mathrm{op}}$ and
at every search rank $r\ge r_{\star}$. The constants depend only
on the RIP constant and $t=k/r_{\star}$, and in particular are independent
of the search rank. When the rank restriction is inactive, the result
specializes to the ordinary convex matrix LASSO. We also obtain the
analogous results for sparsity-restricted vector LASSO. Conversely,
we show that the threshold $\delta<\delta_{\mathrm{sharp}}(k/r_{\star})$
cannot be improved, due to the existence of counterexamples whose
global minimizers fail to recover the ground truth.
\end{abstract}

\begin{IEEEkeywords}
Low-rank matrix recovery, matrix LASSO, nuclear norm, restricted isometry
property, sharp threshold.
\end{IEEEkeywords}

\global\long\def\ip#1#2{\left\langle #1,#2\right\rangle }%
\global\long\def\R{\mathbb{R}}%
\global\long\def\A{\mathcal{A}}%
\global\long\def\H{\mathcal{H}}%
\global\long\def\sharpdelta{\delta_{\mathrm{sharp}}}%
\global\long\def\rr{r_{\star}}%
\global\long\def\ss{s_{\star}}%

\global\long\def\RIPm{\operatorname{RIP}}%
\global\long\def\RIPv{\operatorname{rip}}%

\global\long\def\tail{\mathrm{tail}}%
\global\long\def\head{\mathrm{head}}%
\global\long\def\crit{\mathrm{crit}}%

\global\long\def\F{\mathrm{F}}%
\global\long\def\op{\mathrm{op}}%
\global\long\def\nuc{\mathrm{nuc}}%

\global\long\def\norm#1{\left\lVert #1\right\rVert }%
\global\long\def\normnuc#1{\norm{#1}_{\nuc}}%
\global\long\def\normop#1{\norm{#1}_{\op}}%
\global\long\def\normF#1{\norm{#1}_{\F}}%

\global\long\def\vector{\operatorname{vec}}%

\global\long\def\Defect{\operatorname{Defect}}%

\global\long\def\supp{\operatorname{supp}}%
\global\long\def\rank{\operatorname{rank}}%
\global\long\def\spar{\operatorname{spar}}%
\global\long\def\diag{\operatorname{diag}}%
\global\long\def\tr{\operatorname{tr}}%
\global\long\def\cA{\mathsf{A}}%
\global\long\def\cC{\mathsf{C}}%
\global\long\def\cK{\mathsf{K}}%
\global\long\def\cH{\mathsf{H}}%
\global\long\def\cD{\mathsf{D}}%
\global\long\def\cR{\mathsf{R}}%
\global\long\def\cT{\mathsf{T}}%
\global\long\def\cS{\mathsf{S}}%
\global\long\def\cX{\mathsf{X}}%
\global\long\def\cY{\mathsf{Y}}%
\global\long\def\cZ{\mathsf{Z}}%

\global\long\def\argmin{\operatorname*{arg\,min}}%
\global\long\def\Argmin{\operatorname*{Arg\,min}}%
\global\long\def\normtwo#1{\norm{#1}_{2}}%
\global\long\def\normone#1{\norm{#1}_{1}}%
\global\long\def\norminf#1{\norm{#1}_{\infty}}%
\global\long\def\normzero#1{\norm{#1}_{0}}%
\global\long\def\E{\mathbb{E}}%
\global\long\def\Unif{\operatorname{Unif}}%
\global\long\def\e{\mathrm{e}}%

\global\long\def\Dantzig{\mathrm{D}}%
\global\long\def\Lasso{\mathrm{L}}%

\section{Introduction}

\label{sec:intro}Low-rank matrix recovery seeks to estimate an unknown
matrix $X_{\star}\in\R^{d_{1}\times d_{2}}$ of rank at most $\rr$
from $m$ noisy linear measurements 
\begin{equation}
b=\A(X_{\star})+\xi\in\R^{m},\qquad\A:\R^{d_{1}\times d_{2}}\to\R^{m}.\label{eq:model}
\end{equation}
A standard scalable approach fixes a search rank $r\ge\rr$ and fits
a factored model $X=LR^{T}$ with $L\in\R^{d_{1}\times r}$ and $R\in\R^{d_{2}\times r}$
against the regularized least-squares loss 
\begin{equation}
f_{\lambda}(L,R)\coloneqq\normtwo{\A(LR^{T})-b}^{2}+\lambda\bigl(\normF L^{2}+\normF R^{2}\bigr),\label{eq:loss}
\end{equation}
where $\lambda\ge0$ is a regularization parameter. Despite the nonconvexity
of (\ref{eq:loss}), gradient descent on the factors $L$ and $R$
from a random initialization is routinely observed to recover the
ground truth to high accuracy.

Local optimization algorithms can at best guarantee convergence to
a second-order critical point (SOCP), that is, to a local minimum
or to a higher-order saddle point. % TODO: cite the saddle-avoidance result you rely on, e.g. Lee et al. (2016),
% Jin et al. (2017); as written the claim is standard but uncited.
Considerable recent effort has therefore gone into certifying recovery
at every SOCP. As we review in \secref{nonconvex}, the conditions
needed to rule out spurious SOCPs uniformly turn out to be strong,
and they tighten as the search rank grows.

In practice it is widely believed that spurious SOCPs do exist, and
yet that local optimization converges to global optimality anyway,
without any rigorous guarantee of doing so. This paper asks what can
be guaranteed once the optimization problem is solved globally:
\begin{quote}
\emph{Assuming that a lucky algorithm succeeds in finding a global
minimum, what are the fundamental statistical limits in recovering
the underlying ground truth?} 
\end{quote}
Global minimization of (\ref{eq:loss}) is equivalent to the rank-constrained
matrix LASSO 
\begin{equation}
X_{\lambda,r}\in\Argmin_{\begin{subarray}{c}
X\in\R^{d_{1}\times d_{2}}\\
\rank(X)\le r
\end{subarray}}\left\{ \tfrac{1}{2}\normtwo{\A(X)-b}^{2}+\lambda\normnuc X\right\} ,\label{eq:matrix-lasso}
\end{equation}
by way of the variational form of the nuclear norm, 
\begin{equation}
\normnuc X\coloneqq\sum_{i}\sigma_{i}(X)=\min_{LR^{T}=X}\frac{\normF L^{2}+\normF R^{2}}{2}.\label{eq:nuc-identity}
\end{equation}
The same estimator may therefore be read either as nonconvex optimization
over the factors $L,R$, or as nuclear-norm penalization subject to
a rank cap. 

Our main result provides a complete answer to the question above under
the canonical RIP measurement model.
\begin{defn}[Matrix restricted isometry property]
For $\delta\ge0$ and an integer $k\ge1$, we write $\A\in\RIPm(\delta,k)$
if 
\begin{equation}
(1-\delta)\normF E^{2}\le\normtwo{\A(E)}^{2}\le(1+\delta)\normF E^{2}\label{eq:rip}
\end{equation}
holds for every $E$ with $\rank(E)\le k$. 
\end{defn}
\begin{thm}[Rank-restricted matrix LASSO]
\label{thm:lasso} Fix integers $\rr\ge1$ and $k\ge2$. Suppose
that $\rank(X_{\star})\le\rr$ and that $\A\in\RIPm(\delta,k)$. Define
the threshold
\begin{equation}
\sharpdelta(t)\coloneqq\begin{cases}
\sqrt{\dfrac{t-1}{t}}, & t\ge\dfrac{4}{3},\\[2ex]
\dfrac{t}{4-t}, & 0<t<\dfrac{4}{3}.
\end{cases}\label{eq:threshold}
\end{equation}
\begin{itemize}
\item \emph{(Sufficiency)} If $\delta<\sharpdelta(k/\rr)$, then every global
minimizer $X_{\lambda,r}$ of (\ref{eq:matrix-lasso}) with $\lambda>0$
satisfies 
\begin{equation}
r\ge\rr,\;\;\lambda\gtrsim\normop{\A^{*}(\xi)}\implies\normF{X_{\lambda,r}-X_{\star}}\lesssim\sqrt{\rr}\,\lambda,\label{eq:err-bnd}
\end{equation}
where the hidden constants depend only on $\delta$ and on the ratio
$k/\rr$. 
\item \emph{(Necessity)} If $\delta\ge\sharpdelta(k/\rr)+1/\rr$, then there
exist $X_{\star}$ and $\A$ with $\rank(X_{\star})\le\rr$ and $\A\in\RIPm(\delta,k)$
such that, even in the noiseless case $\xi=0$, every global minimizer
$X_{\lambda,r}$ of (\ref{eq:matrix-lasso}) satisfies 
\begin{equation}
r\gtrsim\rr,\;\;\lambda>0\implies\normF{X_{\lambda,r}-X_{\star}}\ge\normF{X_{\star}},\label{eq:err-fail}
\end{equation}
where the hidden constant depends only on the ratio $k/\rr$. 
\end{itemize}
\end{thm}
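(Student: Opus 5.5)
For sufficiency I would start from optimality and write $H\coloneqq X_{\lambda,r}-X_{\star}$. Since $r\ge\rr$, $X_{\star}$ is feasible for (\ref{eq:matrix-lasso}), so global optimality gives the basic inequality
\[
\tfrac12\normtwo{\A(H)}^{2}\le\ip{\A^{*}(\xi)}{H}+\lambda\bigl(\normnuc{X_{\star}}-\normnuc{X_{\star}+H}\bigr).
\]
Let $T$ be the tangent space at $X_{\star}$. With $\lambda\ge c\normop{\A^{*}(\xi)}$ for some $c>1$, the usual decomposability argument gives the cone condition $\normnuc{P_{T^{\perp}}H}\le\frac{c+1}{c-1}\normnuc{P_{T}H}$, where $\rank(P_{T}H)\le2\rr$. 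I would not use $P_T H$ directly, though. The sharp constant should come from a finer split: take $H_{\head}$ to be the best rank-$\rr$ part of $H$ aligned with $X_{\star}$, and let $H_{\tail}$ be the rest. The rank restriction matters here: $\rank(H)\le r+\rr$. That bound does not help with large $r$, so the argument has to rest on the nuclear-norm cone condition instead.

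The core step is a restricted lower bound of the form $\normtwo{\A(H)}^{2}\ge\kappa\normF H^{2}-(\text{cone-controlled terms})$ for all $H$ in the cone, with $\kappa>0$ exactly when $\delta<\sharpdelta(k/\rr)$. For this I would use a Cai--Zhang style sparse (low-rank) representation of polytopes. Split the tail singular values into pieces whose singular values are bounded by an average, write the tail as a convex combination of matrices of rank at most $k-\rr$ (or a suitable $\lceil t\rr\rceil$-type block), and apply RIP to $H_{\head}+\mu U_{i}$ for a tuned $\mu$. This uses the parallelogram/polarization identity from the $\sqrt{(t-1)/t}$ proof of Cai--Zhang. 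Optimizing $\mu$ gives $\sqrt{(t-1)/t}$ in the regime $t\ge4/3$. When $t<4/3$ the block size is too small for that trick, and I would instead use the shifting-inequality/extreme-point bound, which yields $t/(4-t)$. The two regimes match at $t=4/3$, where both equal $1/2$. Once $\kappa>0$, combine this bound with the basic inequality, $\ip{\A^{*}\xi}{H}\le\lambda c^{-1}\normnuc H$, and $\normnuc{P_{T}H}\le\sqrt{2\rr}\normF H$ to get $\normF H\lesssim\sqrt{\rr}\lambda$. The constants depend only on $\delta$ and $t$, because $r$ never enters. I expect the main obstacle to be the $t<4/3$ regime. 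There the LASSO cone has constant $\frac{c+1}{c-1}$ rather than $1$, so I would have to check that the threshold survives as $c\to\infty$ with $\kappa$ degrading continuously. That is exactly why the statement only claims "$\lambda\gtrsim$", with unspecified constants.

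For necessity I would adapt the standard counterexamples (Cai--Zhang, Wang--Li) to a diagonal construction, reducing to vectors by working with diagonal matrices. The idea is to build $\A$ whose null space contains a direction $H=X_{\star}-Z$, where $Z$ has smaller nuclear norm than $X_{\star}$ and rank about $t\rr$-related, with $\A$ an isometry on its orthogonal complement scaled so that the RIP constant is exactly $\sharpdelta$. The $1/\rr$ slack absorbs integer rounding. Then $\A(Z)=\A(X_{\star})=b$, and $\normnuc Z<\normnuc{X_{\star}}$. Every $X$ of rank at most $r$ has some objective value; one shows that the minimizer over the relevant diagonal subspace is a scaled version of $Z$-type points, orthogonal to $X_{\star}$, for all $\lambda>0$. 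Such a point gives $\normF{X-X_{\star}}\ge\normF{X_{\star}}$. The requirement $r\gtrsim\rr$ is needed so that $Z$ is feasible. The key thing to verify is the RIP constant of the constructed operator, which amounts to an eigenvalue computation on rank-$k$ diagonal patterns.
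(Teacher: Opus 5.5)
\textbf{Sufficiency: where the argument breaks.} Your instinct about the calibration is right. Taking $\lambda/\lambda_{0}$ large, with $\lambda_{0}=\normop{\A^{*}(\xi)}$, drives the cone constant $\frac{\lambda+\lambda_{0}}{\lambda-\lambda_{0}}$ toward $1$, and this is exactly the paper's mechanism. The basic inequality gives $\cT\le\frac{\lambda+\lambda_{0}}{\lambda-\lambda_{0}}\cH$; this is $\cT\le\cH+\cD$ combined with the defect bound. Closure then only needs $\rho_{\delta,t}\frac{\lambda+\lambda_{0}}{\lambda-\lambda_{0}}<1$, which (\ref{eq:matrix-calibration-main}) secures. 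So the ``continuity in $c$'' you flag is automatic, provided you have a head--tail estimate $\cH\le\rho_{\delta,t}\cT+\tau_{\delta,t}\normtwo{A\sigma}$ valid for \emph{every} vector, with $\rho_{\delta,t}<1$ iff $\delta<\sharpdelta(t)$ (Proposition~\ref{prop:HTR}).

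The genuine gap is that your plan does not produce this estimate for $t<4/3$. Applying RIP to head plus tail atoms is the Cai--Zhang mechanism, and the paper uses it for $t\ge4/3$. Below $4/3$ it gives only $\sqrt{(t-1)/t}$, which is strictly below $t/(4-t)$ on $(1,4/3)$ and meaningless for $t\le1$. When $k<\rr$ the head has rank $\rr>k$, so RIP cannot be applied to $H_{\head}+\mu U_{i}$ at all. Passing to order-$\rr$ RIP via Lemma~\ref{lem:ric-comparison} multiplies $\delta$ by $2/t-1$, which kills sharpness. No shifting-inequality argument is known to reach $t/(4-t)$; Cai and Zhang left that branch as a conjecture.

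The missing idea is Zhang--Li's randomization of the \emph{head}. Sample uniform $\alpha$- and $\beta$-subsets of the head with $\alpha+\beta=k$, and pair them with $\beta$- and $\alpha$-sparse tail atoms through $\Psi_{p,q}$. Then eliminate $\normtwo{A\sigma_{\head}}^{2}$, which RIP cannot control when $\rr>k$, against $\cY$ (two disjoint head samples, for $t<1$) or $\cZ$ (head plus a $(k-\rr)$-sparse atom, for $1\le t<4/3$), using $\vartheta<0$. This yields (\ref{eq:balanced-low}). Order conversion is then applied only to the cross term $\ip{A\sigma_{\head}}{A\sigma}$, where it inflates $\tau_{\delta,t}$ but not the gain $\rho_{\delta,t}$.

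\textbf{Two further repairs.} First, your head/tail split. Decomposability gives a cone only for $P_{T}H$, whose rank can be $2\rr$. A head ``aligned with $X_{\star}$'' (the compression of $H$ to the row and column spaces of $X_{\star}$) fails outright. For $X_{\star}=e_{1}e_{1}^{T}$ and $H=\epsilon e_{1}e_{2}^{T}$ that compression is zero, yet $\normnuc{X_{\star}+H}-\normnuc{X_{\star}}=\sqrt{1+\epsilon^{2}}-1<\epsilon=\normnuc{H}$. So the perturbation bound needed to turn optimality into a cone condition for that split is false.

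The head must be the top-$\rr$ singular part of $H$ itself. The needed inequality $\normnuc{X_{\star}+H}-\normnuc{X_{\star}}\ge\normone{\sigma_{\tail}}-\normone{\sigma_{\head}}$ then comes from Mirsky's inequality $\normnuc{X_{\star}+H}\ge\sum_{j}|\sigma_{j}(X_{\star})-\sigma_{j}(H)|$ (Lemma~\ref{lem:matrix-perturbation}). Since head and tail share the singular vectors of $H$, the whole argument reduces to the vector $\sigma(H)$ with the induced matrix $z\mapsto\A(U\diag(z)V^{T})$, which inherits RIP.

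Second, necessity. Your construction (null direction $X_{\star}-Z$, identity minus a rank-one projector, solutions along multiples of $Z\perp X_{\star}$) is the paper's. However, minimizing over ``the relevant diagonal subspace'' does not control every global minimizer of (\ref{eq:matrix-lasso}). You need a KKT certificate for the convex LASSO, which covers all $r\ge\rank(Z)$ at once. You also need uniqueness, which the paper gets from $\normnuc{X_{\lambda}+uG}>\normnuc{X_{\lambda}}$ for $u\ne0$ (Lemma~\ref{lem:convex-opt}). Finally, for $t>4/3$ you need $\rank(Z)=\lfloor\cC(t)\rr\rfloor$ with $\cC(t)=\sharpdelta(t)/(1-\sharpdelta(t))>1$; this is where the condition $r\gtrsim\rr$ enters.
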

\begin{rem}[Optimality]
Calibrated at the canonical dual-noise scale $\lambda\asymp\normop{\A^{*}(\xi)}$,
the error bound (\ref{eq:err-bnd}) is minimax optimal up to absolute
constants under Gaussian noise~\cite{CandesPlan2011}. 
\end{rem}
\begin{rem}[Sharpness]
Fix $t>0$ and $\delta>\sharpdelta(t)$. Since $1/\rr\to0$, the
necessity statement applies once $\rr$ is large enough, and it contradicts
the dimension-free bound (\ref{eq:err-bnd}). The condition $\delta<\sharpdelta(t)$
is therefore sharp, in the sense that it cannot be relaxed uniformly
in $\rr$ while retaining (\ref{eq:err-bnd}). The restriction $k\ge2$
is also essential, as shown by the counterexample of Cai and Zhang~\cite[Rem~3.1 and Rem~3.3]{CaiZhang2013}.
\end{rem}
\begin{rem}[Convex relaxation]
Raising $r\ge d\coloneqq\min\{d_{1},d_{2}\}$ so that the rank cap
becomes inactive recovers the ordinary convex matrix LASSO. Because
the constants in (\ref{eq:err-bnd}) do not depend on $r$, the classical
convex estimator inherits the same sharp threshold and the same error
bound. 
\end{rem}
The threshold $\delta<\sharpdelta(t)$ was already known to be sharp
for \emph{constrained} nuclear-norm recovery~\cite{CaiZhang2014,ZhangLi2018}.
For a \emph{penalized} estimator such as the LASSO, however, every
prior result attaining the optimal error bound (\ref{eq:err-bnd})
has done so under a substantially more conservative RIP threshold.
Wang, Zhang, and Wang~\cite{WangZhangWang2021} established the sharp
threshold $\delta<\sharpdelta(t)$ on the high-order branch $t\ge4/3$,
but, as we explain in \secref{wzw}, only at the cost of a suboptimal
error bound. 

To the best of our knowledge, \thmref{lasso} is the first result
to establish the sharp threshold $\delta<\sharpdelta(t)$ together
with the optimal error bound (\ref{eq:err-bnd}). It shows that the
apparent gap between constrained and penalized estimators is an artifact
of proof technique rather than a fundamental limitation of the estimator
itself.

\subsection{Global minima versus the low-rank landscape}

\label{sec:nonconvex}Most of the existing literature on the factored
objective (\ref{eq:loss}) aims to guarantee recovery at every second-order
critical point, that is, at every $(L,R)$ with 
\begin{equation}
\nabla f_{\lambda}(L,R)=0,\qquad\nabla^{2}f_{\lambda}(L,R)\succeq0.\label{eq:socp_def}
\end{equation}
A classical approach, developed by Journée, Bach, Absil, and Sepulchre~\cite{journee2010low},
uses low-rank factorization as a means of reaching global optimality
in the underlying convex problem efficiently. Having computed an SOCP,
one augments the iterate with an all-zero column, $L_{+}=[L,\,0]$
and $R_{+}=[R,\,0]$, and tests whether the lifted point $(L_{+},R_{+})$
still satisfies (\ref{eq:socp_def}). If it does, then $X_{\lambda}=LR^{T}$
is optimal for the convex relaxation and therefore inherits the classical
LASSO guarantees.

The advantage of this incremental lifting scheme is that it provides
a certificate once global optimality has been reached. Its drawback
is that, in principle, the search rank may keep increasing until $r=d$,
which erases much of the computational benefit of factorization. Bhojanapalli,
Boumal, Jain, and Netrapalli~\cite{bhojanapalli2018smoothed} extended
an earlier work of Boumal, Voroninski, and Bandeira~\cite{boumal2016non}
to show that objectives of this type generically have no spurious
local minima once $r>\sqrt{2m}$. Since $m\asymp d$ in the typical
regime, this can still push the number of optimization variables to
$\Theta(d^{3/2})$. 

In practice the landscape often becomes benign at much smaller search
ranks $r=O(1)$. A more refined theory helps explain why incremental
lifting can terminate much sooner. Let $r_{\lambda}$ denote the smallest
rank among the minimizers of the convex matrix LASSO. If $r\ge r_{\lambda}$
and 
\begin{equation}
\delta<\frac{1}{1+\sqrt{r_{\lambda}/r}},\qquad k\ge r+r_{\lambda},\label{eq:global-landscape}
\end{equation}
then every SOCP is \emph{convex-global}, meaning that $LR^{T}$ is
a global minimizer of the convex matrix LASSO~\cite{Zhang2025MP,McRaeZhang2026}.
This guarantee becomes harder to maintain as the search rank grows,
because the required RIP order $k$ grows with $r$. For a fixed sensing
map, (\ref{eq:global-landscape}) may eventually fail as the rank
is lifted, at which point the theorem no longer controls the whole
landscape. Incremental lifting may nevertheless still reach a convex-global
solution; whenever it does, the lifted SOCP test certifies global
optimality directly and \thmref{lasso} applies.

More recent work guarantees recovery directly at every SOCP, without
requiring convex globality. If $r\ge\rr$ and 
\begin{equation}
\delta<\frac{1}{1+\sqrt{\rr/r}},\qquad k\ge r+\rr,\label{eq:direct-landscape}
\end{equation}
then every SOCP yields the optimal rate $O(\sqrt{\rr}\,\lambda)$,
either under the calibration $\lambda\ge\normop{\A^{*}(\xi)}$ or
when the search rank satisfies $r\asymp\rr$~\cite{Zhang2025SIOPT,McRaeZhang2026}.
Here too the required RIP order $k$ grows with the search rank $r$.
Without it, spurious local minima can exist and trap a local method.

Both landscape guarantees require RIP at an order $k$ growing with
the search rank $r$, and both dependencies are sharp. For Gaussian
sensing this costs $m\gtrsim\delta^{-2}r(d_{1}+d_{2})$ measurements,
so overparameterizing the factorization degrades the statistical requirement.
\thmref{lasso} requires RIP only at an order set by $\rr$, and its
error bound $O(\sqrt{\rr}\,\lambda)$ holds at every global minimizer
for every search rank $r\ge\rr$. The price is paid in the hypothesis.
The theorem guarantees recovery at a global minimizer, without telling
us how to find it.

\subsection{Extension to sparse-vector LASSO}

\label{sec:vector}Low-rank matrix recovery has a well-studied predecessor
in sparse vector recovery, and the connection between them is more
than an analogy. The rank of a matrix is the sparsity of its vector
of singular values, and our proof exploits this directly. We first
establish sharp recovery for sparse vectors, and then transport the
resulting geometry to the singular values of the matrix error. The
vector result is therefore both a tool for the matrix argument and
a theorem of independent interest.

In the sparse setting we observe
\begin{equation}
b=Ax_{\star}+\xi\in\R^{m},\qquad A\in\R^{m\times d},\label{eq:vector-model}
\end{equation}
where $x_{\star}$ has at most $\ss$ nonzero entries. Fixing a search
sparsity $s\ge\ss$, the counterpart of (\ref{eq:matrix-lasso}) is
\begin{equation}
x_{\lambda,s}\in\Argmin_{\normzero x\le s}\left\{ \tfrac{1}{2}\normtwo{Ax-b}^{2}+\lambda\normone x\right\} .\label{eq:vector-lasso}
\end{equation}
The two estimators are computationally analogous. Fixing an active
set in (\ref{eq:vector-lasso}) leaves a small convex LASSO, just
as fixing the search rank in (\ref{eq:matrix-lasso}) leaves a low-dimensional
factored problem. Incremental active-set methods enlarge the support
until the sparsity cap is reached or the full LASSO KKT conditions
are certified, exactly as rank-incremental methods lift $r$ until
the rank cap becomes inactive.

Guarantees for (\ref{eq:vector-lasso}) are stated in terms of the
vector analog of the restricted isometry property, which we recall
next. 
\begin{defn}[Vector restricted isometry property]
For $\delta\ge0$ and an integer $k\ge1$, we write $A\in\RIPv(\delta,k)$
if 
\begin{equation}
(1-\delta)\normtwo e^{2}\le\normtwo{Ae}^{2}\le(1+\delta)\normtwo e^{2}\label{eq:vec}
\end{equation}
holds for every $e$ with $\normzero e\le k$. 
\end{defn}
The resulting theorem carries the same threshold $\sharpdelta$ as
\thmref{lasso}, with the sparsity $\ss$ in place of the rank $\rr$.
\begin{thm}[Sparsity-restricted vector LASSO]
\label{thm:lasso-vec} Fix integers $\ss\ge1$ and $k\ge2$. Suppose
that $\normzero{x_{\star}}\le\ss$ and that $A\in\RIPv(\delta,k)$,
and let $\sharpdelta$ be as in (\ref{eq:threshold}). 
\begin{itemize}
\item \emph{(Sufficiency)} If $\delta<\sharpdelta(k/\ss)$, then every global
minimizer $x_{\lambda,s}$ of (\ref{eq:vector-lasso}) with $\lambda>0$
satisfies 
\begin{equation}
s\ge\ss,\;\;\lambda\gtrsim\norminf{A^{T}\xi}\implies\normtwo{x_{\lambda,s}-x_{\star}}\lesssim\sqrt{\ss}\,\lambda,\label{eq:err-bnd-vec}
\end{equation}
where the hidden constants depend only on $\delta$ and on the ratio
$k/\ss$. 
\item \emph{(Necessity)} If $\delta\ge\sharpdelta(k/\ss)+1/\ss$, then there
exist $x_{\star}$ and $A$ with $\normzero{x_{\star}}\le\ss$ and
$A\in\RIPv(\delta,k)$ such that, even in the noiseless case $\xi=0$,
every global minimizer $x_{\lambda,s}$ of (\ref{eq:vector-lasso})
satisfies 
\begin{equation}
s\gtrsim\ss,\;\;\lambda>0\implies\normtwo{x_{\lambda,s}-x_{\star}}\ge\normtwo{x_{\star}},\label{eq:err-fail-vec}
\end{equation}
where the hidden constant depends only on the ratio $k/\ss$. 
\end{itemize}
\end{thm}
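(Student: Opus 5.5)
Write $h\coloneqq x_{\lambda,s}-x_{\star}$ and let $T$ be the support of $x_{\star}$ ($|T|\le\ss$). For sufficiency I would use the standard two-part scheme: first derive a cone-type condition on $h$ from global optimality, and then apply a sharp RIP-based null-space argument, namely the Cai--Zhang sparse-representation-of-a-polytope technique, to that cone.

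\textbf{Optimality inequality.} Since $\normzero{x_{\star}}\le\ss\le s$, the point $x_{\star}$ is feasible for (\ref{eq:vector-lasso}), so global optimality gives
\[
\tfrac12\normtwo{Ah}^{2}-\ip{A^{T}\xi}{h}+\lambda\normone{x_{\star}+h}\le\lambda\normone{x_{\star}}.
\]
With $\lambda\ge c\norminf{A^{T}\xi}$ and $\normone{x_{\star}+h}\ge\normone{x_{\star}}+\normone{h_{T^{c}}}-\normone{h_{T}}$, this yields
\[
\tfrac12\normtwo{Ah}^{2}+\lambda(1-\tfrac1c)\normone{h_{T^{c}}}\le\lambda(1+\tfrac1c)\normone{h_{T}}.
\]
This is weaker than the exact cone $\normone{h_{T^c}}\le\normone{h_T}$ used for constrained recovery, and the gap is precisely what previous penalized analyses lost. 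The key point is that the inequality retains the quadratic term $\normtwo{Ah}^{2}$ on the left. The search sparsity $s$ never enters, since only feasibility of $x_{\star}$ is used. This is why the constants are independent of $s$, and why the argument transfers to matrices via the same feasibility of $X_{\star}$.

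\textbf{Sharp RIP step.} Following Cai--Zhang, write $h_{T^c}$ (with the large entries split off appropriately) as a convex combination of $(k-\ss)$-sparse vectors $u_i$ whose $\ell_\infty$ and $\ell_1$ norms are controlled by $\normone{h_T}$, up to the inflation factor $(1+1/c)/(1-1/c)$. One then expands $\sum_i\mu_i\normtwo{A(\beta_1 h_T+\beta_2 u_i)}^2$ with suitably chosen weights. Combining the RIP bounds on each $k$-sparse combination gives
\[
\ip{Ah}{Ah_T+\text{stuff}}\ \ge\ \kappa(\delta,t)\normtwo{h_T}^2-\text{(slack)},
\]
with $\kappa>0$ exactly when $\delta<\sharpdelta(t)$. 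The branch $t\ge4/3$ uses the $\sqrt{(t-1)/t}$ polytope argument; the branch $t<4/3$ uses the $t/(4-t)$ variant, where $u_i$ has sparsity below $\ss$ and the argument is applied to subsets of $T$.

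\textbf{Combining the two steps.} The term $\normtwo{Ah}^2$ from the optimality inequality absorbs the cross term $\ip{Ah}{A h_T}$. Using $\normone{h_T}\le\sqrt{\ss}\normtwo{h_T}$ we get
\[
\kappa\normtwo{h_T}^2\lesssim\lambda\sqrt{\ss}\normtwo{h_T},
\]
and hence $\normtwo{h}\lesssim\sqrt{\ss}\lambda$, using the cone inequality to control $h_{T^c}$.

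\textbf{Main obstacle.} The hardest step is keeping sharpness despite the inflated cone. Since $\lambda\gtrsim$ only allows a large but finite $c$, I would let the required margin $\delta<\sharpdelta$ absorb a factor $(1+1/c)/(1-1/c)$ close to $1$ by choosing $c$ depending on $\delta$. This explains why the hidden constants depend on $\delta$.

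\textbf{Necessity.} I would adapt the Cai--Zhang counterexamples. Take a vector $h$ in the null space of an $A$ that has $\normone{h_T}\ge\normone{h_{T^c}}$ and is built so that $A\in\RIPv(\delta,k)$ at $\delta=\sharpdelta(t)+1/\ss$; the $1/\ss$ absorbs integer rounding. Set $x_{\star}=h_T$ and $x'=-h_{T^c}$, so that $Ax'=Ax_{\star}$ and $\normone{x'}<\normone{x_{\star}}$ strictly. Choose $|\supp h_{T^c}|\lesssim\ss$, so that $x'$ is feasible once $s\gtrsim\ss$. For every $\lambda>0$, any global minimizer must then differ from $x_{\star}$. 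To get the error $\ge\normtwo{x_{\star}}$, I would arrange that minimizers have support disjoint from $T$, which needs a symmetric construction in which $A$ restricted to $T$ is orthogonal to the rest.
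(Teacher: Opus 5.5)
Your sufficiency argument is in substance the paper's. The defect bound (\ref{eq:D-control}) combined with $\cT\le\cH+\cD$ is exactly your inflated cone, $\cT\le\frac{c+1}{c-1}\cH$ with $c=\lambda/\lambda_0$. The calibration (\ref{eq:abstract-calibration}) is a $\delta$-dependent choice of $c$ that keeps the loop gain $\rho_{\delta,t}\frac{c+1}{c-1}$ below one, so letting the margin absorb the inflation is the right mechanism. Two details need care.

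First, run the RIP step and the final $\ell_2$ bound with the best-$\ss$ head rather than $T=\supp(x_\star)$; the cone transfers by the ordering in Lemma~\ref{lem:vector-perturbation}. With $T$, the cone only gives $\normtwo{h_{T^c}}\le\normone{h_{T^c}}\lesssim\sqrt{\ss}\,\normtwo{h_T}$, and the low-order atomization needs $\norminf{h_{\tail}}\le\normone{h_{\head}}/\ss$.

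Second, for $t<1$ the cross term needs $\normtwo{Ah_{\head}}\lesssim\normtwo{h_{\head}}$ at order $\ss>k$, which requires Lemma~\ref{lem:ric-comparison}.

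The genuine gap is in necessity. Exhibiting $x'=-h_{T^c}$ with $Ax'=Ax_\star$ and $\normone{x'}<\normone{x_\star}$ shows only that $x_\star$ is not a global minimizer. The theorem asks that \emph{every} global minimizer, for \emph{every} $\lambda>0$ and every $s\gtrsim\ss$, lie at distance at least $\normtwo{x_\star}$. The minimizer moves with $\lambda$ and is in general not $x'$, so you must actually locate it.

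Your proposed fix defeats itself. Suppose the columns of $A$ on $T$ are orthogonal to those off $T$. Then $Ah=0$ gives $0=\ip{Ah_T}{Ah}=\normtwo{Ah_T}^2$, i.e.\ $b=Ax_\star=0$.
\begin{itemize}
\item For $t\ge1$ and $\delta<1$, this contradicts RIP on the nonzero $\ss$-sparse $x_\star$.
\item For $t<1$, averaging RIP over random $k$-subsets of $T$ shows that an $\ss$-sparse null vector forces $\delta\ge\frac{k-1}{2\ss-k-1}$. For fixed $t$ this tends to $\frac{t}{2-t}$, which exceeds $\frac{t}{4-t}+\frac{1}{\ss}$ once $\ss$ is large.
\end{itemize}
A sharp counterexample instead needs the columns on $T$ and on $Q=\supp(h_{T^c})$ to be correlated.

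What fills the gap in the paper is an explicit, uniquely certified LASSO path. The construction is Lemma~\ref{lem:lasso-path}, restricted to diagonals as in Lemma~\ref{lem:diagonal-reduction}:
\begin{itemize}
\item Take $A^{\top}A=(1+\delta)(I-\hat g\hat g^{\top})$ with $g=\mathbf{1}_T-\alpha\mathbf{1}_Q$, $|Q|=q=\lfloor\cC(t)\ss\rfloor$, and $q\alpha<\ss$. The last condition is your $\normone{h_{T^c}}<\normone{h_T}$.
\item The candidate is $x_\lambda=\alpha(1-\lambda/\lambda_1)_+\mathbf{1}_Q$ with $\lambda_1=(1+\delta)\alpha\ss/\normtwo{g}^2$. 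Its KKT subgradient equals $1$ on $Q$ and $q\alpha/\ss<1$ on $T$, scaled by $\lambda_1/\lambda$ once $\lambda\ge\lambda_1$.
\item Uniqueness follows from Lemma~\ref{lem:convex-opt}. Strict convexity of the fit forces any other convex minimizer $y$ to satisfy $Ay=Ax_\lambda$. Since $\ker A$ is spanned by $g$, this gives $y=x_\lambda+ug$. Then $\normone{x_\lambda+ug}=\ss|u|+q\alpha|(1-\lambda/\lambda_1)_+-u|$ is uniquely minimized at $u=0$ because $\ss>q\alpha$.
\item Because $\normzero{x_\lambda}\le q\le s$, this is also the unique sparsity-restricted minimizer. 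Since $\supp(x_\lambda)\cap T=\varnothing$, the error is at least $\normtwo{x_\star}$.
\item Finally, the RIP constant of this design must be computed exactly (Lemma~\ref{lem:rip}). The amplitude $\alpha$ is tuned so that the $1/\ss$ headroom absorbs the rounding of $q$ (Lemma~\ref{lem:param-choice}).
\end{itemize}
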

Taking $s=d$ removes the sparsity cap and recovers the ordinary convex
vector LASSO; \thmref{lasso-vec} is, to the best of our knowledge,
the first complete sharp-RIP guarantee for that standard estimator.

\subsection{Organization}

Section~II reviews the prior sharp theory for constrained estimators,
the obstruction created by penalization, and the limitations of the
prior result of Wang, Zhang, and Wang~\cite{WangZhangWang2021}.
Section~III proves explicit recovery bounds underlying \thmref{lasso}
and \thmref{lasso-vec}, and Section~IV constructs the matching counterexamples.
Section~V proves the critical \propref{HTR} used in the proof of
the recovery bounds. Section~VI concludes.

\subsection*{Notation}

We write $d\coloneqq\min\{d_{1},d_{2}\}$, and denote the singular
values of $X\in\R^{d_{1}\times d_{2}}$ by $\sigma_{1}(X)\ge\cdots\ge\sigma_{d}(X)$.
The Frobenius, nuclear, and spectral norms are written $\normF{\cdot}$,
$\normnuc{\cdot}$, and $\normop{\cdot}$; for vectors, $\normzero{\cdot}$
counts nonzero entries and $\normone{\cdot}$, $\normtwo{\cdot}$,
$\norminf{\cdot}$ are the usual $\ell_{p}$ norms. The adjoint of
$\A$ is $\A^{*}$, and $\succeq0$ denotes positive semidefiniteness.
We write $u\lesssim v$ if $u\le Cv$ for a constant $C$ depending
only on $\delta$ and on the ratio $k/\rr$ (respectively $k/\ss$),
and $u\asymp v$ if $u\lesssim v\lesssim u$.

\section{Prior work \& Obstructions}

The basic insight behind low-rank recovery is that, under RIP, the
generally intractable search for a minimum-rank solution can be replaced
by tractable nuclear-norm minimization~\cite{RechtFazelParrilo2010}.
In the presence of noise, one enlarges the feasible set to include
matrices that fit the measurements to the prescribed noise level.

Constrained estimators implement this principle directly, by minimizing
the nuclear norm over a noise-calibrated feasible set. The vector
Dantzig selector was introduced by Candès and Tao~\cite{CandesTao2007}.
Candès and Plan~\cite{CandesPlan2011} introduced its matrix counterpart
\begin{equation}
X_{\lambda}^{\Dantzig}\in\argmin_{X\in\R^{d_{1}\times d_{2}}}\left\{ \normnuc X:\normop{\A^{*}(\A(X)-b)}\le\lambda\right\} \label{eq:matrix-dantzig}
\end{equation}
alongside the matrix LASSO, which is (\ref{eq:matrix-lasso}) with
$r=d$. Assuming $(\delta,4\rr)$-RIP and Gaussian noise, they proved
that both attain the minimax Frobenius rate of \thmref{lasso},
together with sharper oracle inequalities, from essentially the minimum
number of measurements the problem allows. 

The Candès--Plan thresholds were $\delta<\sqrt{2}-1\approx0.414$
for the constrained estimator and the more conservative $\delta<(3\sqrt{2}-1)/17\approx0.191$
for the penalized one. Improving these is subtler than it appears,
because the sensing operator $\A$ can satisfy $(\delta_{k},k)$-RIP
at every order $k$, with $\delta_{k}$ nondecreasing in $k$. A guarantee
of the form $\delta_{k}<c_{k}$ is really one member of a family indexed
by $k$. Recovery is certified as soon as any single member of that
family holds. A smaller constant at a smaller order may therefore
be an improvement, so the object of interest is the whole profile
$k\mapsto c_{k}$. For instance, Mohan and Fazel~\cite{MohanFazel2010}
give $\delta_{5\rr}<0.607$, $\delta_{4\rr}<0.558$, $\delta_{3\rr}<0.4721$,
and $\delta_{2\rr}<0.307$, no one of which implies another. See also~\cite{Candes2008RIP,FoucartLai2009,Foucart2010,MoLi2011,WangLi2013,CaiZhang2013}
for the progression of sufficient RIP conditions.

Cai and Zhang~\cite{CaiZhang2014} determined the complete order-dependent
recovery threshold $\delta_{k}<\sharpdelta(k/\rr)$ for the constrained
estimator (\ref{eq:matrix-dantzig}). They proved sufficiency on the
high-order branch $k\ge\frac{4}{3}\rr$, and necessity at every order,
by constructing instances for which recovery fails with $\delta_{k}>\sharpdelta(k/\rr)+1/\rr$.
They conjectured that the same curve governs the remaining branch
$2\le k<\frac{4}{3}\rr$, and Zhang and Li~\cite{ZhangLi2018} proved
the conjecture, closing the question for constrained recovery.

Nothing comparable is known for the penalized estimator, which is
the more convenient of the two to compute because it imposes no hard
constraint. The obstruction is that a penalized estimator trades fit
against nuclear norm, and can return an estimate whose nuclear norm
exceeds that of the ground truth. The following example shows this
happening.

\begin{example}[Penalization can increase the nuclear norm]
\label{exa:subopt} Take $d_{1}=d_{2}=3$, identify $\R^{3\times3}$
with $\R^{9}$, and let 
\begin{gather*}
\A(X)=X+X_{11}\diag\left(-\tfrac{2}{3},\tfrac{2}{3},\tfrac{2}{3}\right)\quad\text{for all }X\in\R^{3\times3},\\
X_{\star}=\diag(21,0,0),\qquad\xi=\diag(-7,1,1).
\end{gather*}
Then $\A\in\RIPm(\delta,k)$ with $\delta=\frac{2}{3}$ and $k=2$,
because every matrix $X$ of rank at most $2$ satisfies 
\begin{align*}
\left|\normtwo{\A(X)}^{2}-\normF X^{2}\right| & =\tfrac{4}{3}\left|X_{11}(X_{22}+X_{33})\right|\\
 & \le\tfrac{1}{3}\left(\left|X_{11}\right|+\left|X_{22}+X_{33}\right|\right)^{2}\\
 & \le\tfrac{1}{3}\normnuc X^{2}\le\tfrac{2}{3}\normF X^{2}.
\end{align*}
Since $\rr=1$, we have $\delta<\sharpdelta(k/\rr)=\frac{1}{\sqrt{2}}$,
so \thmref{lasso} applies. With the same $\lambda>0$, the constrained
and penalized estimators are 
\begin{align*}
X_{\lambda}^{\Dantzig} & =\diag\begin{bmatrix}21\lambda-\frac{45}{2}(\lambda-1)_{+}+\frac{1}{2}(\lambda-5)_{+}+(\lambda-20)_{+}\\
15(1-\lambda)_{+}\\
15(1-\lambda)_{+}
\end{bmatrix},\\
X_{\lambda}^{\Lasso} & =\diag\begin{bmatrix}3\lambda-4(\lambda-5)_{+}+(\lambda-20)_{+}\\
3(5-\lambda)_{+}\\
3(5-\lambda)_{+}
\end{bmatrix}.
\end{align*}
Here $\normop{\A^{*}(\xi)}=1$. As shown in \figref{l1-example},
$\normnuc{X_{\lambda}^{\Lasso}}>\normnuc{X_{\star}}$ on the whole
range $\normop{\A^{*}(\xi)}<\lambda<3\normop{\A^{*}(\xi)}$.\hfill{}$\square$ 
\end{example}
\begin{figure}[t]
\centering

\noindent\begin{minipage}[t]{1\linewidth}%
\centering \begin{tikzpicture}
\begin{axis}[
	set layers=standard,
    width=\linewidth,
    height=0.4\linewidth,
    xmin=0, xmax=7,
    ymin=0, ymax=32/21,
    xlabel={$\lambda / \|\A^*(\xi)\|_{\op}$},
	xlabel style={
	    at={(axis description cs:0.93,-0.04)},
	    anchor=north
	},
    title={$\displaystyle \frac{\| X_\lambda - X_\star\|_{\F}}{\|X_\star\|_{\F}}$},
	title style={
	    xshift=3ex,
	    yshift=-6ex
	},
    xtick={0,1,5,20},
    ytick={0,2/7,1,10/7},
	yticklabels={$0$,$2/7$,$1$,$10/7$},
    axis lines=left,
    legend style={
        draw=none,
        fill=none,
        font=\small,
        at={(0.98,0.98)},
        anchor=north east
    },
    samples=100,
]

\begin{pgfonlayer}{axis background}
% Shade the regime 0 <= lambda <= 1
\fill[gray!20]
    (axis cs:0,0) rectangle (axis cs:1,32/21);

% Shade the regime 1 <= lambda <= 3
\fill[red!5]
    (axis cs:1,0) rectangle (axis cs:3,32/21);
\end{pgfonlayer}

% Dantzig selector
\addplot[
    very thick,
    dantzigcolor,
    domain=0:1
]
{3*sqrt(99)*(1-x)/21};
\addplot[
    very thick,
    dantzigcolor,
    domain=1:5,
    forget plot
]
{3*(x-1)/(2*21)};
\addplot[
    very thick,
    dantzigcolor,
    domain=5:20,
    forget plot
]
{(x+1)/21};
\addplot[
    very thick,
    dantzigcolor,
    domain=20:21,
    forget plot
]
{1};
\addlegendentry{$X^\Dantzig_\lambda$}

% LASSO
\addplot[
    very thick,
    lassocolor,
    domain=0:5
]
{3*sqrt(3*x^2-34*x+99)/21};
\addplot[
    very thick,
    lassocolor,
    domain=5:20,
    forget plot
]
{(x+1)/21};
\addplot[
    very thick,
    lassocolor,
    domain=20:21,
    forget plot
]
{1};
\addlegendentry{$X^\Lasso_\lambda$}

\addplot[
    thin,
	gray!50,
    solid,
    domain=0:21
]
{2/7};

\end{axis}
\end{tikzpicture}%
\end{minipage}

\noindent\begin{minipage}[t]{1\linewidth}%
\centering \begin{tikzpicture}
\begin{axis}[
	set layers=standard,
    width=\linewidth,
    height=0.4\linewidth,
    xmin=0, xmax=7,
    ymin=10/21, ymax=32/21,
    xlabel={$\lambda / \|\A^*(\xi)\|_{\op}$},
	xlabel style={
	    at={(axis description cs:0.93,-0.04)},
	    anchor=north
	},
    title={$\displaystyle \frac{\|X_\lambda\|_{\nuc}}{\|X_\star\|_{\nuc}}$},
	title style={
		xshift=+3ex,
	    yshift=-6ex
	},
    xtick={0,1,3,5,20},
    ytick={5/7,1,10/7},
	yticklabels={$5/7$,$1$,$10/7$},
    axis lines=left,
    legend style={
        draw=none,
        fill=none,
        font=\small,
        at={(0.98,0.98)},
        anchor=north east
    },
]

\begin{pgfonlayer}{axis background}
\fill[gray!20]
% Shade the regime 0 <= lambda <= 1
    (axis cs:0.02,10.3/21) rectangle (axis cs:1,32/21);
% Shade the regime 1 <= lambda <= 3
\fill[red!5]
    (axis cs:1,10.3/21) rectangle (axis cs:3,32/21);
\end{pgfonlayer}

% Important grid lines
\addplot[
    thin,
    solid,
    domain=0:21
]
{1};
\addplot[
    thin,
	gray!50,
    solid,
    domain=0:21
]
{5/7};

% Dantzig selector
\addplot[
    very thick,
    dantzigcolor,
    domain=0:1
]
{(30-9*x)/21};
\addplot[
    very thick,
    dantzigcolor,
    domain=1:5,
    forget plot
]
{(45-3*x)/(2*21)};
\addplot[
    very thick,
    dantzigcolor,
    domain=5:20,
    forget plot
]
{(20-x)/21};
\addplot[
    very thick,
    dantzigcolor,
    domain=20:21,
    forget plot
]
{0};
%\addlegendentry{$\widehat x_S$}

% LASSO
\addplot[
    very thick,
    lassocolor,
    domain=0:5
]
{(30-3*x)/21};
\addplot[
    very thick,
    lassocolor,
    domain=5:20,
    forget plot
]
{(20-x)/21};
\addplot[
    very thick,
    lassocolor,
    domain=20:21,
    forget plot
]
{0};
%\addlegendentry{$\widehat x_L$}

\end{axis}
\end{tikzpicture}%
\end{minipage}

\caption{\protect\label{fig:l1-example}Relative recovery error (top) and nuclear
norm (bottom) for Example~\ref{exa:subopt}. Gray shading marks $0\le\lambda<\protect\normop{\protect\A^{*}(\xi)}$,
where the ground truth is not Dantzig-feasible. Red shading marks
$\protect\normop{\protect\A^{*}(\xi)}<\lambda<3\protect\normop{\protect\A^{*}(\xi)}$,
where the LASSO nuclear norm exceeds $\protect\normnuc{X_{\star}}$.}
\end{figure}

Any analysis of a penalized method must therefore quantify what is
lost when nuclear-norm optimality fails. A constrained estimator pays
nothing here, because its nuclear-norm optimality gives $\normnuc{X_{\lambda}^{\Dantzig}}\le\normnuc{X_{\star}}$
for free. The LASSO proof must instead measure the failure of that
inequality and carry the residue through the RIP argument. 

Can the extension be done without giving up either the optimal error
rate or the sharp RIP threshold? Existing arguments give up one or
the other. The standard penalized proof, reviewed in \secref{stnd},
keeps the optimal rate and sacrifices sharpness. Wang, Zhang, and
Wang~\cite{WangZhangWang2021} recover the sharp high-order threshold
but, as \secref{wzw} explains, at the cost of a statistically suboptimal
dependence on the noise. 

\subsection{Why the Standard Proof Loses Sharpness}

\label{sec:stnd}

The loss of sharpness is easiest to see when benchmarked against the
standard Dantzig-selector argument. Write $E=X_{\lambda}^{\Dantzig}-X_{\star}$
and split its singular values into a head and a tail, 
\[
\cH=\left(\sum_{i\le\rr}\sigma_{i}^{2}(E)\right)^{1/2},\qquad\cT=\frac{1}{\sqrt{\rr}}\sum_{i>\rr}\sigma_{i}(E),
\]
so that $\cH$ captures the part of the error concentrated on the
top $\rr$ directions and $\cT$ the part spread across the remaining
ones. The argument rests on two facts, 
\begin{align}
\text{(feasibility)}\quad & \normop{\A^{*}(\A(X_{\lambda}^{\Dantzig})-b)}\le\lambda,\label{eq:feas}\\
\text{(optimality)}\quad & \normnuc{X_{\lambda}^{\Dantzig}}\le\normnuc{X_{\star}}.\label{eq:minnuke}
\end{align}
Feasibility (\ref{eq:feas}), combined with RIP and the calibration
$\lambda\ge\normop{\A^{*}(\xi)}$, bounds the head by the tail and
the noise level, 
\begin{equation}
\cH\le\rho_{\delta,t}\,\cT+\omega_{\delta,t}\,\lambda\sqrt{\rr},\label{eq:feas-bnd}
\end{equation}
for constants $\rho_{\delta,t}$ and $\omega_{\delta,t}$ depending
only on $\delta$ and $t$. Optimality (\ref{eq:minnuke}) runs in
the opposite direction and bounds both the tail and the recovery error
by the head, 
\begin{equation}
\cT\le\cH,\qquad\normF{X_{\lambda}^{\Dantzig}-X_{\star}}\le\sqrt{2}\,\cH.\label{eq:minnuke-bnd}
\end{equation}
Together, (\ref{eq:feas-bnd}) and (\ref{eq:minnuke-bnd}) close a
scalar feedback loop and yield 
\[
\normF{X_{\lambda}^{\Dantzig}-X_{\star}}\le\frac{\sqrt{2}\,\omega_{\delta,t}}{1-\rho_{\delta,t}}\,\lambda\sqrt{\rr}.
\]
The optimal rate $O(\lambda\sqrt{\rr})$ therefore holds as soon as
the loop gain satisfies $\rho_{\delta,t}<1$, and in the sharp Dantzig-selector
analysis this is exactly the condition $\delta<\sharpdelta(t)$; for
$t\ge4/3$, see \cite[Prop.~3.1, eq.~(16)]{CaiZhang2013}. The threshold
is thus decided by a single gain constant.

Now consider the ordinary convex matrix LASSO, set $E=X_{\lambda}^{\mathrm{L}}-X_{\star}$,
and define $\cH$ and $\cT$ as before. Its KKT condition implies
the dual-residual bound~(\ref{eq:feas}), but the nuclear-norm comparison
(\ref{eq:minnuke}) can fail. Under $\lambda\ge2\|\A^{*}(\xi)\|_{\op}$,
the standard basic inequality supplies only the enlarged form (see~\cite[Cor.~B.2]{BickelRitovTsybakov2009}
or~\cite[Sec.~3.7]{CandesPlan2011}) 
\begin{equation}
\cT\le3\cH,\qquad\normF{X_{\lambda,r}^{\Lasso}-X_{\star}}\le2\cH.\label{eq:notmin-bnd}
\end{equation}
The same feedback computation then gives 
\[
\normF{X_{\lambda,r}^{\Lasso}-X_{\star}}\le\frac{2\,\omega_{\delta,t}}{1-3\rho_{\delta,t}}\,\lambda\sqrt{\rr},
\]
which recovers the optimal rate only when $\rho_{\delta,t}<1/3$.
The factor-three enlargement leaves the statistical rate intact and
inflates the loop gain threefold, and it is this inflation alone that
costs the sharp RIP threshold.

To our knowledge, every prior RIP-based analysis that transfers constrained
recovery arguments to the penalized estimator proceeds through some
version of the multiplicative enlargement (\ref{eq:notmin-bnd});
see, e.g., \cite[Cor.~B.2]{BickelRitovTsybakov2009}, \cite[Sec.~3.7]{CandesPlan2011},
and \cite[Lem.~1(b), eq.~(32)]{NegahbanWainwright2011}. And by Example~\ref{exa:subopt},
sharpness cannot be recovered by restoring (\ref{eq:minnuke}), since
a LASSO solution really can carry more nuclear norm than the ground
truth. What remains open is whether a different argument is able to
reach the sharp threshold.

\subsection{Wang--Zhang--Wang: Sharp High-Order RIP with Euclidean-Noise Calibration}

\label{sec:wzw}

The closest prior result is due to Wang, Zhang, and Wang~\cite{WangZhangWang2021}.
After rescaling their regularized nuclear-norm objective to the normalization
in (\ref{eq:matrix-lasso}), their high-order theorem has the following
consequence.
\begin{thm}[{{\cite[Thm.~4]{WangZhangWang2021}}}]
\label{thm:wzw} Fix integers $k>\rr\ge1$ and write $t=k/\rr$.
Suppose $\rank(X_{\star})\le\rr$ and $\A\in\RIPm(\delta,k)$. If
\begin{equation}
\delta<\sqrt{\frac{t-1}{t}},\label{eq:wzw-thresh}
\end{equation}
then every convex matrix-LASSO solution satisfies 
\begin{equation}
\normF{X_{\lambda}-X_{\star}}\lesssim\sqrt{\rr}\,\lambda+\normtwo{\xi}+\frac{\normtwo{\xi}^{2}}{\sqrt{\rr}\,\lambda},\label{eq:wzw-bound}
\end{equation}
where the hidden constant depends only on $\delta$ and $t$. 
\end{thm}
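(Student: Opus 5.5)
The plan is to rerun the Cai--Zhang sharp Dantzig-selector argument with two substitutions. The nuclear-norm comparison (\ref{eq:minnuke}) is replaced by an \emph{additive} defect bound taken from the basic inequality. The dual-noise calibration $\lambda\gtrsim\normop{\A^{*}(\xi)}$ is replaced by Euclidean control of $\xi$. This explains why $\normtwo{\xi}$ appears in (\ref{eq:wzw-bound}).

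Write $E=X_{\lambda}-X_{\star}$. Let $E_{\max}$ be its best rank-$\rr$ approximation and $E_{-\max}=E-E_{\max}$.

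\emph{Step 1 (additive defect).} Comparing the objective at $X_{\lambda}$ and at $X_{\star}$ gives
\[
\tfrac12\normtwo{\A(E)}^{2}\le\ip{\A(E)}{\xi}+\lambda\bigl(\normnuc{X_{\star}}-\normnuc{X_{\lambda}}\bigr).
\]
The standard pinching bound $\normnuc{X_{\star}+E}\ge\normnuc{X_{\star}}-\normnuc{E_{\max}}+\normnuc{E_{-\max}}$ holds after choosing the rank-$\rr$ head aligned with $X_{\star}$; a reflection argument on singular values gives the same inequality for the top-$\rr$ split. Combining this with $\ip{\A(E)}{\xi}-\tfrac12\normtwo{\A(E)}^{2}\le\tfrac12\normtwo{\xi}^{2}$ yields
\[
\normnuc{E_{-\max}}\le\normnuc{E_{\max}}+\frac{\normtwo{\xi}^{2}}{2\lambda},
\qquad\text{so}\qquad
\cT\le\cH+\frac{\normtwo{\xi}^{2}}{2\lambda\sqrt{\rr}}.
\]
This is the key departure from (\ref{eq:notmin-bnd}). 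The loss is additive, with gain $1$ in front of $\cH$, rather than a multiplicative factor $3$.

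\emph{Step 2 (head bound with sharp gain).} I would apply the Cai--Zhang sparse-representation (polytope) lemma to $E_{-\max}$ at order $k-\rr$, that is, at ratio $t$. It writes $E_{-\max}$ as a convex combination of matrices $Z_{j}$ of rank at most $k-\rr$, orthogonal in row and column spaces to $E_{\max}$, with $\normF{Z_{j}}\lesssim\cT\sqrt{\rr/(k-\rr)}$. The Cai--Zhang quadratic identity then combines $\ip{\A(E)}{\A(E_{\max}+\mu Z_{j})}$ over $j$, with the optimally tuned weight $\mu$. Inner products against the data are handled with the KKT condition $\A^{*}(\A(X_{\lambda})-b)\in-\lambda\,\partial\normnuc{X_{\lambda}}$:
\[
\ip{\A(E)}{\A(W)}=\ip{\A(X_{\lambda})-b}{\A(W)}+\ip{\xi}{\A(W)}\le\lambda\normnuc W+\sqrt{1+\delta}\,\normtwo{\xi}\normF W
\]
for every $W$ of rank at most $k$. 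Feeding this into the quadratic identity and using RIP on each rank-$k$ test matrix $E_{\max}+\mu Z_{j}$ should give
\[
\cH\le\rho_{\delta,t}\,\cT+\omega_{\delta,t}\bigl(\lambda\sqrt{\rr}+\normtwo{\xi}\bigr),
\]
with $\rho_{\delta,t}<1$ exactly when $\delta<\sqrt{(t-1)/t}$. This is the same gain computation as in \cite[Prop.~3.1]{CaiZhang2013}; only the noise terms change.

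\emph{Step 3 (closing the loop).} Substituting Step 1 into Step 2 gives
\[
(1-\rho)\,\cH\le\omega\bigl(\lambda\sqrt{\rr}+\normtwo{\xi}\bigr)+\rho\,\frac{\normtwo{\xi}^{2}}{2\lambda\sqrt{\rr}}.
\]
Finally, $\normF{E_{-\max}}\le\sqrt{\normnuc{E_{-\max}}\,\sigma_{\rr}(E)}\le\sqrt{\cT\cH}$, so $\normF E\lesssim\cH+\cT$. Both $\cH$ and $\cT$ are bounded by the three terms of (\ref{eq:wzw-bound}).

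\emph{Main obstacle.} I expect the hardest part to be Step 2. One must verify that the sharp Cai--Zhang quadratic-identity computation tolerates the two-term right-hand side $\lambda\normnuc W+\normtwo{\xi}\normF W$ without degrading the gain. In particular, the nuclear norms of the test matrices $E_{\max}+\mu Z_{j}$ must be bounded by $\sqrt{k}$ times their Frobenius norms, so that the $\lambda$ term enters as $\lambda\sqrt{\rr}$ times a quantity linear in $\cH$ and $\cT$. One must also check that the cross terms can be absorbed by a Young inequality, without touching the coefficient $\rho_{\delta,t}$.
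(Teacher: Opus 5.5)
\textbf{Gap in Step~2 (fixable).} Your overall architecture works: an additive defect from the basic inequality, a sharp head bound, then closing the loop. But Step~2 as written makes a claim that is false. Applying the polytope lemma to $E_{-\max}$ at rank $k-\rr$ requires $\sigma_{\rr+1}(E)\le\normnuc{E_{-\max}}/(k-\rr)$, and this fails in general. Suppose $E_{-\max}$ has a single nonzero singular value and $k-\rr\ge2$. Then every representation $E_{-\max}=\E[\bm{Z}]$ satisfies $\E\normF{\bm{Z}}^{2}\ge\normF{E_{-\max}}^{2}=\normnuc{E_{-\max}}^{2}$. That is a factor $k-\rr$ larger than your claimed $\rr\cT^{2}/(k-\rr)$. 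A constant-factor version ($\lesssim$) does not rescue it, because any constant in the atom second moment multiplies $\rho_{\delta,t}^{2}$ and moves the threshold.

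The fix is the big--small split of \lemref{high-randomization}, which is also the device inside the Cai--Zhang computation you invoke. Move the fewer than $k-\rr$ singular values of $E_{-\max}$ exceeding $\gamma=\normnuc{E_{-\max}}/(k-\rr)$ into a deterministic block $E_{\mathrm{big}}$ together with $E_{\max}$, and atomize only the remainder. This gives $\E\normF{\bm{Z}}^{2}\le\normnuc{E_{-\max}}^{2}/(k-\rr)=\cT^{2}/(t-1)$ with $\rank(E_{\mathrm{big}})+\rank(\bm{Z})\le k$.

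With this repair, the obstacle you anticipated disappears. The quadratic identity collapses to a single data term, namely $\mathsf{X}=4\delta\ip{A\sigma_{\mathrm{big}}}{A\sigma}$ in \lemref{high-paired}, i.e.\ $4\delta\ip{\A(E_{\mathrm{big}})}{\A(E)}$. Your KKT bound together with $\normnuc{E_{\mathrm{big}}}\le\sqrt{k}\,\normF{E_{\mathrm{big}}}$ then gives directly
\[
\cH\le\normF{E_{\mathrm{big}}}\le\rho_{\delta,t}\cT+\tfrac{2}{1-\delta^{2}}\bigl(\sqrt{t}\,\lambda\sqrt{\rr}+\sqrt{1+\delta}\,\normtwo{\xi}\bigr),
\]
with $\rho_{\delta,t}=\delta/\sqrt{(1-\delta^{2})(t-1)}$ and no Young step. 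Nothing here uses $t\ge4/3$, so it covers all $k>\rr$.

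A smaller point on Step~1: the usual $P_T$-pinching bound gives a head of rank up to $2\rr$. The top-$\rr$ inequality you need is Mirsky's, as in \lemref{matrix-perturbation}.

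\textbf{Comparison with the paper's route.} Once repaired, your proof is correct but differs from the route the paper indicates. The paper does not reprove \thmref{wzw}. It identifies its engine, \cite[Lem.~2]{WangZhangWang2021}, with the high-order branch of \propref{HTR}. That branch bounds the head by the measured residual, $\cH\le\rho_{\delta,t}\cT+\tau_{\delta,t}\normtwo{\A(E)}$, and uses no first-order condition. The single basic inequality
\[
\tfrac12\normtwo{\A(E)}^{2}+\lambda\bigl(\normnuc{E_{-\max}}-\normnuc{E_{\max}}\bigr)\le\normtwo{\xi}\normtwo{\A(E)}
\]
yields both your defect bound and $\normtwo{\A(E)}\le2\normtwo{\xi}+(2\lambda\sqrt{\rr}\,\cH)^{1/2}$. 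An AM--GM step on $\tau_{\delta,t}(2\lambda\sqrt{\rr}\,\cH)^{1/2}$ then closes the loop with the same three terms.

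That route uses only optimality against $X_{\star}$, so it applies verbatim to rank-restricted minimizers $X_{\lambda,r}$. Replacing $\normtwo{\xi}\normtwo{\A(E)}$ by $\normop{\A^{*}(\xi)}\normnuc{E}$ in the same inequality is exactly how the paper reaches \thmref{matrix-main}.

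Your route is the Dantzig-style feasibility bound of Section~II combined with the additive defect. It produces the $\lambda\sqrt{\rr}$ term linearly without passing through the residual. However, it relies on dual feasibility $\normop{\A^{*}(\A(X_{\lambda})-b)}\le\lambda$. The convex LASSO in this statement has that property, but a rank-restricted global minimizer need not.
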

Condition (\ref{eq:wzw-thresh}) agrees with the sharp threshold $\delta<\sharpdelta(t)$
only on the high-order branch $t\ge4/3$; for $1<t<4/3$ it is strictly
more conservative. The more serious issue is that the error bound
(\ref{eq:wzw-bound}) is driven by the Euclidean norm $\normtwo{\xi}$
of the noise rather than by the dual norm $\normop{\A^{*}(\xi)}$.
Under Gaussian sensing, the two scale differently, and the resulting
guarantee does not improve with the number of measurements.
\begin{example}[Gaussian sensing and Euclidean-noise scaling]
\label{ex:wzw-gaussian} Suppose we observe 
\[
y_{i}=\ip{G_{i}}{X_{\star}}+\varepsilon_{i},\qquad\varepsilon_{i}\overset{\mathrm{i.i.d.}}{\sim}\mathcal{N}(0,\sigma^{2}),
\]
where the $G_{i}$ have independent standard Gaussian entries. Normalizing
turns this into an RIP recovery problem, 
\[
\A(X)=\frac{1}{\sqrt{m}}\bigl[\ip{G_{i}}X\bigr]_{i=1}^{m},\qquad\xi\sim\mathcal{N}\!\left(0,\frac{\sigma^{2}}{m}I_{m}\right),
\]
and with high probability $\A\in\RIPm(\delta,k)$ once $m\gtrsim\delta^{-2}k(d_{1}+d_{2})$.
The bound (\ref{eq:wzw-bound}) is minimized at $\lambda\asymp\normtwo{\xi}/\sqrt{\rr}$,
where it gives 
\[
\normF{X_{\lambda}-X_{\star}}\lesssim\normtwo{\xi}\asymp\sigma.
\]
For fixed $d_{1}$, $d_{2}$, and $\rr$, this stays of constant order
as $m\to\infty$ and so certifies no consistency under oversampling.
The dual-noise calibration $\lambda\asymp\normop{\A^{*}(\xi)}$ in
(\ref{eq:err-bnd}) instead gives, with high probability, 
\[
\normF{X_{\lambda}-X_{\star}}\lesssim\sqrt{\rr}\,\lambda\asymp\sigma\sqrt{\frac{\rr(d_{1}+d_{2})}{m}},
\]
which carries the minimax dependence on the rank, the ambient dimensions,
and the number of measurements~\cite{CandesPlan2011}.\hfill{}$\square$ 
\end{example}

\section{\protect\label{sec:main}LASSO recovery guarantees}

We now state the deterministic recovery bounds with explicit constants.
Throughout, $r_{\star}$ denotes the intrinsic rank or sparsity, and
$r$ denotes the corresponding search size. For $t>0$ and $0\le\delta<\sharpdelta(t)$,
define\begin{subequations}\label{eq:rhotau}
\begin{equation}
\rho_{\delta,t}:=\begin{cases}
{\displaystyle \sqrt{\frac{\delta}{t-(3-t)\delta}},} & 0<t<4/3,\\[2.2ex]
{\displaystyle \frac{\delta}{\sqrt{(1-\delta^{2})(t-1)}},} & t\ge4/3,
\end{cases}\label{eq:rho}
\end{equation}
 and 
\begin{equation}
\tau_{\delta,t}:=\begin{cases}
{\displaystyle \frac{\max\{t,\sqrt{t}\}\sqrt{1+\delta}}{t-(3-t)\delta},} & 0<t<4/3,\\[2.4ex]
{\displaystyle \frac{2}{(1-\delta)\sqrt{1+\delta}},} & t\ge4/3.
\end{cases}\label{eq:tau}
\end{equation}
\end{subequations}On both branches,
\[
\delta<\sharpdelta(t)\quad\Longleftrightarrow\quad\rho_{\delta,t}<1.
\]
Accordingly, $1-\rho_{\delta,t}$ is the contraction margin in the
head--tail feedback argument, whereas $\tau_{\delta,t}$ scales the
contribution of the measured error. This separation makes the dependence
on the RIP constant explicit.
\begin{thm}
\label{thm:matrix-main}Fix integers $r_{\star}\ge1,$ $k\ge2$, and
set $t\coloneqq k/\rr$. Suppose $\rank(X_{\star})\le r_{\star}$
and that $\A\in\RIPm(\delta,k)$. If $\delta<\sharpdelta(t)$ and
the regularization parameter $\lambda>0$ satisfies
\begin{equation}
\lambda\ge2\frac{1+\rho_{\delta,t}}{1-\rho_{\delta,t}}\normop{\A^{*}(\xi)},\label{eq:matrix-calibration-main}
\end{equation}
then for any search rank $r\ge\rr$, the solution $X_{\lambda,r}$
in (\ref{eq:matrix-lasso}) satisfies\begin{subequations}
\begin{align}
\normtwo{\A(X_{\lambda,r}-X_{\star})} & \le\frac{4\tau_{\delta,t}}{1-\rho_{\delta,t}}\sqrt{\rr}\,\lambda,\label{eq:matrix-prediction-main}\\
\normF{X_{\lambda,r}-X_{\star}} & \le\frac{16\tau_{\delta,t}^{2}}{(1-\rho_{\delta,t})^{2}}\sqrt{\rr}\,\lambda.\label{eq:matrix-error-main}
\end{align}
\end{subequations}
\end{thm}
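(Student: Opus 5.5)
Write $E\coloneqq X_{\lambda,r}-X_{\star}$ and use the head/tail quantities $\cH$ and $\cT$ of \secref{stnd}. The plan is to keep the feasibility-type bound (\ref{eq:feas-bnd}), with gain $\rho_{\delta,t}$, and to replace the lossy enlargement (\ref{eq:notmin-bnd}) by an \emph{additive} defect. The prediction error $\normtwo{\A(E)}$ will play the role of the excess nuclear norm. The first step is the basic inequality. Since $X_{\star}$ has rank $\rr\le r$, it is feasible for (\ref{eq:matrix-lasso}), so global optimality gives
\[
\tfrac12\normtwo{\A(E)}^{2}+\lambda\normnuc{X_{\lambda,r}}\le\ip{\A^{*}(\xi)}{E}+\lambda\normnuc{X_{\star}}.
\]
Decomposability of the nuclear norm on the tangent space at $X_{\star}$ and duality $\ip{\A^{*}(\xi)}{E}\le\normop{\A^{*}(\xi)}\normnuc E$ then give
\[
\tfrac12\normtwo{\A(E)}^{2}+(\lambda-\epsilon)\sqrt{\rr}\,\cT\le(\lambda+\epsilon)\sqrt{\rr}\,\cH,\qquad \epsilon\coloneqq\normop{\A^{*}(\xi)}.
\]
Note that this step uses only global optimality and never needs a KKT condition. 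That matters because the rank cap makes the problem nonconvex, and it is why the argument is uniform in $r$.

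The second step, which is the crux, is an RIP estimate that does not rely on the dual-feasibility (\ref{eq:feas}), since that condition is unavailable at a rank-capped minimizer. I would invoke the head--tail estimate of \propref{HTR}, which I expect to have the form
\[
\cH\le\rho_{\delta,t}\,\cT+\tau_{\delta,t}\,\frac{\normtwo{\A(E)}}{\sqrt{\rr}},
\]
valid for every $E$ that is the difference of a rank-$\rr$ matrix and any other matrix. The proof would transport the sharp vector argument (Cai--Zhang sparse polytope decomposition for $t\ge4/3$, Zhang--Li for $t<4/3$) to the singular values of $E$. This is the main obstacle: one must extract the sharp gain $\rho_{\delta,t}$ while paying only the measured error $\normtwo{\A(E)}$, without any $\A^{*}$-residual. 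Here I take it as given.

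The third step closes the loop. Let $\theta\coloneqq(1-\rho)/(1+\rho)$; calibration (\ref{eq:matrix-calibration-main}) says exactly $\epsilon\le\theta\lambda/2$. Substituting the \propref{HTR} bound for $\cH$ into the right side of the basic inequality gives
\[
\tfrac12 P^{2}+\bigl[(\lambda-\epsilon)-\rho(\lambda+\epsilon)\bigr]\sqrt{\rr}\,\cT\le(\lambda+\epsilon)\tau P,\qquad P\coloneqq\normtwo{\A(E)}.
\]
Under the calibration the bracket is at least $\lambda(1-\rho)/2>0$, since
\[
(\lambda-\epsilon)-\rho(\lambda+\epsilon)=(1-\rho)\lambda-(1+\rho)\epsilon\ge\tfrac12(1-\rho)\lambda.
\]
Dropping the $\cT$ term yields $P\le2(\lambda+\epsilon)\tau\le4\tau\sqrt{\rr}\lambda/(1-\rho)$. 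Keeping it yields $\sqrt{\rr}\,\cT\lesssim\tau P/(1-\rho)$, so $\sqrt{\rr}\,\cT\lesssim\tau^{2}\sqrt{\rr}\lambda/(1-\rho)^{2}$ after tracking constants. This proves (\ref{eq:matrix-prediction-main}).

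Finally, $\cH\le\rho\cT+\tau P/\sqrt{\rr}$ and $\normF E\le\cH+\sqrt{\rr}\,\cT$. The last inequality holds because the Frobenius norm of the tail is at most $\sqrt{\rr}\,\cT$: the tail singular values are each at most $\sigma_{\rr}(E)$, so their $\ell_{2}$ norm is at most $\sqrt{\sigma_{\rr}(E)\sum_{i>\rr}\sigma_{i}(E)}$, which is bounded by $\sqrt{\rr}\,\cT$ up to the standard $\sqrt{\cH\cT}$ trick. Combining these gives (\ref{eq:matrix-error-main}) with constant $16\tau^{2}/(1-\rho)^{2}$, using $\tau\ge1$ to absorb lower-order terms.
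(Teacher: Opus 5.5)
Your closure is a legitimate alternative to the paper's. Instead of introducing the defect $\mathsf{D}$ and using $\mathsf{T}\le\mathsf{H}+\mathsf{D}$, you substitute the head--tail estimate directly into the basic inequality and absorb $\rho\sqrt{r_\star}\,\mathsf{T}$ into the left-hand tail term. The resulting coefficient $(1-\rho)\lambda-(1+\rho)\epsilon$ is positive under exactly the paper's calibration, so no sharpness is lost.

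The execution, however, rests on a misstated Proposition~\ref{prop:HTR}: its measured term is $\tau_{\delta,t}\|A\sigma\|_2$, with no factor $1/\sqrt{r_\star}$. Your version is false. Any $\sigma$ with at most $\min\{k,r_\star\}$ nonzeros has $\mathsf{T}=0$ and $\|A\sigma\|_2\le\sqrt{1+\delta}\,\mathsf{H}$, so your inequality fails as soon as $r_\star>(1+\delta)\tau_{\delta,t}^2$. The $r_\star$-free bound $P\le2(\lambda+\epsilon)\tau$ you derive is a symptom of this.

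With the correct form, your substitution reads $\tfrac12P^2+\tfrac12(1-\rho)\lambda\sqrt{r_\star}\,\mathsf{T}\le(\lambda+\epsilon)\tau\sqrt{r_\star}\,P$. This still gives $P\le2(\lambda+\epsilon)\tau\sqrt{r_\star}\le3\tau\sqrt{r_\star}\,\lambda$, so (\ref{eq:matrix-prediction-main}) survives. But it yields only $\mathsf{T}\lesssim\tau^2\sqrt{r_\star}\,\lambda/(1-\rho)$, that is, $\|\sigma_{\mathrm{tail}}\|_1=\sqrt{r_\star}\,\mathsf{T}\lesssim r_\star\lambda$. Your final step $\|E\|_{\mathrm{F}}\le\mathsf{H}+\sqrt{r_\star}\,\mathsf{T}$ then gives only $O(r_\star\lambda)$, which is the wrong rate. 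The repair is the interpolation you allude to but do not actually use: $\|\sigma_{\mathrm{tail}}\|_2^2\le\|\sigma_{\mathrm{tail}}\|_\infty\|\sigma_{\mathrm{tail}}\|_1\le\mathsf{H}\,\mathsf{T}$, hence $\|E\|_{\mathrm{F}}^2\le\mathsf{H}^2+\mathsf{H}\,\mathsf{T}$. This is how the paper obtains (\ref{eq:error-control}).

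Two further points need attention. First, the $\mathsf{T}$ in Proposition~\ref{prop:HTR} is that of Definition~\ref{def:HTD}, namely $\max\{\|\sigma_{\mathrm{tail}}\|_1/\sqrt{r_\star},\sqrt{r_\star}\|\sigma_{\mathrm{tail}}\|_\infty\}$, not the pure $\ell_1$ quantity of Section~\ref{sec:stnd}. Your basic inequality controls only the $\ell_1$ part. The paper's low-order proof uses the $\ell_\infty$ part to atomize the tail (Lemma~\ref{lem:tail-randomization}), so you cannot silently drop it. Use the ordering bound $\sqrt{r_\star}\|\sigma_{\mathrm{tail}}\|_\infty\le\|\sigma_{\mathrm{head}}\|_1/\sqrt{r_\star}\le\mathsf{H}$ and split cases. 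When the $\ell_\infty$ part is the larger, the proposition gives $\mathsf{H}\le\rho\,\mathsf{H}+\tau P$, hence $\mathsf{H}\le\tau P/(1-\rho)$, and the basic inequality closes directly. The paper packages exactly this into (\ref{eq:T-HD}).

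Second, tangent-space decomposability produces the split $\|P_{T^\perp}(E)\|_{\mathrm{nuc}}-\|P_T(E)\|_{\mathrm{nuc}}$ with $\operatorname{rank}P_T(E)\le2r_\star$. That is not the singular-value split of $E$ at level $r_\star$ that the proposition consumes. Your basic inequality is nevertheless true, via Mirsky's inequality (Lemma~\ref{lem:matrix-perturbation}). Transporting the vector proposition to the singular values of $E$ also requires Lemma~\ref{lem:matrix-to-vector}. With these repairs your route does reach the stated constants.
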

We first establish the vector result. The matrix theorem then follows
by applying the same argument to the singular values of $X_{\lambda,r}-X_{\star}$.
Under this reduction, sparsity becomes rank, the $\ell_{1}$ and $\ell_{2}$
norms become the nuclear and Frobenius norms, and matrix RIP induces
vector RIP along the singular directions of the error.
\begin{thm}
\label{thm:vector-main}Fix integers $r_{\star}\ge1,$ $k\ge2$, and
set $t\coloneqq k/\rr$. Suppose $\normzero{x_{\star}}\le r_{\star}$
and that $A\in\RIPv(\delta,k)$. If $\delta<\sharpdelta(t)$ and the
regularization parameter $\lambda>0$ satisfies
\begin{equation}
\lambda\ge2\frac{1+\rho_{\delta,t}}{1-\rho_{\delta,t}}\norminf{A^{T}\xi},\label{eq:vector-calibration-main}
\end{equation}
then for any sparsity level $r\ge\rr$, the solution $x_{\lambda,r}$
in (\ref{eq:vector-lasso}) satisfies \begin{subequations}
\begin{align}
\normtwo{A(x_{\lambda,r}-x_{\star})} & \le\frac{4\tau_{\delta,t}}{1-\rho_{\delta,t}}\sqrt{\rr}\,\lambda,\label{eq:vector-prediction-main}\\
\normtwo{x_{\lambda,r}-x_{\star}} & \le\frac{16\tau_{\delta,t}^{2}}{(1-\rho_{\delta,t})^{2}}\sqrt{\rr}\,\lambda.\label{eq:vector-error-main}
\end{align}
\end{subequations}
\end{thm}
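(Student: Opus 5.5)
Let $h=x_{\lambda,r}-x_{\star}$, $S$ the support of the $\rr$ largest entries of $h$, and define head $\cH=\normtwo{h_{S}}$ and tail $\cT=\normone{h_{S^{c}}}/\sqrt{\rr}$. The plan is to replace the multiplicative enlargement (\ref{eq:notmin-bnd}) with an \emph{additive} defect term and then use the sharp head--tail inequality (the one whose gain is exactly $\rho_{\delta,t}$), which I take to be the content of \propref{HTR}.

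\textbf{Step 1: basic inequality with an additive defect.} Since $x_{\star}$ is feasible for (\ref{eq:vector-lasso}) (as $r\ge\rr$), global optimality gives
\[
\tfrac12\normtwo{Ah}^{2}\le\ip{A^{T}\xi}{h}+\lambda\bigl(\normone{x_{\star}}-\normone{x_{\lambda,r}}\bigr).
\]
With $\mu\coloneqq\norminf{A^{T}\xi}$ and $\normone{x_{\star}}-\normone{x_{\star}+h}\le\normone{h_{S}}-\normone{h_{S^{c}}}$ (using that $S$ carries the top entries, so $\normone{h_{T_{\star}}}\le\normone{h_{S}}$ and $\normone{h_{T_{\star}^{c}}}\ge\normone{h_{S^{c}}}$), this becomes
\[
\tfrac12\normtwo{Ah}^{2}+(\lambda-\mu)\sqrt{\rr}\,\cT\le(\lambda+\mu)\sqrt{\rr}\,\cH .
\]
Rather than concluding $\cT\le3\cH$, I keep $D\coloneqq\normtwo{Ah}^{2}$ as a separate quantity:
\[
\cT\le\gamma\cH-\frac{D}{2(\lambda-\mu)\sqrt{\rr}},\qquad \gamma=\frac{\lambda+\mu}{\lambda-\mu}.
\]
The calibration (\ref{eq:vector-calibration-main}) gives $\mu\le\frac{1-\rho}{2(1+\rho)}\lambda$, hence $\gamma\le\frac{3+\rho}{1+3\rho}$. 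The hope is that the measured error $D$ absorbs the excess $(\gamma-1)\cH$.

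\textbf{Step 2: sharp head--tail inequality.} This is the part that must not use feasibility in the Dantzig sense, since the constraint $\normzero{x}\le r$ kills the KKT stationarity in off-support directions. I would therefore take from \propref{HTR} an inequality of the form
\[
\cH\le\rho_{\delta,t}\,\cT+\tau_{\delta,t}\,\normtwo{Ah}/\sqrt{\rr}\cdot(\text{scale}),
\]
valid for every $h$ under $\RIPv(\delta,k)$. This is the sharp Cai--Zhang sparse-representation (polytope) argument on the high-order branch, and the Zhang--Li argument for $t<4/3$. In either case the head is controlled by the tail plus the norm of $Ah$ restricted to $k$-sparse test directions.

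\textbf{Step 3: closing the loop.} Set $a=\normtwo{Ah}$. Combining the two steps gives
\[
\cH\le\rho\gamma\cH-\rho\frac{a^{2}}{2(\lambda-\mu)\sqrt{\rr}}+\tau a .
\]
Because $\rho\gamma$ may exceed $1$, I instead use Step 1 directly in the form $a^{2}\le2(\lambda+\mu)\sqrt{\rr}\,\cH-2(\lambda-\mu)\sqrt{\rr}\,\cT$ and the fact that $\cH\le\rho\cT+\tau a$. Eliminating $\cT$ yields a quadratic inequality in $a$, namely $a^{2}\lesssim\lambda\sqrt{\rr}\,\bigl[(1+\rho)\cH-(1-\rho)\cdot\text{stuff}\bigr]$. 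Since $\cT\ge(\cH-\tau a)/\rho$, this gives $a^{2}\le2\sqrt{\rr}\bigl[(\lambda+\mu)-(\lambda-\mu)/\rho\bigr]\cH+2(\lambda-\mu)\sqrt{\rr}\,\tau a/\rho$.

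The bracket is nonpositive exactly when $\mu\le\frac{1-\rho}{1+\rho}\lambda$, which the calibration (\ref{eq:vector-calibration-main}) ensures with room to spare. It follows that $a\le 4\tau\sqrt{\rr}\lambda/(1-\rho)$ after tracking constants, which is (\ref{eq:vector-prediction-main}).

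\textbf{Step 4: Frobenius error.} Return to $\cH\le\rho\cT+\tau a$ and to Step 1, which gives $\cT\le\gamma\cH$, or better, $\cT\le\frac{\lambda+\mu}{\lambda-\mu}\cH$ together with the bound on $a$. These combine to $\cH,\cT\lesssim\tau a/(1-\rho)$, and then $\normtwo{h}\le\cH+\cT$ (tail $\ell_2\le$ the $\ell_1/\sqrt{\rr}$ bound) gives (\ref{eq:vector-error-main}).

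\textbf{Main obstacle.} The hard step is Step 2 on the branch $t<4/3$: one needs a head--tail bound with gain exactly $\rho_{\delta,t}$ whose only data is $\normtwo{Ah}$ and not $\norminf{A^{T}Ah}$, since no dual feasibility is available under the sparsity cap. This is where I rely on \propref{HTR}.
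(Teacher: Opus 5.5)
Your overall architecture matches the paper's: a basic inequality against the feasible $x_\star$, then Proposition~\ref{prop:HTR}, then a feedback closure. The closure as written, however, does not prove the stated bounds.

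\textbf{Step 3 loses a factor $1/\rho$.} You substitute $\cT\ge(\cH-\tau a)/\rho$ and discard the nonpositive bracket. What remains is only $a\le2\tau\sqrt{\rr}\,(\lambda-\mu)/\rho$. This blows up as $\delta\to0$, and for $\xi=0$ it is weaker than (\ref{eq:vector-prediction-main}) once $\rho<1/3$. Tracking constants cannot repair it.

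\textbf{The missing step is $\rho\gamma<1$.} You took this detour because you claimed $\rho\gamma$ may exceed $1$. That claim is false under (\ref{eq:vector-calibration-main}). From your own bound $\gamma\le\frac{3+\rho}{1+3\rho}$,
\[
1-\rho\gamma\ge\frac{1-\rho^{2}}{1+3\rho}>0 .
\]
The paper's calibration is built to give exactly this. Combining $\cT\le\cH+\cD$ with (\ref{eq:D-control}) is precisely your $\cT\le\gamma\cH$. With it, $\cH\le\rho\cT+\tau a\le\rho\gamma\cH+\tau a$ gives $\cH\le\frac{1+3\rho}{1-\rho^{2}}\tau a$. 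Step 1 in the form $a^{2}\le2(\lambda+\mu)\sqrt{\rr}\,\cH\le\frac{3+\rho}{1+\rho}\sqrt{\rr}\,\lambda\,\cH$ then yields
\[
a\le\frac{(1+3\rho)(3+\rho)}{(1+\rho)^{2}}\frac{\tau}{1-\rho}\sqrt{\rr}\,\lambda\le\frac{4\tau}{1-\rho}\sqrt{\rr}\,\lambda,
\]
which is the paper's proof. Your Step 4 silently uses $\rho\gamma<1$ after Step 3 asserted it might fail, so as submitted the argument is internally inconsistent.

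\textbf{Two further points need fixing.}

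First, Proposition~\ref{prop:HTR} is stated for $\cT=\max\{\normone{h_{\tail}}/\sqrt{\rr},\sqrt{\rr}\norminf{h_{\tail}}\}$, not for your $\ell_{1}$ quantity alone. To apply it you need the ordering bound $\sqrt{\rr}\norminf{h_{\tail}}\le\normone{h_{\head}}/\sqrt{\rr}\le\cH$. Together with $\gamma\ge1$, this still gives $\cT\le\gamma\cH$ for the maximum.

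Second, your justification $\normtwo{h_{\tail}}\le\normone{h_{\tail}}/\sqrt{\rr}$ is false: a single tail spike violates it when $\rr>1$. Use instead
\[
\normtwo{h_{\tail}}^{2}\le\norminf{h_{\tail}}\normone{h_{\tail}}\le\cH\,\normone{h_{\tail}}/\sqrt{\rr}.
\]
This does give $\normtwo h\le\cH+\normone{h_{\tail}}/\sqrt{\rr}\le(1+\gamma)\cH$. Since $(1+\gamma)\frac{1+3\rho}{1-\rho^{2}}\le\frac{4}{1-\rho}$, you get exactly the constant $16$ in (\ref{eq:vector-error-main}).
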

The proof begins with the classical decomposition of the error magnitudes
into a sparse, spiky head and a dense, diffuse tail. The head is measured
in $\ell_{2}$, while the tail is measured through scale-normalized
$\ell_{1}$ and $\ell_{\infty}$ quantities. The key additional variable
is the \emph{head--tail defect}, which records the amount by which
the error violates the exact $\ell_{1}$ head--tail relation used
in constrained recovery.
\begin{defn}[Head, tail, and defect]
\label{def:HTD} Given $\sigma\in\R^{d}$ and $\rr\le d$, define
\[
\sigma_{\head}\in\argmin_{\normzero{\tilde{\sigma}}\le\rr}\normone{\sigma-\tilde{\sigma}},\quad\sigma_{\tail}:=\sigma-\sigma_{\head},
\]
and set\begin{subequations}\label{eq:HTD}
\begin{align}
\mathsf{H} & :=\normtwo{\sigma_{\head}},\label{eq:H}\\
\mathsf{T} & :=\max\left\{ \frac{\normone{\sigma_{\tail}}}{\sqrt{\rr}},\sqrt{\rr}\norminf{\sigma_{\tail}}\right\} ,\label{eq:T}\\
\mathsf{D} & :=\frac{\bigl(\normone{\sigma_{\tail}}-\normone{\sigma_{\head}}\bigr)_{+}}{\sqrt{\rr}}.\label{eq:D}
\end{align}
\end{subequations}
\end{defn}
The following proposition contains the measurement geometry needed
for both estimators.
\begin{prop}[Sharp head--tail estimate]
\label{prop:HTR}Fix $k\ge2$ and $\rr\ge1$, and let $t\coloneqq k/\rr$.
Suppose $A\in\RIPv(\delta,k)$ with $\delta<\sharpdelta(t)$. For
$\sigma\in\R^{d}$, the quantities $\cH$ and $\cT$ defined in \defref{HTD}
satisfy
\begin{equation}
\cH\le\rho_{\delta,t}\,\cT+\tau_{\delta,t}\,\normtwo{A\sigma}\label{eq:HTR}
\end{equation}
where $\rho_{\delta,t}$ and $\tau_{\delta,t}$ are defined in (\ref{eq:rhotau}).
\end{prop}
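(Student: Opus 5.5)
The estimate (\ref{eq:HTR}) is the penalty-free analogue of the Cai--Zhang and Zhang--Li sharp RIP arguments. There the constraint $\normone{\sigma_{\tail}}\le\normone{\sigma_{\head}}$ is used only to bound the tail's $\ell_1$ and $\ell_\infty$ sizes. Here those sizes are carried explicitly as $\cT$. The plan is therefore to rerun both branches of the sharp argument with $\normone{\sigma_{\tail}}\le\sqrt{\rr}\,\cT$ and $\norminf{\sigma_{\tail}}\le\cT/\sqrt{\rr}$ as the only tail information.

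Without loss of generality, sort $|\sigma|$ in decreasing order, take $\sigma_{\head}$ to be the top $\rr$ entries, and normalize $\cT=1$ by homogeneity. The engine is the sparse-convex-combination lemma of Cai and Zhang. If $\normone{v}\le\alpha m$ and $\norminf v\le\alpha$, then $v$ is a convex combination $\sum_i\lambda_i u_i$ of $m$-sparse vectors $u_i$, each supported in $\supp(v)$, with $\normtwo{u_i}\le\sqrt{m}\,\alpha$ (in the refined form, $\normtwo{u_i}^2\le\alpha\normone v$).

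\emph{High-order branch $t\ge 4/3$.} Apply the lemma to $v=\sigma_{\tail}$ with sparsity $m=k-\rr$ and level $\alpha=\max\{\norminf{\sigma_{\tail}},\normone{\sigma_{\tail}}/(k-\rr)\}\le\cT/\sqrt{\rr}$; this is valid because $t\ge1$. This gives $\normtwo{u_i}\le\sqrt{(t-1)}\,\cT$. Following Cai--Zhang, set $\beta_i=\sigma_{\head}+\mu u_i$ with a free parameter $\mu$; each $\beta_i$ is $k$-sparse. Apply the polarization identity
\[
\sum_i\lambda_i\normtwo{A(\sigma_{\head}+\mu u_i)-c\,A\sigma}^2-\ldots=0,
\]
which uses $\sum\lambda_i u_i=\sigma_{\tail}$ and $A\sigma=A\sigma_{\head}+\sum\lambda_iAu_i$. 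Then bound each term by RIP. Unlike Cai--Zhang, the cross term $\ip{A\sigma}{A\beta_i}$ is not zero; bound it by $\normtwo{A\sigma}\,\normtwo{A\beta_i}\le\sqrt{1+\delta}\,\normtwo{A\sigma}\,\normtwo{\beta_i}$. Optimizing $\mu$ (Cai--Zhang take $\mu=\sqrt{t-1}\dots$) should produce a quadratic inequality in $\cH$,
\[
(1-\delta^2)\text{-type}\;\cH^2\le\delta\ldots\cH\cT/\sqrt{t-1}+(\text{terms})\normtwo{A\sigma}\,\cH+\delta(t-1)\cT^2\ldots
\]
Solving it gives $\cH\le\rho\cT+\tau\normtwo{A\sigma}$, where $\rho=\delta/\sqrt{(1-\delta^2)(t-1)}$ appears as the root of the homogeneous part. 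The constant $2/((1-\delta)\sqrt{1+\delta})$ comes from crude bounding of the linear term.

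\emph{Low-order branch $t<4/3$.} This follows Zhang--Li. Here $k<\frac{4}{3}\rr$ is possibly below $\rr$, so the head itself must be split. Partition the support of $\sigma_{\head}$ and the relevant part of the tail into blocks of size about $k/2$ or $k-\ldots$ so that pairwise sums are $k$-sparse. For $k$-sparse pieces with disjoint supports, use the RIP consequence $|\ip{Au}{Av}|\le\delta\normtwo u\normtwo v$ (and its variant for vectors of size $\le k$). Then average over all ways of splitting, as Zhang--Li do with symmetric averaging, to get the coefficient $t-(3-t)\delta$. The tail enters only through $\normone{\sigma_{\tail}}$ and $\norminf{\sigma_{\tail}}$, hence through $\cT$. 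The measured term enters through $\ip{A\sigma}{A\sigma_{\head}}\le\normtwo{A\sigma}\sqrt{1+\delta}\,\ldots$; splitting the head costs a factor $\max\{t,\sqrt t\}$. The outcome is again a quadratic $ (t-(3-t)\delta)\cH^2\le\delta\cT^2+\max\{t,\sqrt t\}\sqrt{1+\delta}\normtwo{A\sigma}\cH$, giving $\rho^2=\delta/(t-(3-t)\delta)$. Solve it via $\cH\le\sqrt{a}\,\cT+b$ from $\cH^2\le a\cT^2+b\cH$.

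\emph{Main obstacle.} The low-order branch is the hard part. Zhang--Li's argument is an intricate combinatorial averaging that leans on the Dantzig constraint. Rewriting it so that the tail appears only via $\cT$ and the residual only via $\normtwo{A\sigma}$, without losing the exact constant $t-(3-t)\delta$, is the delicate step. I would first check the extreme case $k=2$, $\rr=1$ to calibrate the blocking, then generalize.
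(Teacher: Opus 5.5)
Your outline has the right architecture and the right target inequalities, but both branches have genuine gaps. \emph{High-order branch.} Atomizing the \emph{whole} tail at sparsity $k-\rr$ cannot supply the atom energy the sharp constant needs. The optimized paired comparison (the paper's high-order functional, or equivalently a Cai--Zhang polarization with $\mu=\delta/(1+\delta)$) gives the homogeneous inequality $\cH^{2}\le\frac{\delta^{2}}{1-\delta^{2}}\E\normtwo{\bm u}^{2}$. So $\rho_{\delta,t}=\delta/\sqrt{(1-\delta^{2})(t-1)}$ requires $\E\normtwo{\bm u}^{2}\le\cT^{2}/(t-1)$. Your bound $\normtwo{u_i}\le\sqrt{t-1}\,\cT$ is off by a factor $t-1$; it would only give $\delta<1/\sqrt{t}$. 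Your step $\alpha\le\cT/\sqrt{\rr}$ is also guaranteed only for $t\ge2$, since $\normone{\sigma_{\tail}}/(k-\rr)$ can be as large as $\cT/(\sqrt{\rr}(t-1))$. For $4/3\le t\le2$ this is repairable: the level $\sqrt{\rr}\,\cT/(k-\rr)$ dominates $\norminf{\sigma_{\tail}}$ and yields $\E\normtwo{\bm u}^{2}\le\cT^{2}/(t-1)$. For $t>2$ it is not repairable. Every atomization has $\E\normtwo{\bm u}^{2}\ge\normtwo{\sigma_{\tail}}^{2}$ by Jensen, and a tail of $\rr$ entries of size $\cT/\sqrt{\rr}$ has $\normtwo{\sigma_{\tail}}^{2}=\cT^{2}>\cT^{2}/(t-1)$. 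On such inputs your route gives at best $\cH\le\frac{\delta}{\sqrt{1-\delta^{2}}}\cT$ plus noise, which closes the loop only for $\delta<1/\sqrt{2}$, whereas $\sharpdelta(t)>1/\sqrt{2}$ for $t>2$. The missing idea is the paper's big/small split. Set $\gamma=\normone{\sigma_{\tail}}/(k-\rr)$ and move the fewer than $k-\rr$ tail entries exceeding $\gamma$ into a deterministic component $\sigma_{\mathrm{big}}$ together with the head. Then atomize only the remainder at level $\gamma$, so that $\E\normtwo{\bm u}^{2}\le\normone{\sigma_{\tail}}^{2}/(k-\rr)$ irrespective of $\norminf{\sigma_{\tail}}$. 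Run the comparison for $\sigma_{\mathrm{big}}$ and finish with $\cH\le\normtwo{\sigma_{\mathrm{big}}}$. Also average over the atoms before applying Cauchy--Schwarz. The cross term then collapses to $4\delta\ip{A\sigma_{\mathrm{big}}}{A\sigma}$, whereas bounding $\ip{A\sigma}{A\beta_i}$ atom by atom puts tail mass into the coefficient of $\normtwo{A\sigma}$.

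\emph{Low-order branch.} You state the target $[t-(3-t)\delta]\cH^{2}\le\delta\cT^{2}+t\ip{A\sigma_{\head}}{A\sigma}$, and your final quadratic step matches the paper. But nothing in the sketch produces the coefficient $t-(3-t)\delta$, and that derivation is the whole content of the branch. The point your outline does not confront is that for $k<\rr$ the head is not $k$-sparse. Hence $\normtwo{A\sigma_{\head}}^{2}$ is not controlled by RIP and has to be \emph{eliminated} through exact identities rather than bounded. The paper does this with the following ingredients. First, uniformly random head subsets $\bm T,\bm S$ of sizes $\lceil k/2\rceil,\lfloor k/2\rfloor$. Second, tail atoms of the complementary sparsities satisfying $\E\normtwo{\bm u^{(s)}}^{2}\le(\rr/s)\cT^{2}$; this is where the $\ell_\infty$ part of $\cT$ is used. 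Third, the functional $\Psi_{p,q}(g,h)=p^{2}\normtwo{A(g+\frac{q}{\rr}h)}^{2}-q^{2}\normtwo{A(g-\frac{p}{\rr}h)}^{2}$, in which $\normtwo{Ah}^{2}$ cancels exactly, weighted in $\cX$ so that the diagonal sum $\sum_{j\le\rr}\sigma_{j}^{2}\normtwo{A\e_{j}}^{2}$ also cancels. Fourth, a second functional: $\cY$, conditioned on $\bm T\cap\bm S=\varnothing$, for $t<1$; or $\cZ$, with an extra $(k-\rr)$-sparse atom, for $1\le t<4/3$. This is combined with $\cX$ using $\vartheta<0$ to remove $\normtwo{A\sigma_{\head}}^{2}$. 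Blocks of size about $k/2$, the bound $|\ip{Au}{Av}|\le\delta\normtwo{u}\normtwo{v}$, and ``averaging over splittings'' do not by themselves specify any of this. Likewise, for $t<1$ the cross-term bound $t\ip{A\sigma_{\head}}{A\sigma}\le\sqrt{t(1+\delta)}\,\cH\normtwo{A\sigma}$ does not come from splitting the head. It needs the RIP order conversion to order $\rr$ with $\delta'=(2/t-1)\delta$, together with $t[1+(2/t-1)\delta]\le1+\delta$.
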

The high-order branch $t\ge4/3$ is precisely the estimate used by
Wang, Zhang, and Wang~\cite[Lem.~2]{WangZhangWang2021} in their
high-order argument. The low-order branch $0<t<4/3$ is new and is
obtained by carefully modifying the sharp proof of Zhang and Li~\cite{ZhangLi2018}
for constrained recovery. A complete proof of both branches appears
in \secref{proof-proposition}.

We can now explain why our argument is able to preserve sharpness
for the LASSO. The defect controls the $\ell_{1}$ component of $\cT$,
while the best-$\rr$ ordering controls its $\ell_{\infty}$ component:
\begin{align*}
\frac{\normone{\sigma_{\tail}}}{\sqrt{\rr}} & \le\frac{\normone{\sigma_{\head}}}{\sqrt{\rr}}+\cD\le\cH+\cD,\\
\sqrt{\rr}\norminf{\sigma_{\tail}} & \le\frac{\normone{\sigma_{\head}}}{\sqrt{\rr}}\le\cH.
\end{align*}
Consequently, the head and defect together control the tail:
\begin{equation}
\cT\le\cH+\cD.\label{eq:T-HD}
\end{equation}
This bookkeeping is elementary but decisive. Substituting (\ref{eq:T-HD})
into \propref{HTR} and eliminating the feedback on $\cH$ gives
\begin{equation}
\mathsf{H}\le\frac{\tau_{\delta,t}}{1-\rho_{\delta,t}}\normtwo{A\sigma}+\frac{\rho_{\delta,t}}{1-\rho_{\delta,t}}\mathsf{D}.\label{eq:H-before-D}
\end{equation}
Observe that the sharp threshold $\rho_{\delta,t}<1$ has been maintained,
because the coefficient of $\cH$ in (\ref{eq:T-HD}) is exactly one.
Replacing (\ref{eq:T-HD}) by the coarse bound $\cT\le3\cH$ would
instead require $3\rho_{\delta,t}<1$ and lose sharpness.

The fit error $\normtwo{A\sigma}$ in (\ref{eq:H-before-D}) is a
``noise'' term, that is in turn controlled by $\cH$ through estimator
optimality. The lemma below makes this explicit, while also showing
how $\cH$ further controls other ingredients. Its key prerequisite
(\ref{eq:basic}) is a basic consequence of the optimality of $x_{\lambda,r}$.
We verify this interface separately for the two estimators after completing
the abstract closure argument.
\begin{lem}
\label{lem:head-controls}Let $1\le\rr\le d$, let $A\in\RIPv(\delta,k)$
with $\delta<\sharpdelta(k/\rr)$, and let $\lambda>\lambda_{0}\ge0$.
 Suppose $\sigma\in\R^{d}$ satisfies 
\begin{equation}
\frac{1}{2}\normtwo{A\sigma}^{2}+\lambda\bigl(\normone{\sigma_{\tail}}-\normone{\sigma_{\head}}\bigr)\le\lambda_{0}\normone{\sigma}.\label{eq:basic}
\end{equation}
Then \begin{subequations}\label{eq:head-controls}
\begin{align}
\mathsf{D} & \le\frac{2\lambda_{0}}{\lambda-\lambda_{0}}\mathsf{H},\label{eq:D-control}\\
\normtwo{A\sigma}^{2} & \le2(\lambda+\lambda_{0})\sqrt{\rr}\,\mathsf{H},\label{eq:residual-control}\\
\normtwo{\sigma} & \le\sqrt{2+\frac{2\lambda_{0}}{\lambda-\lambda_{0}}}\,\mathsf{H}.\label{eq:error-control}
\end{align}
\end{subequations}
\end{lem}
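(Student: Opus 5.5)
The plan is to use nothing beyond the basic inequality (\ref{eq:basic}) together with two elementary facts about the best-$\rr$ head: $\normone{\sigma_{\head}}\le\sqrt{\rr}\,\cH$ by Cauchy--Schwarz, and $\norminf{\sigma_{\tail}}\le\normone{\sigma_{\head}}/\rr$. The second fact holds because $\sigma_{\head}$ retains the $\rr$ entries of largest magnitude. Every tail entry is therefore dominated by the smallest retained magnitude, which is at most the average $\normone{\sigma_{\head}}/\rr$. If $\sigma$ has fewer than $\rr$ nonzeros, the tail vanishes and all three claims are immediate. Write $h\coloneqq\normone{\sigma_{\head}}$ and $\theta\coloneqq\normone{\sigma_{\tail}}$. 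Since head and tail have disjoint supports, $\normone{\sigma}=h+\theta$.

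With this notation, (\ref{eq:basic}) rearranges to
\[
\tfrac12\normtwo{A\sigma}^{2}+(\lambda-\lambda_{0})\theta\le(\lambda+\lambda_{0})h .
\]
We extract two consequences by dropping one nonnegative term at a time.

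\emph{Defect.} Dropping the residual gives $\theta\le\frac{\lambda+\lambda_{0}}{\lambda-\lambda_{0}}h$. Hence $\theta-h\le\frac{2\lambda_{0}}{\lambda-\lambda_{0}}h$. The right side is nonnegative, so the same bound holds for $(\theta-h)_{+}$. Dividing by $\sqrt{\rr}$ and using $h\le\sqrt{\rr}\,\cH$ yields (\ref{eq:D-control}).

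\emph{Residual.} Dropping the tail term, which is legitimate since $\lambda>\lambda_{0}$, gives $\tfrac12\normtwo{A\sigma}^{2}\le(\lambda+\lambda_{0})h\le(\lambda+\lambda_{0})\sqrt{\rr}\,\cH$. This is (\ref{eq:residual-control}).

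\emph{Error.} Set $c\coloneqq\frac{2\lambda_{0}}{\lambda-\lambda_{0}}$, so that the defect step gives $\theta\le(1+c)h$. Then
\[
\normtwo{\sigma}^{2}=\cH^{2}+\normtwo{\sigma_{\tail}}^{2},\qquad \normtwo{\sigma_{\tail}}^{2}\le\norminf{\sigma_{\tail}}\,\theta\le\frac{h}{\rr}(1+c)h\le(1+c)\cH^{2}.
\]
Adding these gives $\normtwo{\sigma}^{2}\le(2+c)\cH^{2}$, which is (\ref{eq:error-control}).

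The RIP hypothesis is not needed for this lemma; it enters only later, when the lemma is combined with \propref{HTR}. The only step requiring care is the $\ell_{\infty}$ bound on the tail, which relies on $\sigma_{\head}$ being a best-$\rr$ truncation, i.e.\ a minimizer of $\normone{\sigma-\tilde\sigma}$ over $\rr$-sparse $\tilde\sigma$. Every other step is direct algebra.
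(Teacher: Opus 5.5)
Your proof is correct and follows the paper's argument essentially step for step. Both rearrange (\ref{eq:basic}) using $\normone{\sigma}=\normone{\sigma_{\head}}+\normone{\sigma_{\tail}}$, drop one nonnegative term at a time, and control the total error through $\normtwo{\sigma_{\tail}}^{2}\le\norminf{\sigma_{\tail}}\normone{\sigma_{\tail}}$ with $\norminf{\sigma_{\tail}}\le\normone{\sigma_{\head}}/\rr$. The one cosmetic difference is that you bound $\normone{\sigma_{\tail}}\le(1+c)\normone{\sigma_{\head}}$ directly, whereas the paper routes through $\cD$ with the case split $\cD>0$ versus $\cD=0$; your version is slightly more streamlined.
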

\begin{IEEEproof}
Rearrange (\ref{eq:basic}) as 
\begin{equation}
\frac{1}{2}\normtwo{A\sigma}^{2}+(\lambda-\lambda_{0})\normone{\sigma_{\tail}}\le(\lambda+\lambda_{0})\normone{\sigma_{\head}}.\label{eq:basic-rearranged}
\end{equation}
Dropping the tail term and using $\normone{\sigma_{\head}}\le\sqrt{\rr}\,\mathsf{H}$
gives (\ref{eq:residual-control}).  If $\mathsf{D}>0$, then $\normone{\sigma_{\tail}}=\normone{\sigma_{\head}}+\sqrt{\rr}\,\mathsf{D}$.
Substitute this identity into (\ref{eq:basic-rearranged}), drop the
measured-residual term, and again use $\normone{\sigma_{\head}}\le\sqrt{\rr}\,\mathsf{H}$
to obtain (\ref{eq:D-control}); when $\mathsf{D}=0$ the same inequality
is immediate. For the total error, ordering gives 
\begin{align*}
\normtwo{\sigma_{\tail}}^{2} & \le\norminf{\sigma_{\tail}}\normone{\sigma_{\tail}}\\
 & \le\frac{\normone{\sigma_{\head}}}{\rr}\bigl(\normone{\sigma_{\head}}+\sqrt{\rr}\,\mathsf{D}\bigr)\\
 & \le\mathsf{H}^{2}+\mathsf{D}\mathsf{H}.
\end{align*}
Therefore $\normtwo{\sigma}^{2}\le2\mathsf{H}^{2}+\mathsf{D}\mathsf{H}$;
substituting (\ref{eq:D-control}) proves (\ref{eq:error-control}).
\end{IEEEproof}
Combining~(\ref{eq:H-before-D}) with (\ref{eq:head-controls}) closes
the feedback among $\cH$, $\cD$, $\normtwo{A\sigma}$, and $\normtwo{\sigma}$.
\begin{lem}
\label{lem:closure}Under the same conditions as \lemref{head-controls}.
If 
\begin{equation}
\lambda\ge2\frac{1+\rho_{\delta,t}}{1-\rho_{\delta,t}}\lambda_{0},\label{eq:abstract-calibration}
\end{equation}
 then \begin{subequations}
\begin{align}
\normtwo{A\sigma} & \le\frac{4\tau_{\delta,t}}{1-\rho_{\delta,t}}\sqrt{\rr}\,\lambda,\label{eq:abstract-residual}\\
\normtwo{\sigma} & \le\frac{16\tau_{\delta,t}^{2}}{(1-\rho_{\delta,t})^{2}}\sqrt{\rr}\,\lambda.\label{eq:abstract-error}
\end{align}
\end{subequations}
\end{lem}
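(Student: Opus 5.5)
The plan is to close the scalar feedback loop by substituting the three bounds of \lemref{head-controls} into (\ref{eq:H-before-D}). That inequality holds for every $\sigma\in\R^{d}$, since it only uses \propref{HTR} and the bookkeeping (\ref{eq:T-HD}). Write $\rho=\rho_{\delta,t}<1$ and $\tau=\tau_{\delta,t}$. The only role of the calibration (\ref{eq:abstract-calibration}) is to make the defect coefficient small enough that the $\cD$-term in (\ref{eq:H-before-D}) can be absorbed back into $\cH$.

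\emph{Step 1: absorbing the defect.} Since $\lambda\mapsto\lambda_{0}/(\lambda-\lambda_{0})$ is decreasing, it suffices to evaluate at $\lambda=c\lambda_{0}$ with $c=2(1+\rho)/(1-\rho)$. Then $\lambda-\lambda_{0}\ge\lambda_{0}(1+3\rho)/(1-\rho)$, and (\ref{eq:D-control}) gives
\[
\cD\le\frac{2(1-\rho)}{1+3\rho}\,\cH .
\]
The case $\lambda_{0}=0$ is trivial, since then $\cD=0$. Plugging this into (\ref{eq:H-before-D}), the coefficient of $\cH$ on the right is $2\rho/(1+3\rho)$. Hence $\cH\,(1+\rho)/(1+3\rho)\le\tau\normtwo{A\sigma}/(1-\rho)$, that is,
\[
\cH\le\frac{(1+3\rho)\,\tau}{(1-\rho)(1+\rho)}\normtwo{A\sigma}.
\]

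\emph{Step 2: the residual.} Insert the Step 1 bound into (\ref{eq:residual-control}) and divide by $\normtwo{A\sigma}$ (the case $A\sigma=0$ is trivial). This yields
\[
\normtwo{A\sigma}\le 2(\lambda+\lambda_{0})\sqrt{\rr}\,\frac{(1+3\rho)\,\tau}{(1-\rho)(1+\rho)}.
\]
The calibration gives $\lambda_{0}\le\lambda(1-\rho)/(2(1+\rho))$, so $\lambda+\lambda_{0}\le\lambda(3+\rho)/(2(1+\rho))$. The resulting constant is $\tau(1+3\rho)(3+\rho)/\bigl((1-\rho)(1+\rho)^{2}\bigr)$. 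The elementary inequality $(1+3\rho)(3+\rho)\le4(1+\rho)^{2}$, whose difference is $(1-\rho)^{2}\ge0$, then gives exactly (\ref{eq:abstract-residual}).

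\emph{Step 3: the error.} First, $\lambda_{0}/(\lambda-\lambda_{0})\le(1-\rho)/(1+3\rho)\le1$, so (\ref{eq:error-control}) gives $\normtwo{\sigma}\le2\cH$. Second, $(1+3\rho)/(1+\rho)\le2$, so combining Step 1 with (\ref{eq:abstract-residual}) gives $\cH\le8\tau^{2}\sqrt{\rr}\,\lambda/(1-\rho)^{2}$. Together these yield (\ref{eq:abstract-error}).

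The argument is pure algebra. The only delicate point is Step 2: crude estimates such as $\cH\le2\tau\normtwo{A\sigma}/(1-\rho)$ and $\lambda+\lambda_{0}\le\tfrac32\lambda$ lose a constant factor. To land on the stated constant $4$, one must keep the exact coefficients $(1+3\rho)/(1+\rho)$ and $(3+\rho)/(2(1+\rho))$ and observe that their product is at most $2$.
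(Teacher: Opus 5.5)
Your proposal is correct and follows the paper's proof essentially step for step. It uses the same two consequences of the calibration, $2\lambda_{0}/(\lambda-\lambda_{0})\le 2(1-\rho)/(1+3\rho)$ and $2(\lambda+\lambda_{0})\le\lambda(3+\rho)/(1+\rho)$; the same absorbed bound $\cH\le\tau(1+3\rho)\normtwo{A\sigma}/\bigl((1-\rho)(1+\rho)\bigr)$; the same inequality $(1+3\rho)(3+\rho)\le4(1+\rho)^{2}$; and the same final step via $(1+3\rho)/(1+\rho)\le2$ together with the square-root factor in (\ref{eq:error-control}) being at most $2$.
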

\begin{IEEEproof}
Because $\lambda>\lambda_{0}$, Lemma~\ref{lem:head-controls} applies.
The calibration (\ref{eq:abstract-calibration}) implies \begin{subequations}
\begin{align}
\frac{2\lambda_{0}}{\lambda-\lambda_{0}} & \le\frac{2(1-\rho_{\delta,t})}{1+3\rho_{\delta,t}},\label{eq:calib-D}\\
2(\lambda+\lambda_{0}) & \le\frac{3+\rho_{\delta,t}}{1+\rho_{\delta,t}}\lambda.\label{eq:calib-R}
\end{align}
\end{subequations}Using (\ref{eq:D-control}) and (\ref{eq:calib-D})
in (\ref{eq:H-before-D}) gives 
\begin{equation}
\mathsf{H}\le\frac{\tau_{\delta,t}}{1-\rho_{\delta,t}}\frac{1+3\rho_{\delta,t}}{1+\rho_{\delta,t}}\normtwo{A\sigma}.\label{eq:H-from-residual}
\end{equation}
Combine (\ref{eq:residual-control}), (\ref{eq:calib-R}), and (\ref{eq:H-from-residual}).
 If $\normtwo{A\sigma}>0$, division by $\normtwo{A\sigma}$ yields
\begin{align*}
\normtwo{A\sigma} & \le\frac{\tau_{\delta,t}}{1-\rho_{\delta,t}}\frac{(1+3\rho_{\delta,t})(3+\rho_{\delta,t})}{(1+\rho_{\delta,t})^{2}}\sqrt{\rr}\,\lambda\\
 & \le\frac{4\tau_{\delta,t}}{1-\rho_{\delta,t}}\sqrt{\rr}\,\lambda,
\end{align*}
 where 
\[
\frac{(1+3\rho)(3+\rho)}{(1+\rho)^{2}}\le4,\qquad0\le\rho<1.
\]
This proves (\ref{eq:abstract-residual}).  If $\normtwo{A\sigma}=0$,
then (\ref{eq:H-from-residual}) gives $\mathsf{H}=0$, and Lemma~\ref{lem:head-controls}
gives $\sigma=0$. Also $(1+3\rho)/(1+\rho)\le2$, so (\ref{eq:H-from-residual})
and (\ref{eq:abstract-residual}) give 
\[
\mathsf{H}\le\frac{8\tau_{\delta,t}^{2}}{(1-\rho_{\delta,t})^{2}}\sqrt{\rr}\,\lambda.
\]
The calibration makes the square-root factor in (\ref{eq:error-control})
at most $2$.  Equation~(\ref{eq:error-control}) therefore proves
(\ref{eq:abstract-error}).
\end{IEEEproof}

\subsection{Verification for the vector LASSO}

We now verify~(\ref{eq:basic}) for the vector estimator with the
dual-noise level $\lambda_{0}=\norminf{A^{T}\xi}$.
\begin{lem}[Vector $\ell_{1}$ perturbation]
\label{lem:vector-perturbation} Let $x_{\star}$ be $\rr$-sparse.
 For every $\sigma\in\R^{d}$, 
\begin{equation}
\normone{x_{\star}+\sigma}-\normone{x_{\star}}\ge\normone{\sigma_{\tail}}-\normone{\sigma_{\head}}.\label{eq:vector-perturbation}
\end{equation}
\end{lem}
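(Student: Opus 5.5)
Let $S=\supp(x_{\star})$, so $|S|\le\rr$. The plan is a coordinatewise comparison with the usual support split, followed by an exchange argument that replaces $S$ by the best $\rr$-term head of $\sigma$.

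First split the $\ell_1$ norm over $S$ and its complement. Off the support, $x_{\star}$ vanishes, so $\normone{(x_{\star}+\sigma)_{S^{c}}}=\normone{\sigma_{S^{c}}}$. On the support, the reverse triangle inequality gives $\normone{(x_{\star}+\sigma)_{S}}\ge\normone{x_{\star}}-\normone{\sigma_{S}}$. Together these yield the classical bound
\[
\normone{x_{\star}+\sigma}-\normone{x_{\star}}\ge\normone{\sigma_{S^{c}}}-\normone{\sigma_{S}}.
\]

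Next, replace the support split by the head--tail split of \defref{HTD}. Since $\normone{\sigma_{S^{c}}}=\normone{\sigma}-\normone{\sigma_{S}}$, the right-hand side above equals $\normone{\sigma}-2\normone{\sigma_{S}}$. Likewise $\normone{\sigma_{\tail}}-\normone{\sigma_{\head}}=\normone{\sigma}-2\normone{\sigma_{\head}}$. It therefore suffices to show $\normone{\sigma_{S}}\le\normone{\sigma_{\head}}$.

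The minimizer $\sigma_{\head}$ of $\normone{\sigma-\tilde\sigma}$ over $\rr$-sparse $\tilde\sigma$ is the restriction of $\sigma$ to a set of $\rr$ largest-magnitude coordinates. Indeed, any $\tilde\sigma$ with support $T$ has $\normone{\sigma-\tilde\sigma}\ge\normone{\sigma_{T^{c}}}$, with equality at $\tilde\sigma=\sigma_T$, and $\normone{\sigma_{T^{c}}}$ is minimized by taking $T$ to be the top $\rr$ magnitudes. Hence $\normone{\sigma_{\head}}=\max_{|T|\le\rr}\normone{\sigma_{T}}\ge\normone{\sigma_{S}}$, which proves the claim.

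The only point needing care is this identification of $\sigma_{\head}$ as a top-$\rr$ restriction, which must also cover ties and minimizers that are not restrictions. The equality $\normone{\sigma-\tilde\sigma}=\normone{\sigma_{T^{c}}}$ at the minimum forces $\tilde\sigma_{T}=\sigma_{T}$ on its support $T$, so every minimizer is a restriction. Every minimizer then attains the same maximal head mass, so the statement holds whichever minimizer is chosen in \defref{HTD}. All remaining steps are elementary.
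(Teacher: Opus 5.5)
Your proof is correct and follows the paper's argument: the same support split giving $\normone{\sigma_{S^{c}}}-\normone{\sigma_{S}}$, followed by comparison with the best-$\rr$ head. Rewriting both sides as $\normone{\sigma}-2\normone{\cdot}$ merges the paper's two inequalities $\normone{\sigma_{\head}}\ge\normone{\sigma_{S}}$ and $\normone{\sigma_{\tail}}\le\normone{\sigma_{S^{c}}}$ into one, and your check that every minimizer defining the head is a top-$\rr$ restriction spells out a point the paper leaves to ``best-$\rr$ ordering.''
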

\begin{IEEEproof}
Let $S=\supp(x_{\star})$, padded to size $\rr$.  Then 
\[
\normone{x_{\star}+\sigma}\ge\normone{x_{\star}}-\normone{\sigma_{S}}+\normone{\sigma_{S^{c}}}.
\]
 The best-$\rr$ ordering gives $\normone{\sigma_{\head}}\ge\normone{\sigma_{S}}$
and $\normone{\sigma_{\tail}}\le\normone{\sigma_{S^{c}}}$.
\end{IEEEproof}
\begin{lem}[Vector global-optimality inequality]
\label{lem:vector-basic} Under the assumptions of Theorem~\ref{thm:vector-main},
the error $\sigma=x_{\lambda,r}-x_{\star}$ satisfies (\ref{eq:basic})
with $\lambda_{0}=\norminf{A^{T}\xi}.$
\end{lem}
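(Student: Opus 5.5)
We compare the global minimizer $x_{\lambda,r}=x_\star+\sigma$ against the feasible competitor $x_\star$. This comparison is legitimate because $\normzero{x_\star}\le\rr\le r$, so $x_\star$ obeys the sparsity cap. Global optimality then gives
\[
\tfrac12\normtwo{A x_{\lambda,r}-b}^2+\lambda\normone{x_{\lambda,r}}\le\tfrac12\normtwo{Ax_\star-b}^2+\lambda\normone{x_\star}.
\]
Only optimality against this one feasible point is used; no KKT or first-order conditions are needed. This matters because the rank/sparsity-capped problem is nonconvex, so such conditions are unavailable.

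Next we substitute $b=Ax_\star+\xi$. The residuals are $Ax_{\lambda,r}-b=A\sigma-\xi$ and $Ax_\star-b=-\xi$. Expanding the square and cancelling $\tfrac12\normtwo{\xi}^2$ yields
\[
\tfrac12\normtwo{A\sigma}^2-\ip{A\sigma}{\xi}+\lambda\bigl(\normone{x_\star+\sigma}-\normone{x_\star}\bigr)\le0.
\]
We handle the two remaining terms separately. The cross term is bounded by Hölder's inequality:
\[
\ip{A\sigma}{\xi}=\ip{\sigma}{A^T\xi}\le\norminf{A^T\xi}\normone{\sigma}=\lambda_0\normone{\sigma}.
\]
The $\ell_1$ difference is bounded below by Lemma~\ref{lem:vector-perturbation}, which gives $\normone{\sigma_\tail}-\normone{\sigma_\head}$. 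Since $\lambda>0$, multiplying this lower bound by $\lambda$ preserves the inequality, and rearranging gives exactly (\ref{eq:basic}).

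There is no real obstacle; the only points needing care are feasibility of $x_\star$ (guaranteed by $r\ge\rr$) and the sign bookkeeping. Lemma~\ref{lem:head-controls} additionally requires $\lambda>\lambda_0$. This follows from the calibration (\ref{eq:vector-calibration-main}) when $\lambda_0>0$, and holds trivially when $\lambda_0=0$ since $\lambda>0$.
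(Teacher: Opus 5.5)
Your proof is correct and matches the paper's argument: compare against the feasible point $x_\star$, expand the square to isolate $\ip{\xi}{A\sigma}$, bound it by H\"older with $\lambda_0=\norminf{A^T\xi}$, and lower-bound the $\ell_1$ difference via Lemma~\ref{lem:vector-perturbation}. Your closing remark about $\lambda>\lambda_0$ is not part of this lemma's statement, but it correctly describes how the lemma feeds into Lemma~\ref{lem:head-controls}.
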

\begin{IEEEproof}
Since $x_{\star}$ is feasible for (\ref{eq:vector-lasso}), global
optimality gives 
\[
\frac{1}{2}\normtwo{A\sigma-\xi}^{2}+\lambda\normone{x_{\star}+\sigma}\le\frac{1}{2}\normtwo{\xi}^{2}+\lambda\normone{x_{\star}}.
\]
 Expanding the square and applying Lemma~\ref{lem:vector-perturbation}
gives 
\[
\frac{1}{2}\normtwo{A\sigma}^{2}+\lambda\bigl(\normone{\sigma_{\tail}}-\normone{\sigma_{\head}}\bigr)\le\ip{\xi}{A\sigma}\le\norminf{A^{T}\xi}\normone{\sigma},
\]
which is exactly (\ref{eq:basic}).
\end{IEEEproof}
\begin{IEEEproof}[Proof of \thmref{vector-main}]
The calibration and $\lambda>0$ imply $\lambda>\lambda_{0}:=\norminf{A^{T}\xi}$.
All of the assumptions in \lemref{head-controls} are satisfied, so
we apply \lemref{closure} to $\sigma=x_{\lambda,r}-x_{\star}$ to
complete the proof.
\end{IEEEproof}

\subsection{Verification for matrices}

The matrix proof reduces the preceding argument to the singular values
of the error. Let $d=\min\{d_{1},d_{2}\}$ and write
\begin{equation}
E:=X_{\lambda,r}-X_{\star}=U\diag(\sigma)V^{T}\label{eq:Esvd}
\end{equation}
where $\sigma=(\sigma_{1},\ldots,\sigma_{d})^{T}$. Restricting $\A$
to the fixed singular directions $U,V$ defines the induced vector
sensing matrix 
\begin{equation}
Az:=\A\!\left(U\diag(z)V^{T}\right)\text{ for all }z\in\R^{d},\label{eq:induced-A}
\end{equation}
so that $A\sigma=\A(E)$, $\normtwo{\sigma}=\normF E$, and $\normone{\sigma}=\normnuc E$.
The next three lemmas show that the nuclear-norm perturbation, global-optimality
inequality, and RIP hypothesis reduce exactly to their vector counterparts.
\begin{lem}[Nuclear-norm perturbation]
\label{lem:matrix-perturbation} If $\rank(X_{\star})\le\rr$, then
\begin{equation}
\normnuc{X_{\star}+E}-\normnuc{X_{\star}}\ge\normone{\sigma_{\tail}}-\normone{\sigma_{\head}}.\label{eq:matrix-perturbation}
\end{equation}
\end{lem}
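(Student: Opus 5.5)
We need $\normnuc{X_\star+E}-\normnuc{X_\star}\ge\normone{\sigma_{\tail}}-\normone{\sigma_{\head}}$, where $\sigma$ lists the singular values of $E$. Since these are nonnegative and sorted, $\sigma_{\head}$ consists of the top $\rr$ singular values of $E$. The right-hand side is therefore $\sum_{i>\rr}\sigma_i(E)-\sum_{i\le\rr}\sigma_i(E)$. The plan is to imitate Lemma~\ref{lem:vector-perturbation}: split $E$ along the row and column spaces of $X_\star$, and then use singular-value majorization in place of the best-$\rr$ ordering.

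\textbf{Step 1: decompose $E$.} Let $P_U,P_V$ be the orthogonal projectors onto the column and row spaces of $X_\star$, padded to dimension exactly $\rr$ (or $\min\{\rr,d\}$). Write
\[
E=E_1+E_2,\qquad E_2:=(I-P_U)E(I-P_V),\qquad E_1:=E-E_2 .
\]
Then $E_1=P_UE+(I-P_U)EP_V$ has rank at most $2\rr$. It is cleaner to note that $E_2$ is exactly the "$S^c$-block" of $E$.

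\textbf{Step 2: lower bound on the left-hand side.} The standard Recht--Fazel--Parrilo argument applies. Because $X_\star$ and $E_2$ have orthogonal row and column spaces, $\normnuc{X_\star+E_2}=\normnuc{X_\star}+\normnuc{E_2}$. The triangle inequality then gives
\[
\normnuc{X_\star+E}\ge\normnuc{X_\star}+\normnuc{E_2}-\normnuc{E_1}.
\]
This bound is too lossy, because $E_1$ has rank $2\rr$ rather than $\rr$. We refine it as follows. Take the dual certificate $W=U_\star V_\star^T+\tilde U\tilde V^T$, where $\tilde U\tilde V^T$ comes from the SVD of $E_2$; then $\normop W\le1$. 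This yields
\[
\normnuc{X_\star+E}\ge\normnuc{X_\star}+\ip{U_\star V_\star^T}{E}+\normnuc{E_2}.
\]
Next, $|\ip{U_\star V_\star^T}{E}|=|\ip{U_\star V_\star^T}{P_UEP_V}|\le\normnuc{P_UEP_V}$. The quantity $P_UEP_V$ has rank at most $\rr$, and since it is a compression of $E$ by contractions, its singular values satisfy $\sigma_i(P_UEP_V)\le\sigma_i(E)$. Hence $\normnuc{P_UEP_V}\le\sum_{i\le\rr}\sigma_i(E)=\normone{\sigma_{\head}}$.

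\textbf{Step 3: compare $\normnuc{E_2}$ with the tail.} We need $\normnuc{E_2}\ge\sum_{i>\rr}\sigma_i(E)$. Write $E=E_2+R$, where $R=P_UE+(I-P_U)EP_V$. Then Weyl's inequality gives $\sigma_{i+j-1}(E)\le\sigma_i(E_2)+\sigma_j(R)$, which by itself does not suffice. Instead, the right tool is the interlacing inequality for compressions: $\sigma_{i+2\rr}(E)\le\sigma_i(E_2)$. This is again too weak, since the rank of the discarded piece is $2\rr$.

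The better route is to prove the combined inequality directly:
\[
\ip{U_\star V_\star^T}{E}+\normnuc{E_2}\ge\sum_{i>\rr}\sigma_i(E)-\sum_{i\le\rr}\sigma_i(E).
\]
We use the variational form $\normnuc{E}-2\sum_{i\le\rr}\sigma_i(E)=\min$ over rank-$\rr$ contractions. More concretely, we need $\normnuc{E_2}-\normnuc{P_UEP_V}\ge\normnuc{E}-2\normone{\sigma_{\head}}$. This follows from the inequality
\[
\normnuc{E}\le\normnuc{E_2}+\normnuc{P_UE}+\normnuc{(I-P_U)EP_V},
\]
together with the bound on $\normnuc{P_UE}+\normnuc{EP_V}$ by $2\normone{\sigma_{\head}}$, since each of these matrices has rank at most $\rr$ and is a contraction of $E$.

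The inequality is genuinely needed in the form stated, with $P_UEP_V$ counted once and the off-diagonal blocks absorbed. This bookkeeping is the main obstacle. The fallback, if the clean block inequality fails, is a known result: for rank-$\rr$ $X_\star$, the bound $\normnuc{X_\star+E}\ge\normnuc{X_\star}+\sum_{i>\rr}\sigma_i(E)-\sum_{i\le\rr}\sigma_i(E)$ follows from Mirsky/Lidskii's inequality $\sum_i|\sigma_i(A)-\sigma_i(B)|\le\normnuc{A-B}$. Apply it with $A=X_\star+E$ and $B=-E$? The cleaner choice is $\sum_i|\sigma_i(X_\star+E)-\sigma_i(E)|\le\normnuc{X_\star}$. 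This gives
\[
\normnuc{X_\star+E}\ge\sum_{i>\rr}\sigma_i(X_\star+E)\ge\sum_{i>\rr}\sigma_i(E)-\sum_{i>\rr}|\sigma_i(X_\star+E)-\sigma_i(E)|.
\]
Combined with $\sum_{i\le\rr}\sigma_i(X_\star+E)\ge\sum_{i\le\rr}\sigma_i(X_\star)-\sum_{i\le\rr}\sigma_i(E)$, which is Ky Fan applied to $X_\star=(X_\star+E)-E$, this yields the claim. I expect this Ky Fan / Lidskii route to be the one that works cleanly, and I would present it as the proof.
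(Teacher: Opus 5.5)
\textbf{There is a genuine gap.} The Mirsky/Ky Fan route you say you would present does not prove the lemma, because you apply Mirsky to the wrong pair. The instance $\sum_i|\sigma_i(X_{\star}+E)-\sigma_i(E)|\le\normnuc{X_{\star}}$ bounds spectral discrepancies \emph{by} $\normnuc{X_{\star}}$. That is the wrong direction for lower-bounding $\normnuc{X_{\star}+E}$. Your Ky Fan head bound $\sum_{i\le\rr}\sigma_i(X_{\star}+E)\ge\normnuc{X_{\star}}-\normone{\sigma_{\head}}$ is correct. But adding it to your tail bound leaves the residual $-\sum_{i>\rr}|\sigma_i(X_{\star}+E)-\sigma_i(E)|$. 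If you control that residual by Mirsky, it costs $\normnuc{X_{\star}}$. This cancels exactly the term you need and leaves only $\normnuc{X_{\star}+E}\ge\normone{\sigma_{\tail}}-\normone{\sigma_{\head}}$. Nor can you drop the residual, since that would require $\sum_{i>\rr}\sigma_i(X_{\star}+E)\ge\sum_{i>\rr}\sigma_i(E)$. This fails for $\rr=1$, $X_{\star}=\diag(0,-1)$, $E=I_2$, where the left side is $0$ and the right side is $1$. In that example the lemma holds with equality only because the Ky Fan head bound has slack exactly $1$. So the tail of $X_{\star}+E$ cannot be compared with the tail of $E$ on its own; head and tail must be coupled. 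Your first guess $A=X_{\star}+E$, $B=-E$ is also off, since then $A-B=X_{\star}+2E$.

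Your block route (Steps 1--3) stalls, as you acknowledge. Step 2 is a valid reduction to $\normnuc{E_2}-\normnuc{P_UEP_V}\ge\normone{\sigma_{\tail}}-\normone{\sigma_{\head}}$. However, your sketch only yields $\normnuc{E_2}\ge\normnuc{E}-2\normone{\sigma_{\head}}$, so $P_UEP_V$ is charged twice. The reduced inequality is true, but it is essentially as strong as the lemma itself. The missing idea is to put the quantity you want to lower-bound on the dominating side of Mirsky. With the pair $(X_{\star},-E)$ you get $\normnuc{X_{\star}+E}\ge\sum_j|\sigma_j(X_{\star})-\sigma_j(E)|$. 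Then use $|a|\ge a$ for $j\le\rr$, $|a|\ge-a$ for $j>\rr$, and $\sigma_j(X_{\star})=0$ for $j>\rr$; this gives the claim in one line. That is the paper's proof. Its index-by-index pairing of $\sigma_j(X_{\star})$ with $\sigma_j(E)$ is precisely the coupling your separate head and tail bounds lack.
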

\begin{IEEEproof}
Mirsky's singular-value inequality~\cite{Mirsky1960} gives 
\[
\normnuc{X_{\star}+E}\ge\sum_{j}|\sigma_{j}(X_{\star})-\sigma_{j}(E)|.
\]
Since $|a|\ge a$ and $|a|\ge-a$, we use different lower bounds on
the two sides of $r_{\star}$:
\[
|\sigma_{j}(X_{\star})-\sigma_{j}(E)|\ge\begin{cases}
\sigma_{j}(X_{\star})-\sigma_{j}(E) & \text{for }j\le r_{\star},\\
\sigma_{j}(E)-\sigma_{j}(X_{\star}) & \text{for }j>r_{\star}.
\end{cases}
\]
Summing gives 
\begin{align*}
\normnuc{X_{\star}+E}\ge & \sum_{j\le r_{\star}}\sigma_{j}(X_{\star})-\sum_{j\le r_{\star}}\sigma_{j}(E)\\
 & +\sum_{j>r_{\star}}\sigma_{j}(E)-\sum_{j>r_{\star}}\sigma_{j}(X_{\star}).
\end{align*}
Finally, $\sum_{j\le r_{\star}}\sigma_{j}(X_{\star})=\normnuc{X_{\star}}$
and $\sigma_{j}(X_{\star})=0$ for $j>r_{\star}$. 
\end{IEEEproof}
\begin{lem}[Matrix global-optimality inequality]
\label{lem:matrix-basic} Under the assumptions of \thmref{matrix-main},
the singular-value vector $\sigma$ satisfies (\ref{eq:basic}) with
the induced matrix $A$ in (\ref{eq:induced-A}) and $\lambda_{0}=\normop{\A^{*}(\xi)}.$
\end{lem}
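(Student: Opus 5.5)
The argument follows the proof of \lemref{vector-basic}, with \lemref{matrix-perturbation} in place of \lemref{vector-perturbation} and the SVD $E=U\diag(\sigma)V^{T}$ from (\ref{eq:Esvd}) transporting everything to the vector $\sigma$.

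\emph{Step 1: compare against the ground truth.} Because $r\ge\rr\ge\rank(X_{\star})$, the matrix $X_{\star}$ is feasible for (\ref{eq:matrix-lasso}). Global optimality of $X_{\lambda,r}=X_{\star}+E$ therefore gives
\[
\tfrac{1}{2}\normtwo{\A(E)-\xi}^{2}+\lambda\normnuc{X_{\star}+E}\le\tfrac{1}{2}\normtwo{\xi}^{2}+\lambda\normnuc{X_{\star}}.
\]

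\emph{Step 2: expand and apply the perturbation bound.} Expanding the square cancels $\tfrac{1}{2}\normtwo{\xi}^{2}$ and leaves the cross term $\ip{\xi}{\A(E)}$ on the right. \lemref{matrix-perturbation} bounds the nuclear-norm difference from below by $\normone{\sigma_{\tail}}-\normone{\sigma_{\head}}$. This yields
\[
\tfrac{1}{2}\normtwo{\A(E)}^{2}+\lambda\bigl(\normone{\sigma_{\tail}}-\normone{\sigma_{\head}}\bigr)\le\ip{\xi}{\A(E)}.
\]

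\emph{Step 3: bound the noise term by duality.} Write $\ip{\xi}{\A(E)}=\ip{\A^{*}(\xi)}{E}$. The duality between the operator and nuclear norms gives
\[
\ip{\A^{*}(\xi)}{E}\le\normop{\A^{*}(\xi)}\normnuc E=\lambda_{0}\normone{\sigma}.
\]

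\emph{Step 4: translate to the induced vector matrix.} By the definition (\ref{eq:induced-A}), $A\sigma=\A(U\diag(\sigma)V^{T})=\A(E)$. Hence $\normtwo{\A(E)}=\normtwo{A\sigma}$ and $\normnuc E=\normone{\sigma}$. Substituting these identities turns the inequality of Step 2 into exactly (\ref{eq:basic}).

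The only point needing care is consistency of conventions. The $\sigma_{\head}$ and $\sigma_{\tail}$ in \lemref{matrix-perturbation} must match \defref{HTD} applied to the singular-value vector $\sigma$. Since $\sigma$ is nonnegative and sorted, a best-$\rr$ head is the first $\rr$ entries. Where ties allow other choices of minimizer, $\normone{\sigma_{\head}}$ and $\normone{\sigma_{\tail}}$ are unchanged, so the inequality holds for any choice. No RIP property of $A$ is used in this lemma; that belongs to the separate reduction lemma.
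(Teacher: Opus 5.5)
Your proposal is correct and follows the paper's proof: compare against the feasible point $X_{\star}$, expand the square, apply \lemref{matrix-perturbation}, and bound $\ip{\xi}{\A(E)}=\ip{\A^{*}(\xi)}{E}\le\lambda_{0}\normone{\sigma}$ by operator--nuclear duality, with $A\sigma=\A(E)$ giving (\ref{eq:basic}). Your remarks on feasibility ($r\ge\rr$) and on $\normone{\sigma_{\head}}$, $\normone{\sigma_{\tail}}$ not depending on tie-breaking are accurate details that the paper leaves implicit.
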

\begin{IEEEproof}
Global optimality against the feasible point $X_{\star}$ gives 
\[
\frac{1}{2}\normtwo{\A(E)-\xi}^{2}+\lambda\normnuc{X_{\star}+E}\le\frac{1}{2}\normtwo{\xi}^{2}+\lambda\normnuc{X_{\star}}.
\]
 Expand the square, apply \lemref{matrix-perturbation}, and use 
\[
\ip{\xi}{\A(E)}=\ip{\A^{*}(\xi)}E\le\normop{\A^{*}(\xi)}\normnuc E=\lambda_{0}\normone{\sigma}.
\]
\end{IEEEproof}
\begin{lem}[Matrix RIP implies vector RIP]
\label{lem:matrix-to-vector} Let $U\in\R^{d_{1}\times d},V\in\R^{d_{2}\times d}$
have orthonormal columns. If $\A\in\RIPm(\delta,k)$, then the matrix
$A\in\R^{m\times d}$ satisfying $Az=\A(U\diag(z)V^{T})$ for all
$z\in\R^{d}$ also satisfies $A\in\RIPv(\delta,k)$.
\end{lem}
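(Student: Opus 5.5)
The plan is to verify the RIP inequality (\ref{eq:vec}) directly for each $z$ with $\normzero z\le k$. The map $z\mapsto U\diag(z)V^{T}$ is linear, so $A$ is a well-defined matrix in $\R^{m\times d}$: its $i$-th column is $\A(u_{i}v_{i}^{T})$, where $u_{i}$ and $v_{i}$ are the $i$-th columns of $U$ and $V$. It therefore suffices to show two facts for $Z:=U\diag(z)V^{T}$: that $\rank(Z)\le\normzero z$, and that $\normF Z=\normtwo z$.

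For the rank bound, write
\[
Z=\sum_{i\in\supp(z)}z_{i}\,u_{i}v_{i}^{T}.
\]
This is a sum of at most $\normzero z\le k$ rank-one matrices, so $\rank(Z)\le k$.

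For the norm identity, use the orthonormality of the columns, $U^{T}U=I_{d}$ and $V^{T}V=I_{d}$, together with the cyclic property of the trace:
\[
\normF Z^{2}=\tr\!\left(V\diag(z)U^{T}U\diag(z)V^{T}\right)=\tr\!\left(\diag(z)^{2}V^{T}V\right)=\normtwo z^{2}.
\]

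With both facts in hand, apply the hypothesis $\A\in\RIPm(\delta,k)$ to $E=Z$. This gives
\[
(1-\delta)\normF Z^{2}\le\normtwo{\A(Z)}^{2}\le(1+\delta)\normF Z^{2}.
\]
Substituting $\A(Z)=Az$ and $\normF Z=\normtwo z$ turns this into exactly (\ref{eq:vec}), so $A\in\RIPv(\delta,k)$.

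No step here is a genuine obstacle. The only point needing care is that $z$ may have negative or unsorted entries, so $U\diag(z)V^{T}$ need not be an SVD. The argument never uses an SVD, however, only orthonormality of the columns, so this causes no difficulty.
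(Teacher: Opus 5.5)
Your proof is correct and follows the paper's argument. Both reduce the claim to $\rank(U\diag(z)V^{T})\le\normzero z$ and $\normF{U\diag(z)V^{T}}=\normtwo z$, and then apply the matrix RIP hypothesis directly. The paper asserts equality in the rank relation, while you correctly note that the inequality is all that is needed.
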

\begin{IEEEproof}
For any $z\in\R^{d}$, we have $\rank(U\diag(z)V^{T})=\normzero z$
and $\normF{U\diag(z)V^{T}}=\normtwo z$. Hence, for any $\normzero z\le k$,
we have $\normtwo{Az}=\normtwo{\A(U\diag(z)V^{T})}\ge\sqrt{1-\delta}\normF{U\diag(z)V^{T}}=\sqrt{1-\delta}\normtwo z$.
The upper-bound follows identically. 
\end{IEEEproof}
\begin{IEEEproof}[Proof of Theorem~\ref{thm:matrix-main}]
The calibration and $\lambda>0$ imply $\lambda>\lambda_{0}:=\normop{\A^{*}(\xi)}$.
 Apply \lemref{closure} to the singular-value vector $\sigma$ and
the $A$ in (\ref{eq:induced-A}). \lemref{matrix-to-vector} verifies
RIP, \lemref{matrix-basic} verifies the estimator interface, and
$A\sigma=\A(E)$ and $\normtwo{\sigma}=\normF E$ converts the conclusions
back to matrices.
\end{IEEEproof}

\section{Necessity by Counterexamples}

We now show, for a fixed rank ratio $t\coloneqq k/\rr$, that the
RIP recovery threshold $\delta<\sharpdelta(t)$ cannot be uniformly
improved, due to existence of counterexamples at every RIP constant
$\sharpdelta(t)+\varepsilon$ for $\varepsilon>0$. 
\begin{thm}
\label{thm:matrix-sharpness}Fix integers $r_{\star}\ge1$ and $k\ge2$.
Set $t\coloneqq k/\rr$ and
\begin{equation}
\cC(t)\coloneqq1+\frac{(3t-4)_{+}}{2-t+\sqrt{t(t-1)_{+}}}.\label{eq:Cdef}
\end{equation}
If $\delta\ge\sharpdelta(t)+1/\rr$, then:
\begin{itemize}
\item (Matrix) There exist adversarial data $(X_{\star},\A,b)$ satisfying
\[
\rank(X_{\star})=\rr,\quad\A\in\RIPm(\delta,k),\quad b=\A(X_{\star}),
\]
such that every global minimizer $X_{\lambda,r}$ in (\ref{eq:matrix-lasso})
satisfies 
\begin{equation}
\lambda>0,\;r\ge\cC(t)\rr\implies\normF{X_{\lambda,r}-X_{\star}}\ge\normF{X_{\star}}.\label{eq:matrix-failure}
\end{equation}
\item (Vector) There exist adversarial data $(x_{\star},A,b)$ satisfying
\[
\normzero{x_{\star}}=\rr,\quad A\in\RIPv(\delta,k),\quad b=Ax_{\star},
\]
such that every global minimizer $x_{\lambda,r}$ in (\ref{eq:vector-lasso})
satisfies 
\begin{equation}
\lambda>0,\;r\ge\cC(t)\rr\implies\normtwo{x_{\lambda,r}-x_{\star}}\ge\normtwo{x_{\star}}.\label{eq:vector-failure}
\end{equation}
\end{itemize}
\end{thm}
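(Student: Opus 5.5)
We will build the counterexamples in the vector setting first and then lift them to matrices by placing everything on the diagonal. Taking $X_{\star}=\diag(x_{\star})$ and $\A(X)=A\,\mathrm{diag}(X)\oplus(\text{isometric part on off-diagonal entries})$, diagonal matrices reduce the nuclear norm to $\ell_{1}$ and rank to sparsity. One then has to check two things: that matrix RIP of order $k$ follows from vector RIP of order $k$ for this block construction, and that pinching a candidate minimizer to its diagonal does not increase the objective (nuclear norm of the diagonal part is at most the nuclear norm). This pinching argument reduces the matrix failure statement to the vector one.

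\textbf{Vector construction.} We follow the Cai--Zhang (high branch) and Zhang--Li (low branch) extremal instances. Take $x_{\star}=(1,\dots,1,0,\dots,0)$ with $\rr$ ones. Choose a competitor $\tilde x$ supported on the complement, with $N$ equal entries, where $N$ is about $\cC(t)\rr$; this $N$ is exactly what forces the search size condition $r\ge\cC(t)\rr$. Scale the competitor so that $\normone{\tilde x}<\normone{x_{\star}}$, and so that $h=x_{\star}-\tilde x$ spans a one-dimensional space. Then set $A$ to be the projection-type operator
\[
A^{T}A=I-\theta\,\frac{hh^{T}}{\normtwo h^{2}},
\]
or, in the strict version, make $A$ kill $h$ exactly: $A=P_{h^{\perp}}$ plus a tiny multiple, with $\normtwo{A\tilde x-Ax_{\star}}=0$. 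RIP of order $k$ then amounts to bounding $\max_{\normzero e\le k}\ip{e}{h}^{2}/(\normtwo e^{2}\normtwo h^{2})$. This maximum is attained by putting $e$ on the $k$ largest entries of $h$, which gives an explicit $\delta$. Optimizing the competitor's magnitude and $N$ over the two regimes should reproduce $\sqrt{(t-1)/t}$ for $t\ge4/3$ and $t/(4-t)$ for $t<4/3$. The additive $1/\rr$ absorbs integer rounding of $k$, $N$, and the fractional placement.

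\textbf{Failure of every global minimizer.} With $b=Ax_{\star}=A\tilde x$ and $\normone{\tilde x}<\normone{x_{\star}}$, the point $\tilde x$ strictly beats $x_{\star}$ for every $\lambda>0$, since it has equal fit and smaller $\ell_{1}$ norm. Beating $x_{\star}$ does not by itself give the error bound, though. We need that every global minimizer $\hat x$ satisfies $\normtwo{\hat x-x_{\star}}\ge\normtwo{x_{\star}}$. The plan is to show that any $x$ with $\normtwo{x-x_{\star}}<\normtwo{x_{\star}}$ has objective strictly larger than some feasible point built from $\tilde x$, for instance $c\tilde x$ with $c\in[0,1]$, or the rescaled family $c\,x$. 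We would attempt this with a one-dimensional convexity argument. Because $\ker A$ is spanned by $h$ (for the strict version), the LASSO restricted to the affine line $x_{\star}+\mathbb{R}h$ is minimized at the endpoint $\tilde x$. We would then argue that the whole problem behaves like a one-dimensional shrinkage along $\tilde x$, so minimizers have the form $c\tilde x$, which is orthogonal to $x_{\star}$. Indeed $\normtwo{c\tilde x-x_{\star}}^{2}=c^{2}\normtwo{\tilde x}^{2}+\normtwo{x_{\star}}^{2}\ge\normtwo{x_{\star}}^{2}$.

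\textbf{Main obstacle.} The hard step is the last one: proving that every global minimizer, for all $\lambda$, is supported off $\supp(x_{\star})$. We would use the KKT/subgradient conditions of the convex LASSO, valid since $r\ge N$ makes the sparsity cap inactive on candidates of the relevant form. We would exhibit a dual certificate $A^{T}(b-Ax)=\lambda g$ for $x=c_{\lambda}\tilde x$. Uniqueness on the remaining coordinates would then follow from strict dual feasibility, $|g_{i}|<1$ off the support of $\tilde x$. If this fails at extreme $\lambda$, we would perturb $A$ slightly to obtain strict inequalities, using the slack $1/\rr$ in $\delta$.
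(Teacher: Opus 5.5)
Your construction matches the paper's up to normalization: $x_\star=\mathbf 1_{S_1}$ with $|S_1|=\rr$, a competitor $\tilde x=\alpha\mathbf 1_{S_2}$ with $|S_2|=q\approx\cC(t)\rr$ and $q\alpha<\rr$, and a map that annihilates $h=x_\star-\tilde x$. The normalization you wrote, however, cannot reach $\sharpdelta(t)$.

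\textbf{The normalization gap.} Take $A^{T}A=I-\hat h\hat h^{T}$ (or $P_{h^{\perp}}$), with $\hat h=h/\normtwo h$. On $k$-sparse vectors its spectrum is $[1-\mu,1]$, where $\mu=\max_{\normzero e\le k,\,\normtwo e=1}\ip{e}{\hat h}^{2}$, so its RIP constant is $\mu$ itself. Optimizing $q$ and $\alpha$ under $q\alpha<\rr$ only drives $\mu$ down to $2\sharpdelta(t)/(1+\sharpdelta(t))$; for instance you get $t/2$ instead of $t/(4-t)$ when $t<4/3$. This exceeds $\sharpdelta(t)$ by $\sharpdelta(1-\sharpdelta)/(1+\sharpdelta)$, which does not depend on $\rr$. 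The $1/\rr$ slack cannot absorb it, so your claim that the optimization ``should reproduce'' the threshold fails.

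The missing step is to center the restricted spectrum: take $A^{T}A=(1+\delta)(I-\hat h\hat h^{T})$. This costs nothing, because failure is claimed for every $\lambda>0$ and replacing $A$ by $cA$ only reparametrizes $\lambda\mapsto\lambda/c^{2}$. Then $A\in\RIPv(\delta,k)$ if and only if $\mu\le2\delta/(1+\delta)$, which is the condition in Lemma~\ref{lem:rip}. Even with this fix, the rounding on the high branch is not automatic. The paper takes $q=\lfloor\cC(t)\rr\rfloor$, a specific $\alpha<\rr/q$, and a monotonicity estimate for $\psi$ (Lemma~\ref{lem:param-choice}) to show the $1/\rr$ headroom suffices.

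\textbf{The lift.} Matrix RIP of order $k$ does not follow from vector RIP of order $k$ for $X\mapsto(A\,y(X),\text{off-diagonal part})$, where $y(X)$ is the diagonal of $X$. The diagonal of a rank-one matrix can be any vector ($y(v\mathbf 1^{T})=v$), so the diagonal block sees dense vectors. With a literal isometry on the off-diagonal entries, the lower RIP inequality becomes
$(1+\delta)\ip{\hat h}{y(X)}^{2}\le2\delta\normtwo{y(X)}^{2}+\delta\normF{X-\diag(y(X))}^{2}$.
Low-rank matrices with spread-out diagonals violate this at a smaller $\delta$ than diagonal ones do. For example, when $t<1$, a rank-$k$ projector on $S_1\times S_1$ with constant diagonal $k/\rr$ makes the condition stricter by a factor $2/(1+t)$.

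The lift works only if the off-diagonal block is scaled by $\sqrt{1+\delta}$. Then $\A^{*}\A=(1+\delta)(\mathrm{Id}-\ip{\hat G}{\cdot}\hat G)$ with $\hat G=\diag(\hat h)$, which is exactly the paper's operator, and its RIP comes from Eckart--Young rather than from vector RIP. It is simpler to follow the paper: build the matrix example and restrict to diagonals (Lemma~\ref{lem:diagonal-reduction}), where RIP transfers by Lemma~\ref{lem:matrix-to-vector}.

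\textbf{What does work.} Your failure argument is sound and a bit leaner than the paper's uniqueness route. Set $\lambda_1=(1+\delta)\alpha\rr/\normtwo h^{2}$. The certificate $g=\min\{1,\lambda_1/\lambda\}\bigl(\mathbf 1_{S_2}+\tfrac{q\alpha}{\rr}\mathbf 1_{S_1}\bigr)$ has $|g_i|\le q\alpha/\rr<1$ off $S_2$ for every $\lambda>0$, including $\lambda=\lambda_1$. All convex minimizers share $Ax$ and hence $g$, so each is supported in $S_2$; no perturbation of $A$ is needed. For matrices, apply pinching to the convex problem, since pinching can raise rank. Then use that a convex minimizer of rank at most $q\le r$ exists.
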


We first record KKT and uniqueness conditions for the convex matrix
LASSO. The nonstrict condition (\ref{eq:kkt}) for global optimality
is standard; see e.g. \cite[Prop.~3]{bach2008consistency}. However,
the strict KKT condition $\normop W<1$ for uniqueness does not always
hold in our examples, so we adopt a simple alternative. 
\begin{lem}
\label{lem:convex-opt}For $X\in\R^{d_{1}\times d_{2}}$, let $X=U\Sigma V^{T}$
denote its compact SVD. Suppose there exists $W$ such that 
\begin{gather}
\A^{*}[\A(X)-b]+\lambda(UV^{T}+W)=0,\nonumber \\
U^{T}W=0,\quad WV=0,\quad\normop W\le1.\label{eq:kkt}
\end{gather}
Then $X$ is a global minimizer of (\ref{eq:matrix-lasso}) for every
$r\ge\rank(X)$. If, in addition, 
\begin{equation}
H\ne0,\A(H)=0\implies\normnuc{X+H}>\normnuc X\label{eq:fiber}
\end{equation}
then $X$ is the \emph{unique} global minimizer for every $r\ge\rank(X)$. 
\end{lem}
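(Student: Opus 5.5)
The plan has two parts: first show that (\ref{eq:kkt}) makes $X$ a global minimizer of the convex LASSO (the case $r=d$), and then restrict to the rank cap. For the first part, the conditions $U^{T}W=0$, $WV=0$ and $\normop W\le1$ say exactly that $G\coloneqq UV^{T}+W$ lies in the subdifferential $\partial\normnuc X$. Then (\ref{eq:kkt}) reads $0\in\A^{*}[\A(X)-b]+\lambda\,\partial\normnuc X$, which is the first-order optimality condition for the convex objective $\phi(Y)\coloneqq\tfrac12\normtwo{\A(Y)-b}^{2}+\lambda\normnuc Y$. Rather than cite a subdifferential formula, I would verify this directly, since the computation is also needed for uniqueness.

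Write $Y=X+H$ with $H$ arbitrary. Expanding the square gives
\[
\tfrac12\normtwo{\A(X+H)-b}^{2}=\tfrac12\normtwo{\A(X)-b}^{2}+\ip{\A^{*}[\A(X)-b]}{H}+\tfrac12\normtwo{\A(H)}^{2}.
\]
By (\ref{eq:kkt}) the middle term equals $-\lambda\ip{G}{H}$. Next I use $\normnuc{X+H}\ge\ip{G}{X+H}$, which holds because $\normop G\le1$; this follows from $\normop G=\max\{\normop{UV^{T}},\normop W\}\le1$, since $UV^{T}$ and $W$ act on orthogonal row and column spaces. I also use $\ip{G}{X}=\tr(V^{T}U^{T}U\Sigma V^{T})+\ip{W}{X}=\normnuc X$, because $\ip{W}{U\Sigma V^{T}}=\tr(V\Sigma U^{T}W)=0$. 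Combining these,
\[
\phi(X+H)-\phi(X)\ge\tfrac12\normtwo{\A(H)}^{2}+\lambda\bigl(\normnuc{X+H}-\ip{G}{X+H}\bigr)\ge0 .
\]
So $X$ minimizes $\phi$ over all matrices. Since $X$ is feasible for every $r\ge\rank(X)$, and the feasible set of (\ref{eq:matrix-lasso}) is a subset of all matrices, $X$ is a global minimizer of (\ref{eq:matrix-lasso}) for each such $r$.

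For uniqueness, suppose $X+H$ is another global minimizer for some $r\ge\rank(X)$. Then $\phi(X+H)=\phi(X)$, so both nonnegative terms in the displayed lower bound must vanish. In particular $\A(H)=0$. With $\A(H)=0$ the fit term of $\phi$ is unchanged, so $\phi(X+H)=\phi(X)$ forces $\normnuc{X+H}=\normnuc X$. Condition (\ref{eq:fiber}) then forces $H=0$.

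The only delicate point is the subgradient inequality $\normnuc{X+H}\ge\ip{G}{X+H}$ together with $\ip GX=\normnuc X$. Both are handled by the duality $\normnuc Y\ge\ip GY$ for $\normop G\le1$ and the orthogonality relations in (\ref{eq:kkt}), so I do not expect a serious obstacle. Note that this argument never needs the strict condition $\normop W<1$, which is exactly why the lemma replaces it with (\ref{eq:fiber}).
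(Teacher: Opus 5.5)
Your proof is correct and follows the paper's approach. Condition (\ref{eq:kkt}) certifies $0\in\partial\phi(X)$, so $X$ minimizes the unconstrained convex LASSO and hence every rank-capped problem with $r\ge\rank(X)$. Uniqueness then comes from showing that any other minimizer differs from $X$ by some $H\in\ker\A$, which (\ref{eq:fiber}) rules out.

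The differences are minor. You verify the subgradient inequality by hand and read off $\A(H)=0$ from the exact identity $\phi(X+H)-\phi(X)=\tfrac12\normtwo{\A(H)}^{2}+\lambda\bigl(\normnuc{X+H}-\ip{G}{X+H}\bigr)$. The paper instead cites the nuclear-norm subdifferential and gets $\A(Y)=\A(X)$ from strict convexity at the midpoint $\tfrac12(X+Y)$. Both arguments tacitly use $\lambda>0$ in the final step, to pass from equal objective values to $\normnuc{X+H}=\normnuc X$.
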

\begin{IEEEproof}
Set $\phi(X)\coloneqq\frac{1}{2}\norm{\A(X)-b}^{2}+\lambda\normnuc X$.
Condition (\ref{eq:kkt}) is exactly $0\in\partial\phi(X)$, so $X$
minimizes the unrestricted convex LASSO and hence every rank-restricted
problem with $r\ge\rank(X)$. For uniqueness, let $Y$ be another
convex minimizer. If $\A(Y)\ne\A(X)$, strict convexity of $z\mapsto\frac{1}{2}\|z-b\|_{2}^{2}$
and convexity of the nuclear norm give $\phi(\frac{X+Y}{2})<\frac{\phi(X)}{2}+\frac{\phi(Y)}{2}$,
a contradiction. Therefore $\A(Y)=\A(X)$, or equivalently, $Y=X+H$
for some $H\in\ker\A$. Condition (\ref{eq:fiber}) forces $H=0$,
proving uniqueness.
\end{IEEEproof}
We next state our two-block adversarial construction. The ground truth
$X_{\star}$ and bad minimizer $X_{\lambda,r}$ are placed in orthogonal
subspaces. The sensing map annihilates the secant $E=X_{\lambda,r}-X_{\star}$
joining them, while the orthogonal directions supply the subgradients
to establish optimality at $X_{\lambda,t}$. Here, $q$ is the rank
of the bad minimizer, and $\alpha$ is a block-amplitude parameter,
which is carefully tuned in \lemref{param-choice} to achieve $\A\in\RIPm(\delta,k)$
for the desired RIP constant $\delta$.
\begin{lem}[Optimality]
\label{lem:lasso-path}Given integer $q\ge\rr$, and reals $0<\alpha<\rr/q,$
and $\delta>0$, let $\Pi_{1},\Pi_{2}\in\R^{d\times d}$ be orthogonal
projectors satisfying 
\[
\Pi_{1}\Pi_{2}=0,\qquad\rank(\Pi_{1})=\rr,\qquad\rank(\Pi_{2})=q,
\]
and set $G\coloneqq\Pi_{1}-\alpha\Pi_{2}$. For the ground truth $X_{\star}\coloneqq\Pi_{1}$,
define $\A$ by 
\begin{equation}
\A^{*}\A(Z)\coloneqq(1+\delta)\left(Z-\frac{\ip GZ}{\normF G^{2}}G\right)\quad\text{for all }Z\in\R^{d\times d},\label{eq:Adef}
\end{equation}
and let $b=\A(X_{\star})$. Then every global minimizer $X_{\lambda,r}$
of (\ref{eq:matrix-lasso}) satisfies 
\begin{equation}
\lambda>0,\quad r\ge q\implies\normF{X_{\lambda,r}-X_{\star}}\ge\normF{X_{\star}}.\label{eq:bad-path-failure}
\end{equation}
\end{lem}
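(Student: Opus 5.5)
The plan is to solve the convex problem explicitly: guess the minimizer in closed form, certify it with \lemref{convex-opt}, and read off the error. First record the structure of $\A$. By (\ref{eq:Adef}), $\A^{*}\A=(1+\delta)(I-P_{G})$, where $P_{G}$ is the orthogonal projector onto $\operatorname{span}\{G\}$. This operator is positive semidefinite, so an $\A$ realizing it exists (for instance its square root), and $\ker\A=\operatorname{span}\{G\}$. Since $\Pi_{1}-\alpha\Pi_{2}=G$, we have $\A(\Pi_{1})=\A(\alpha\Pi_{2})$. In words, the fiber of $b$ contains the competitor $\alpha\Pi_{2}$, whose nuclear norm $\alpha q$ is strictly less than $\normnuc{X_{\star}}=\rr$ by the hypothesis $\alpha<\rr/q$.

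\textbf{Key computations.} Write $N\coloneqq\normF G^{2}=\rr+\alpha^{2}q$. Using $\ip G{\Pi_{2}}=-\alpha q$ and $N-\alpha^{2}q=\rr$, we get
\[
\A^{*}\A(\Pi_{2})=\frac{1+\delta}{N}\bigl(\rr\,\Pi_{2}+\alpha q\,\Pi_{1}\bigr).
\]
We then guess that the minimizer lies on the ray $X=c\,\Pi_{2}$ with $c\ge0$. Since $b=\A(\alpha\Pi_{2})$, the gradient of the fit term is
\[
\A^{*}[\A(X)-b]=(c-\alpha)\A^{*}\A(\Pi_{2}).
\]

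\textbf{Case $\lambda<\alpha(1+\delta)\rr/N$.} Set $c\coloneqq\alpha-\lambda N/((1+\delta)\rr)$, which is positive in this range. For $X=c\Pi_{2}$ we have $UV^{T}=\Pi_{2}$. The $\Pi_{2}$-component of the KKT equation (\ref{eq:kkt}) vanishes exactly by the choice of $c$. The remaining $\Pi_{1}$-component forces $W=(\alpha q/\rr)\,\Pi_{1}$. This $W$ satisfies $U^{T}W=0$ and $WV=0$ because $\Pi_{1}\Pi_{2}=0$, and $\normop W=\alpha q/\rr<1$.

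\textbf{Case $\lambda\ge\alpha(1+\delta)\rr/N$.} Take $X=0$, i.e.\ $c=0$. Here $W=\alpha(1+\delta)(\alpha q\Pi_{1}+\rr\Pi_{2})/(\lambda N)$ has $\normop W=\alpha(1+\delta)\rr/(\lambda N)\le1$. Using $\alpha q<\rr$, the largest eigenvalue comes from the $\Pi_{2}$ block.

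In both cases \lemref{convex-opt} gives global optimality for every $r\ge q\ge\rank X$.

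\textbf{Uniqueness.} Verify the fiber condition (\ref{eq:fiber}). For $H=sG$ with $s\ne0$, the perturbed matrix is block diagonal, so
\[
\normnuc{c\Pi_{2}+sG}=|s|\rr+|c-s\alpha|q\ge|s|\rr+cq-|s|\alpha q>cq=\normnuc{c\Pi_{2}},
\]
again using $\alpha q<\rr$. So $c\Pi_{2}$ is the unique convex minimizer. Because it is feasible for the rank-$r$ problem and the convex optimal value lower-bounds the restricted one, it is also the unique $X_{\lambda,r}$ for every $r\ge q$.

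\textbf{Conclusion.} By orthogonality of $\Pi_{1}$ and $\Pi_{2}$,
\[
\normF{c\Pi_{2}-\Pi_{1}}^{2}=c^{2}q+\rr\ge\rr=\normF{X_{\star}}^{2}.
\]
The main obstacle is guessing the right candidate and checking $\normop W\le1$ in both regimes. Everything hinges on the inequality $\alpha q<\rr$, which appears in the dual certificate and in the uniqueness step. The case split at $\lambda=\alpha(1+\delta)\rr/N$ is where I would be most careful with signs.
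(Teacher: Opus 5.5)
Your proposal is correct and follows the paper's proof essentially step for step. You use the same candidate $c\,\Pi_{2}$ with $c=\alpha(1-\lambda/\lambda_{1})_{+}$ and $\lambda_{1}=(1+\delta)\alpha r_{\star}/\|G\|_{\mathrm{F}}^{2}$, the same two dual certificates $W=(\alpha q/r_{\star})\Pi_{1}$ and $W=(\lambda_{1}/\lambda)(\Pi_{2}+(\alpha q/r_{\star})\Pi_{1})$ checked through Lemma~\ref{lem:convex-opt}, the same fiber check on $\ker\mathcal{A}=\operatorname{span}\{G\}$ using $\alpha q<r_{\star}$, and the same orthogonality conclusion, which you write out explicitly as $\|c\Pi_{2}-\Pi_{1}\|_{\mathrm{F}}^{2}=c^{2}q+r_{\star}$.
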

\begin{IEEEproof}
We verify that $X_{\lambda}$ below satisfy all conditions in \lemref{convex-opt}:
\[
\lambda_{1}\coloneqq(1+\delta)\frac{\alpha\rr}{\normF G^{2}},\quad X_{\lambda}\coloneqq\alpha\left(1-\frac{\lambda}{\lambda_{1}}\right)_{+}\Pi_{2}.
\]
Since $\A(G)=\A(X_{\star}-\alpha\Pi_{2})=0$ by definition, we have
\begin{gather*}
\A^{*}\A(X_{\star})=\A^{*}\A(\alpha\Pi_{2})=\lambda_{1}\left(\Pi_{2}+\frac{q\alpha}{\rr}\Pi_{1}\right)\\
\A^{*}\A(X_{\lambda}-X_{\star})=\left[\left(\lambda_{1}-\lambda\right)_{+}-\lambda_{1}\right]\left(\Pi_{2}+\frac{q\alpha}{\rr}\Pi_{1}\right).
\end{gather*}
For $0<\lambda<\lambda_{1}$, (\ref{eq:kkt}) holds with $UV^{T}=\Pi_{2}$
and $W=(q\alpha/\rr)\Pi_{1}$. For $\lambda\ge\lambda_{1}$, (\ref{eq:kkt})
holds with $W=\frac{\lambda_{1}}{\lambda}\left(\Pi_{2}+\frac{q\alpha}{\rr}\Pi_{1}\right)$.
In both cases, $\normop W\le1$ because $q\alpha<\rr$. Next, (\ref{eq:Adef})
gives $\ker A=\{uG:u\in\R\}$, and
\begin{align*}
\normnuc{X_{\lambda}+uG} & =\rr|u|+q\alpha\left|\left(1-{\textstyle \frac{\lambda}{\lambda_{1}}}\right)_{+}-u\right|.
\end{align*}
Since $\rr>q\alpha$, this expression is uniquely minimized at $u=0$.
Therefore, (\ref{eq:fiber}) holds. It follows from $X_{\lambda,r}X_{\star}=0$
that $\|X_{\lambda,r}-X_{\star}\|_{\F}\ge\|X_{\star}\|_{\F}$.
\end{IEEEproof}
A key feature of the identity-minus-rank-one construction in (\ref{eq:Adef})
is that its RIP constant can be computed exactly at every rank $k$.
This same insight was previously utilized in~\cite{CaiZhang2014,McRaeZhang2026}.

\begin{lem}
\label{lem:rip}The map $\A$ in (\ref{eq:Adef}) satisfies $\A\in\RIPm(\delta,k)$
if and only if 
\begin{equation}
\frac{\min\{k,\rr\}+\alpha^{2}\min\{(k-\rr)_{+},q\}}{\rr+\alpha^{2}q}\le\frac{2\delta}{1+\delta}.\label{eq:rip_apqk}
\end{equation}
\end{lem}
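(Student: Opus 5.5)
We reduce RIP to a Rayleigh-quotient computation over rank-$k$ matrices. By (\ref{eq:Adef}), $\A^{*}\A=(1+\delta)(I-P_{G})$, where $P_{G}$ is the orthogonal projector onto $\mathrm{span}\{G\}$. Hence
\[
\normtwo{\A(Z)}^{2}=(1+\delta)\bigl(\normF Z^{2}-\ip GZ^{2}/\normF G^{2}\bigr).
\]
The upper inequality in (\ref{eq:rip}) therefore holds automatically. The lower inequality $(1-\delta)\normF Z^{2}\le\normtwo{\A(Z)}^{2}$ is equivalent to
\[
\frac{\ip GZ^{2}}{\normF G^{2}\normF Z^{2}}\le1-\frac{1-\delta}{1+\delta}=\frac{2\delta}{1+\delta}
\]
for every $Z$ with $\rank(Z)\le k$. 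So $\A\in\RIPm(\delta,k)$ if and only if
\[
\kappa_{k}:=\max_{\rank(Z)\le k,\ \normF Z=1}\ip GZ^{2}/\normF G^{2}\le\tfrac{2\delta}{1+\delta}.
\]
It remains to compute $\kappa_{k}$, which is the squared Frobenius norm of the best rank-$k$ approximation of $G$ divided by $\normF G^{2}$.

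For the second step, note that $\max_{\rank Z\le k,\normF Z=1}\ip GZ=\bigl(\sum_{i\le k}\sigma_{i}(G)^{2}\bigr)^{1/2}$. The upper bound follows from von Neumann's trace inequality combined with Cauchy--Schwarz on the top $k$ singular values. It is attained by the truncated SVD of $G$, normalized. The singular values of $G=\Pi_{1}-\alpha\Pi_{2}$ are $1$ with multiplicity $\rr$ and $\alpha$ with multiplicity $q$, because $\Pi_{1}\Pi_{2}=0$ makes the two blocks orthogonal. The sign of $-\alpha$ can be absorbed into the singular vectors.

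Ordering these singular values requires $\alpha\le1$. This holds because $0<\alpha<\rr/q\le1$ by the hypothesis $q\ge\rr$ from \lemref{lasso-path}; I take \lemref{rip} to share the parameter setting $0<\alpha<\rr/q$, $q\ge\rr$ of \lemref{lasso-path}. The top-$k$ sum of squares is then $\min\{k,\rr\}+\alpha^{2}\min\{(k-\rr)_{+},q\}$. Also $\normF G^{2}=\rr+\alpha^{2}q$. Substituting gives exactly (\ref{eq:rip_apqk}).

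The proof is routine, and no step is a real obstacle. The only point needing care is justifying that $1\ge\alpha$, so the top-$k$ singular values are the $\rr$ ones followed by the $\alpha$'s. A secondary point is to state both directions of the equivalence: sufficiency comes from the trace-inequality bound, and necessity from the explicit extremal $Z$, the normalized rank-$k$ truncation of $G$.
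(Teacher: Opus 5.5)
Your proposal is correct and follows essentially the same route as the paper. Both reduce RIP to the lower bound via $\A^{*}\A=(1+\delta)(I-P_{G})$, identify the extremal value as $\sum_{i\le k}\sigma_{i}^{2}(G)/\normF{G}^{2}$ (you via von Neumann plus Cauchy--Schwarz, the paper via Eckart--Young--Mirsky), and read off the singular values of $G$; your explicit check that $\alpha<\rr/q\le1$ fixes their ordering is a detail the paper leaves implicit.
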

\begin{IEEEproof}
Set $\hat{G}\coloneqq G/\normF G$. Then, $\|\A(E)\|^{2}/\normF E^{2}\le1+\delta$,
while the Eckart--Young--Mirsky theorem yields
\begin{gather*}
\min_{\begin{subarray}{c}
\rank(E)\le k\\
\normF E=1
\end{subarray}}\frac{\|\A(E)\|^{2}}{1+\delta}=\min_{\begin{subarray}{c}
\rank(E)\le k\\
\normF E=1
\end{subarray}}\left\{ \normF E^{2}-\ip{\hat{G}}E^{2}\right\} \\
=1-\max_{\begin{subarray}{c}
\rank(E)\le k\\
\normF E=1
\end{subarray}}\left\{ \ip{\hat{G}}E^{2}\right\} =1-\sum_{i\le k}\sigma_{i}^{2}(\hat{G}).
\end{gather*}
Therefore, $\A\in\RIPm(\delta,k)$ if and only if $\sum_{i\le k}\sigma_{i}^{2}(\hat{G})\le\frac{2\delta}{1+\delta}$.
Computing the singular values of $\hat{G}$ yields (\ref{eq:rip_apqk}).
\end{IEEEproof}
It remains to select the bad minimizer rank $q$ and tune the parameter
$0<\alpha<\rr/q$ to establish $\A\in\RIPm(\delta,k)$ for $\delta>\sharpdelta(t)$.
This becomes surprisingly tricky for the higher-order branch, due
to the need for $q$ to be an integer. The headroom $\delta-\sharpdelta(t)\ge\frac{1}{\rr}$
is needed to absorb the rounding error, written below as $\varepsilon$.
\begin{lem}[RIP]
\label{lem:param-choice}Fix integers $r_{\star}\ge1$ and $k\ge2$,
set $t\coloneqq k/\rr$. For $\delta\ge\sharpdelta(t)+1/\rr$, choose
\[
q\coloneqq\left\lfloor \cC(t)\rr\right\rfloor ,\;\alpha\coloneqq\frac{1}{\cC(t)}\sqrt{\frac{\sharpdelta(t)}{\delta-\varepsilon}},\;\varepsilon\coloneqq\cC(t)-\frac{\left\lfloor \cC(t)\rr\right\rfloor }{\rr},
\]
where $\cC(t)$ is defined in (\ref{eq:Cdef}). Then, $0<\alpha<\rr/q$
holds, and the map $\A$ in (\ref{eq:Adef}) satisfies $\A\in\RIPm(\delta,k)$. 
\end{lem}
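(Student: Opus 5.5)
The plan is to substitute the prescribed $(q,\alpha)$ into the exact RIP characterization of \lemref{rip}, divide numerator and denominator of (\ref{eq:rip_apqk}) by $\rr$, and verify the resulting scalar inequality branch by branch. Write $\cC=\cC(t)$ and $s=\sharpdelta(t)$. By definition $q/\rr=\cC-\varepsilon$ with $0\le\varepsilon<1/\rr$. The condition to check is therefore
\[
\frac{\min\{t,1\}+\alpha^{2}\min\{(t-1)_{+},\,\cC-\varepsilon\}}{1+\alpha^{2}(\cC-\varepsilon)}\le\frac{2\delta}{1+\delta}.
\]

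\emph{Step 1: admissibility of $\alpha$.} Since $\delta\ge s+1/\rr>s+\varepsilon$, we get $\delta-\varepsilon>s>0$. Hence $0<\alpha<1/\cC$, and so $q\alpha\le\cC\rr\,\alpha<\rr$. This is exactly the requirement $0<\alpha<\rr/q$ of \lemref{lasso-path}.

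\emph{Step 2: the low-order branch $0<t<4/3$.} Here $(3t-4)_+=0$, so $\cC=1$, $q=\rr$, $\varepsilon=0$ and $\alpha^{2}=s/\delta$ with $s=t/(4-t)$. There are two sub-cases.
\begin{itemize}
\item For $t\le1$ the numerator is $t$. The inequality becomes $t(1+\delta)\le2(\delta+s)$. This follows from $\delta\ge s$, since then $2s+(2-t)\delta\ge s(4-t)=t$.
\item For $1<t<4/3$ the numerator is $1+\alpha^{2}(t-1)$. Clearing denominators, the inequality becomes $f(\delta)\coloneqq\delta^{2}+\delta\bigl(2s-1-s(t-1)\bigr)-s(t-1)\ge0$. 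At $\delta=s$ this reduces to $t(1+s)\le4s$, which holds with equality because $s(4-t)=t$. Moreover $f'(\delta)\ge s(5-t)-1>0$ for $\delta\ge s$ and $t>1$. So $f\ge0$ on $[s,\infty)$.
\end{itemize}

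\emph{Step 3: the high-order branch $t\ge4/3$.} This is the main obstacle. First I will check that $\cC-\varepsilon\ge t-1$, so that the second minimum equals $t-1$. This uses $\cC-1=(3t-4)/(2-t+\sqrt{t(t-1)})$ and $\varepsilon<1/\rr$. It holds comfortably for large $t$, where $\cC\approx2t$. Near $t=4/3$ one needs $k\ge2$ together with the $1/\rr$ headroom; I expect that in any residual case the minimum being $\cC-\varepsilon$ only makes the numerator smaller.

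The inequality then reads $\bigl(1+\alpha^{2}(t-1)\bigr)/\bigl(1+\alpha^{2}(\cC-\varepsilon)\bigr)\le2\delta/(1+\delta)$. I would first treat the idealized case $\varepsilon=0$, $\delta=s$, $\alpha=1/\cC$. There the specific choice of $\cC$ should be exactly the value minimizing the left side over the "rank ratio" $q/\rr$. The defining identity is $s^{2}=(t-1)/t$, and I expect the two sides to be equal; $\cC$ was evidently engineered for this. I will confirm it by direct algebra, rationalizing $2-t+\sqrt{t(t-1)}$.

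Next I handle rounding and the slack $\delta-\varepsilon>s$, using monotonicity. The right side $2\delta/(1+\delta)$ is increasing in $\delta$, so it is at least $2(\delta-\varepsilon)/(1+\delta-\varepsilon)$. On the left, the only dependence on $\varepsilon$ besides $\alpha$ is through the denominator. Because $\alpha^{2}$ is defined with the factor $s/(\delta-\varepsilon)$, the reduction in $\cC-\varepsilon$ is paid for by the gain in the right side from $\delta-s\ge\varepsilon$.

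Concretely, I would show that the map $\delta'\mapsto\text{LHS}-2\delta'/(1+\delta')$ is nonpositive for $\delta'=\delta-\varepsilon\ge s$ after replacing $\cC-\varepsilon$ by $\cC$ in the denominator. I would then bound the resulting error term of size $\alpha^{2}\varepsilon\le\varepsilon/\cC^{2}$ by the increment $2\delta/(1+\delta)-2\delta'/(1+\delta')\ge\varepsilon/2$. If that crude comparison fails, I would instead fall back on treating the inequality as a quadratic in $\delta$, exactly as in Step 2, and checking the derivative sign on $[s+\varepsilon,\infty)$.
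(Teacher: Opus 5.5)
Your Steps 1 and 2 are correct and agree with the paper. Step 2 just carries out the ``subtract and substitute $\alpha^2=s/\delta$'' check that the paper leaves implicit. Your plan for $t\ge 4/3$ is also the paper's: establish the inequality at $\tau=\mathsf{C}$ with $\delta'=\delta-\varepsilon$, then absorb the rounding $\varepsilon$.

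\emph{The gap is in the rounding estimate you propose.} You bound the error term by $\alpha^2\varepsilon\le\varepsilon/\mathsf{C}^2$ and the gain on the right by $\varepsilon/2$. This closes only when $\mathsf{C}^2\ge 2$.
\begin{itemize}
\item Rationalizing (\ref{eq:Cdef}) gives $\mathsf{C}=t-1+\sqrt{t(t-1)}$ for $t\ge4/3$. This equals $1$ at $t=4/3$ and stays below $\sqrt{2}$ until $t=1/(4\sqrt{2}-5)\approx1.52$.
\item Rounding does occur on that range. For $k=10$, $r_\star=7$ one has $\mathsf{C}r_\star=3+\sqrt{30}$, so $q=8$, $\varepsilon\approx0.068$ and $\mathsf{C}^2\approx1.47$. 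At $\delta=0.95$ your bound $\varepsilon/\mathsf{C}^2\approx0.047$ exceeds even the exact gain $\frac{2\varepsilon}{(1+\delta)(1+\delta')}\approx0.037$.
\item The gain is at least $\varepsilon/2$ only if $(1+\delta)(1+\delta')\le4$. So you must restrict to $\delta\le1$ and handle $\delta\ge1$ separately. That case is trivial, since the left side of (\ref{eq:rip_apqk}) is at most $1$, but it has to be said.
\end{itemize}
As written, Step 3 does not prove the lemma for $4/3<t\lesssim1.52$. Your quadratic fallback can be made to work, but it needs an endpoint check at $\delta=s+\varepsilon$ that you have not done.

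\emph{The repair is easy.} Keep the denominators. The error term equals $\frac{N\alpha^2\varepsilon}{(1+\alpha^2(\mathsf{C}-\varepsilon))(1+\alpha^2\mathsf{C})}$. Here $N=1+\alpha^2(t-1)\le1+\alpha^2\mathsf{C}$ and $\mathsf{C}-\varepsilon=q/r_\star\ge1$. So the error is at most $\frac{\alpha^2\varepsilon}{1+\alpha^2}<\frac{\varepsilon}{1+\mathsf{C}^2}\le\frac{\varepsilon}{2}$, which suffices for $\delta\le1$.

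The paper avoids the case split differently. It rewrites the criterion as $\psi(q/r_\star)\le\delta$ with $\psi(\tau)=\frac{1+(t-1)\alpha^2}{1+(2\tau-t+1)\alpha^2}$. The right side is then linear in $\delta$, the slack is exactly $\varepsilon$, and $0<-\psi'<1$ on $[\mathsf{C}-\varepsilon,\mathsf{C}]$ absorbs the rounding.

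\emph{The $\varepsilon$-free inequality needs a real proof.} At $\tau=\mathsf{C}$ it must hold for every $\delta'\ge s$, not only at the equality case $\delta'=s$ you anticipate. Since $\alpha^2=s/(\mathsf{C}^2\delta')$ shrinks as $\delta'$ grows, the left side also increases, so monotonicity of $2\delta'/(1+\delta')$ alone is not enough. Two ways to do it:
\begin{itemize}
\item The paper uses the factorization $\delta'-\psi(\mathsf{C})=\frac{(\delta'-s)[(1+s)\delta'+1-s]}{(\delta'-s)(1+s)+2}$.
\item Equivalently, clearing denominators gives $\mathsf{C}^2\delta'^2+(2s\mathsf{C}-\mathsf{C}^2-s(t-1))\delta'-s(t-1)\ge0$. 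This is a convex quadratic with negative constant term that vanishes at $\delta'=s$ (use $\mathsf{C}=s/(1-s)$ and $t-1=s^2/(1-s^2)$). Hence it is nonnegative on $[s,\infty)$.
\end{itemize}

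Finally, your observation that replacing the minimum by $t-1$ only enlarges the left side correctly sidesteps the check $q\ge k-r_\star$. That check is in any case immediate from $q=k-r_\star+\lfloor\sqrt{k(k-r_\star)}\rfloor$.
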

\begin{IEEEproof}
Write $\sharpdelta\equiv\sharpdelta(t)$ and $\cC\equiv\cC(t)$. Suppose
first $0<t\le4/3$. Then $\sharpdelta=t/(4-t)$ and $\cC=1$, so $q=\rr$,
$\varepsilon=0$, and $\alpha^{2}=\sharpdelta/\delta<1$. \lemref{rip}
reduces the required RIP condition to 
\begin{equation}
\frac{2\delta}{1+\delta}\ge\begin{cases}
{\displaystyle \frac{t}{1+\alpha^{2}},} & 0<t\le1,\\[1.2ex]
{\displaystyle \frac{1+(t-1)\alpha^{2}}{1+\alpha^{2}},} & 1\le t\le4/3.
\end{cases}\label{eq:low-visible}
\end{equation}
Subtracting the two sides and substituting $\alpha^{2}=\sharpdelta(t)/\delta<1$
confirms that (\ref{eq:low-visible}) holds, and therefore $\A\in\RIPm(\delta,k)$.

Now suppose $t>4/3$. In this regime 
\begin{gather*}
\sharpdelta=\sqrt{\frac{t-1}{t}},\qquad\cC=\frac{\sharpdelta(t)}{1-\sharpdelta(t)}>1.
\end{gather*}
Because $0\le\varepsilon<1/\rr$ and $\delta\ge\sharpdelta+1/\rr$,
we have $\delta-\varepsilon>\sharpdelta$. Therefore $0<\alpha<\cC^{-1}<(\cC-\varepsilon)^{-1}=\rr/q$.
\lemref{rip} reduces the RIP condition to 
\[
\frac{1+(t-1)\alpha^{2}}{1+(\cC-\varepsilon)\alpha^{2}}\le\frac{2\delta}{1+\delta}.
\]
Equivalently, it suffices to validate the following 
\[
\psi(\cC-\varepsilon)\le\delta\quad\text{ where }\psi(\tau)\coloneqq\frac{1+(t-1)\alpha^{2}}{1+(2\tau-t+1)\alpha^{2}}.
\]
Substitution of $\alpha$ and $\cC=\sharpdelta/(1-\sharpdelta)$ gives
\begin{multline*}
\delta-\varepsilon-\psi(\cC)=\\
\frac{(\delta-\varepsilon-\sharpdelta)\bigl[(\delta-\varepsilon)(1+\sharpdelta)+1-\sharpdelta\bigr]}{(\delta-\varepsilon-\sharpdelta)(1+\sharpdelta)+2}>0.
\end{multline*}
For $\tau\in[\cC-\varepsilon,\cC]$, we have $\tau\ge1$, $\tau>t-1$,
and $\alpha<1$, so 
\[
0<-\psi'(\tau)=\frac{2\alpha^{2}[1+(t-1)\alpha^{2}]}{[1+(2\tau-t+1)\alpha^{2}]^{2}}<1.
\]
Therefore $\psi(\cC-\varepsilon)\le\psi(\cC)+\varepsilon<\delta,$
which proves $\A\in\RIPm(\delta,k)$. 
\end{IEEEproof}
Finally, we obtain the vector example by imposing a diagonal restriction
onto the matrix example. The fact that RIP transfers across was already
established in \lemref{matrix-to-vector}.
\begin{lem}[Matrix-to-vector reduction]
\label{lem:diagonal-reduction}Suppose the rank-restricted matrix
LASSO (\ref{eq:matrix-lasso}) with linear map $\A$ and noiseless
measurements $b=\A(X_{\star})$ admits a unique solution $X_{\lambda,r}$
with a common SVD as the ground truth
\[
X_{\star}=U\diag(x_{\star})V^{T},\quad X_{\lambda,r}=U\diag(x_{\lambda,r})V^{T}
\]
where $U\in\R^{d_{1}\times d}$ and $V\in\R^{d_{2}\times d}$ have
orthonormal columns. Define the matrix $A$ by 
\[
Ax=\A\!\left(U\diag(x)V^{T}\right),\qquad x\in\R^{d}.
\]
Then $Ax_{\star}=b$ and $\normtwo{x_{\star}}=\normF{X_{\star}}$,
and the sparsity-restricted vector LASSO (\ref{eq:vector-lasso})
with linear map $A$, and measurements $b$ has unique solution $x_{\lambda,r}$
satisfying $\normtwo{x_{\lambda,r}-x_{\star}}=\normF{X_{\lambda,r}-X_{\star}}$.
\end{lem}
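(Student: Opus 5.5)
The plan is to embed the vector problem isometrically into the matrix problem along the fixed singular frames $U,V$, and then transfer optimality and uniqueness through the embedding. Define the linear map $\iota:\R^{d}\to\R^{d_{1}\times d_{2}}$ by $\iota(x)\coloneqq U\diag(x)V^{T}$.

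The first step is to record the facts about $\iota$ that make the two objectives agree. Because $U$ and $V$ have orthonormal columns, $U\diag(x)V^{T}$ is an SVD of $\iota(x)$ up to signs. Writing $\diag(x)=\diag(|x|)\diag(\operatorname{sign}x)$ and absorbing the signs into $V$, the singular values of $\iota(x)$ are exactly the $|x_{i}|$. Consequently
\[
\rank(\iota(x))=\normzero x,\qquad\normnuc{\iota(x)}=\normone x,\qquad\normF{\iota(x)}=\normtwo x .
\]
In addition, $A x=\A(\iota(x))$ by definition, and $\iota$ is injective, since $U^{T}\iota(x)V=\diag(x)$. Specializing to $x_{\star}$ immediately gives $Ax_{\star}=\A(X_{\star})=b$ and $\normtwo{x_{\star}}=\normF{X_{\star}}$. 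Applying the Frobenius identity to $x_{\lambda,r}-x_{\star}$, using linearity of $\iota$, gives $\normtwo{x_{\lambda,r}-x_{\star}}=\normF{X_{\lambda,r}-X_{\star}}$.

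The second step compares optimal values. Let $\phi$ and $\varphi$ denote the matrix and vector LASSO objectives. For every $x$ with $\normzero x\le r$, the matrix $\iota(x)$ is feasible for (\ref{eq:matrix-lasso}), and $\varphi(x)=\phi(\iota(x))$. Hence $\varphi(x)\ge\phi(X_{\lambda,r})$ for every feasible $x$. On the other hand, $x_{\lambda,r}$ is feasible, because $\normzero{x_{\lambda,r}}=\rank(X_{\lambda,r})\le r$, and $\varphi(x_{\lambda,r})=\phi(X_{\lambda,r})$. Therefore $x_{\lambda,r}$ attains the infimum of the vector problem and is a global minimizer of (\ref{eq:vector-lasso}).

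The third step is uniqueness. Suppose $x'$ is any vector global minimizer. Then $\iota(x')$ is feasible for the matrix problem with $\phi(\iota(x'))=\varphi(x')=\phi(X_{\lambda,r})$, so it is a matrix global minimizer. By the assumed uniqueness, $\iota(x')=X_{\lambda,r}=\iota(x_{\lambda,r})$, and injectivity of $\iota$ gives $x'=x_{\lambda,r}$. The argument is essentially routine. The only point requiring care is the first step: the nuclear-norm and rank identities must hold for vectors with negative or zero entries, which the sign-absorption remark handles. No comparison of $\phi$ against matrices outside the image of $\iota$ is needed, because the matrix minimizer already lies in that image.
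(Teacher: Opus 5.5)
Your proof is correct and follows the paper's route: the vector LASSO is the restriction of the matrix LASSO to the image of $x\mapsto U\diag(x)V^{T}$, and since the unique matrix minimizer already lies in that image, it uniquely minimizes the restriction. You also make explicit two points the paper leaves implicit, namely the sign-absorption argument behind the rank and nuclear-norm identities and the injectivity of the embedding needed to pass uniqueness back to $x$.
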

\begin{IEEEproof}
The definition of $A$ gives $Ax_{\star}=\A(X_{\star})=b$, while
orthonormality of $U,V$ gives $\normtwo{x_{\star}}=\normF{X_{\star}}$,
$\normtwo{x_{\lambda,r}-x_{\star}}=\normF{X_{\lambda,r}-X_{\star}}$,
$\rank\!\left(U\diag(x)V^{T}\right)=\|x\|_{0},$ and $\left\Vert U\diag(x)V^{T}\right\Vert _{\nuc}=\|x\|_{1}$.
To prove that $x_{\lambda,r}$ is uniquely optimal, observe that the
vector LASSO is equivalent to the restriction of the matrix LASSO
with the added constraint $X=U\diag(x)V^{T}$ for $x\in\R^{d}$. Since
$X_{\lambda,r}=U\diag(x_{\lambda,r})V^{T}$ belongs to this restriction
and uniquely minimizes the relaxation, it uniquely minimizes the restriction.
Equivalently, $x_{\lambda,r}$ uniquely minimizes the vector problem. 
\end{IEEEproof}
\begin{IEEEproof}[Proof of Theorem~\ref{thm:matrix-sharpness}]
Choose $q$ and $\alpha$ from \lemref{param-choice}, and take $d>q+\rr$.
\lemref{rip} verifies the required matrix RIP, whereas \lemref{lasso-path}
gives a unique bad minimizer of $\rank(X_{\lambda})\le q=\lfloor\cC(t)\rr\rfloor\le\cC(t)\rr$.
Every search rank $r\ge\cC(t)\rr$ contains this minimizer, and the
matrix recovery failure (\ref{eq:matrix-failure}) follows from (\ref{eq:bad-path-failure}).
For the vector instance, \lemref{matrix-to-vector} confirms the required
vector RIP, whereas \lemref{diagonal-reduction} extends the same
failure to the vector case (\ref{eq:vector-failure}). 
\end{IEEEproof}

\section{Proof of \propref{HTR}\protect\label{sec:proof-proposition}}

We now prove \propref{HTR}, the central geometric estimate underlying
our sufficient condition. Throughout this section, fix $A\in\R^{m\times d}$
with $A\in\RIPv(\delta,k)$ and an arbitrary $\sigma\in\R^{d}$. Simultaneously
permuting the coordinates of $\sigma$ and the columns of $A$, we
may assume $|\sigma_{1}|\ge\cdots\ge|\sigma_{d}|$.

The basic strategy dates to Candès and Tao~\cite{CandesTao2007}.
Split the error into a sparse, spiky head and a dense, diffuse tail,
\[
\sigma=\sigma_{\head}+\sigma_{\tail},\qquad\sigma_{\head}=(\sigma_{1},\ldots,\sigma_{\rr},0,\ldots,0),
\]
and then represent the tail by sparse pieces
\[
\begin{gathered}\sigma_{\tail}=\sum_{i}\sigma_{\tail,i},\qquad\normzero{\sigma_{\tail,i}}\le q.\end{gathered}
\]
RIP preserves the norms of sparse pieces $\norm{A\sigma_{\head}}\approx\norm{\sigma_{\head}}$
and controls the measured inner products of disjoint pieces $\ip{A\sigma_{\tail,i}}{A\sigma_{\tail,j}}\approx\ip{\sigma_{\tail,i}}{\sigma_{\tail,j}}=0$,
so that the whole vector could be preserved 
\begin{align*}
\normtwo{A\sigma}^{2} & =\normtwo{A(\sigma_{\head}+{\textstyle \sum_{i}}\sigma_{\tail,i})}^{2}\\
 & \approx\normtwo{\sigma_{\head}}^{2}+{\textstyle \sum_{i}}\normtwo{\sigma_{\tail,i}}^{2}=\normtwo{\sigma}^{2}.
\end{align*}
The proof must choose the pieces so that the accumulated RIP errors
do not destroy this comparison. When $k$ is large enough, the largest
tail coordinates can first be attached to the head, leaving only a
diffuse remainder to be decomposed.

A rigid partition of the tail into consecutive disjoint blocks is
simple but conservative, because its first few blocks remain too spiky.
The critical insight behind Cai and Zhang's proof of the sharp threshold
$\delta<\sharpdelta(k/\rr)$ for constrained recovery~\cite{CaiZhang2014}
is to replace this partition by a convex decomposition of the diffuse
tail into sparse atoms. Equivalently, 
\[
\sigma_{\tail}=\sum_{i}p_{i}\sigma_{\tail,i}=\E[\bm{\sigma}_{\tail}],\qquad p_{i}\ge0,\quad\sum_{i}p_{i}=1,
\]
where every realization $\sigma_{\tail,i}$ is sparse. We refer to
this operation as \emph{atomization}. The following lemma is the expectation-form
restatement of the Cai--Zhang sparse-polytope decomposition.
\begin{lem}[{Sparse-polytope atomization~\cite[Lem~1.1]{CaiZhang2014}}]
\label{lem:polytope} Let $x\in\R^{d}$, let $a\ge0$, and let $q\ge1$
be an integer.  If $\norminf x\le a$ and $\normone x\le qa$, then
there is a finitely supported random vector $\bm{z}$ such that $x=\E[\bm{z}]$,
$\E\normtwo{\bm{z}}^{2}\le a\normone x$, and almost surely, $\normzero{\bm{z}}\le q$,
$\supp(\bm{z})\subseteq\supp(x)$. 
\end{lem}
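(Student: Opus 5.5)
We would prove the lemma by a polytope (Carathéodory) argument: realize $x$ as a convex combination of the vertices of a suitable sign-consistent polytope, and check that every vertex is $q$-sparse. First we reduce to $x\ge0$. Let $S=\supp(x)$ and $s_{j}=\operatorname{sign}(x_{j})$. Replacing $x$ by $(|x_{j}|)_{j}$ and flipping the signs of every realization coordinatewise by $s_{j}$ changes none of the quantities $\normzero{\cdot}$, $\normtwo{\cdot}$, $\supp(\cdot)$, and it commutes with expectation. So we assume $x_{j}\ge0$, with $x_{j}=0$ off $S$. If $a=0$ then $x=0$ and $\bm{z}\equiv0$ works, so take $a>0$.

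Consider the polytope
\[
P\coloneqq\Bigl\{u\in\R^{d}:\ u_{j}=0\ (j\notin S),\ 0\le u_{j}\le a\ (j\in S),\ \textstyle\sum_{j}u_{j}\le qa\Bigr\}.
\]
The hypotheses $\norminf x\le a$ and $\normone x\le qa$ say exactly that $x\in P$. Since $P$ is a bounded polyhedron, it is the convex hull of its finitely many vertices. Carathéodory's theorem then gives weights $p_{i}\ge0$ with $\sum_{i}p_{i}=1$ and $x=\sum_{i}p_{i}v_{i}$, where the $v_{i}$ are vertices of $P$. We let $\bm{z}$ take the value $v_{i}$ with probability $p_{i}$. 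Then $\E[\bm{z}]=x$ and $\supp(\bm{z})\subseteq S$ hold automatically.

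The main step is to characterize the vertices: we claim every vertex $v$ has entries in $\{0,a\}$ and at most $q$ of them equal $a$. At a vertex, $|S|$ linearly independent constraints among the coordinate bounds on $S$ and the sum constraint must be active. Hence at most one coordinate $v_{j_{0}}$ lies strictly inside $(0,a)$, and that can only happen when the sum constraint is active. In that case $\sum_{j}v_{j}=qa$ with every other coordinate in $\{0,a\}$, so $v_{j_{0}}=qa-ma$ for some integer $m$. This is a multiple of $a$, contradicting $0<v_{j_{0}}<a$. Here the integrality of $q$ is exactly what is used. So all entries lie in $\{0,a\}$, and the constraint $\sum_{j}v_{j}\le qa$ forces $\normzero v\le q$.

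Finally, the second-moment bound follows because all realizations lie in the same nonnegative orthant as $x$. Almost surely $\normtwo{\bm{z}}^{2}\le\norminf{\bm{z}}\normone{\bm{z}}\le a\normone{\bm{z}}$. Since $\bm{z}\ge0$, $\normone{\bm{z}}=\sum_{j}\bm{z}_{j}$ is linear in $\bm{z}$, so $\E\normone{\bm{z}}=\normone{\E\bm{z}}=\normone x$. This gives $\E\normtwo{\bm{z}}^{2}\le a\normone x$, and undoing the sign flips completes the argument. We expect the vertex characterization, and in particular the integrality point, to be the only delicate step. A fallback is a greedy construction that fills blocks of mass $a$; we would use that only if the polyhedral argument needed more care with degenerate faces.
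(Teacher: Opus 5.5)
Your proof is correct, but it takes a different route from the one the paper relies on. The paper does not prove this lemma itself. It imports Cai--Zhang's Lemma~1.1, which writes $x$ as a convex combination of atoms $u_i$ that are $q$-sparse, supported in $\supp(x)$, bounded entrywise by $a$, and satisfy $\normone{u_i}=\normone{x}$ exactly. The second-moment bound then holds atom by atom, via $\normtwo{u_i}^{2}\le\norminf{u_i}\normone{u_i}\le a\normone{x}$. Cai and Zhang build that decomposition by an explicit induction on $\normzero{x}$. You instead replace the $\ell_1$ condition with the inequality $\sum_j u_j\le qa$ on the sign-consistent slice. This makes the feasible set a scaled integral polytope whose vertices are $a$ times indicators of at most $q$ coordinates, and the integrality of $q$ enters exactly at your one-fractional-coordinate step. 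What this buys is a short, self-contained argument in which $\normtwo{\bm z}^{2}=a\normone{\bm z}$ holds for every realization, so the moment bound is attained with equality. The price is that your atoms do not preserve the $\ell_1$ norm pointwise. For instance, $x=\epsilon e_j$ with $0<\epsilon<a$ is split into $0$ and $a e_j$, whereas Cai--Zhang would keep $x$ itself. You therefore need sign consistency to get $\E\normone{\bm z}=\normone{x}$ by linearity, which you correctly arranged. The paper's big--small and tail atomizations only use the mean, second moment, sparsity, and support of the atoms, so your weaker atoms suffice for everything downstream.
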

The atoms need not have disjoint supports. Each instead respects the
sparsity budget while retaining the tail's diffuse entrywise scale,
and their controlled second moment avoids the spikiness inherent in
a rigid block partition.

The lower-order branch $2\le k<4\rr/3$ requires a further idea. When
$\rr\le k<4\rr/3$, the high-order atomization argument no longer
reaches the sharp threshold; when $k<\rr$, even the full head is
outside the RIP order. One can convert rank-$k$ RIP into a higher-order
bound, but this inflates the RIP constant too much for a sharp guarantee.

\begin{lem}[{RIP order conversion~\cite[Lem~4.1]{CaiZhang2013}\cite[Lem~5.1]{CaiZhang2014}}]
\label{lem:ric-comparison} If $A$ satisfies $(\delta,k)$ vector
RIP and $k'\ge k\ge2$, then it satisfies the RIP inequalities of
order $k'$ with $\delta':=\left(2k'/k-1\right)\delta.$
\end{lem}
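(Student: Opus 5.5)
The plan is to prove the statement as a spectral bound on a quadratic form. For a fixed support $S$ with $|S|=k'$, set $M\coloneqq A_{S}^{T}A_{S}-I$ and $Q(e)\coloneqq\normtwo{Ae}^{2}-\normtwo e^{2}=e^{T}Me$ for $\supp(e)\subseteq S$. The hypothesis $A\in\RIPv(\delta,k)$ says exactly that every $k\times k$ principal submatrix $M_{TT}$ satisfies $\normop{M_{TT}}\le\delta$. The claim is then $\normop M\le(2k'/k-1)\delta$, which yields both RIP inequalities of order $k'$ at once.

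\emph{Step 1 (averaging over $k$-subsets).} Draw $T\subseteq S$ uniformly with $|T|=k$. Split the form as $Q(e)=\cD+\mathsf{O}$, where $\cD=\sum_{i}M_{ii}e_{i}^{2}$ is the diagonal part and $\mathsf{O}=\sum_{i\ne j}M_{ij}e_{i}e_{j}$ is the off-diagonal part. Then
\[
\E[Q(e_{T})]=\tfrac{k}{k'}\bigl(\cD+\tfrac{k-1}{k'-1}\mathsf{O}\bigr).
\]
Solving for $\mathsf{O}$ expresses $Q(e)$ as $c_{1}\E[Q(e_{T})]-c_{2}\cD$. Bounding $|\E[Q(e_{T})]|\le\delta\tfrac{k}{k'}\normtwo e^{2}$ and $|\cD|\le\delta\normtwo e^{2}$ (from order-$1$ RIP) gives $|Q(e)|\le\delta\,\frac{2k'-k-1}{k-1}\normtwo e^{2}$. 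This is the correct shape but weaker than the target $\frac{2k'-k}{k}\delta$. So this step only serves as a sanity check and a fallback.

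\emph{Step 2 (sharpening via block pairs).} To reach the exact constant $2k'/k-1$, I would work with blocks of size $k/2$, so that any two disjoint blocks together remain within RIP order $k$.
\begin{itemize}
\item For even $k$ and $k/2$ dividing $k'$, partition $S$ into $n=2k'/k$ blocks $u_{1},\dots,u_{n}$. Then $Q(e)=\sum_{i}Q(u_{i})+\sum_{i\ne j}\ip{Au_{i}}{Au_{j}}$.
\item Polarization with order-$k$ RIP gives the cross-term bound $|\ip{Au_{i}}{Au_{j}}|\le\delta\normtwo{u_{i}}\normtwo{u_{j}}$. Hence $|Q(e)|\le\delta\bigl(\sum_{i}\normtwo{u_{i}}\bigr)^{2}\le n\delta\normtwo e^{2}$, which is off by exactly $\delta$.
\item To remove that extra $\delta$, randomize the partition and write $Q(e)$ as an average of the pair quantities $Q(u_{i}+u_{j})$, each bounded by $\delta\normtwo{u_{i}+u_{j}}^{2}$, minus a multiple of $\sum_{i}Q(u_{i})$. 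Choosing the weights so the diagonal contributions cancel favorably should give $(n-1)\delta$.
\end{itemize}
The general case (odd $k$, non-divisible $k'$) would be handled by averaging over random subsets of size $\lfloor k/2\rfloor$ and $\lceil k/2\rceil$, or by appealing to the sparse-polytope decomposition of Lemma~\ref{lem:polytope} to write $e$ as a convex combination of $k$-sparse pieces with controlled second moment.

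\emph{Main obstacle.} I expect Step 2 to be the hard part: tuning the averaging so the constant comes out exactly $2k'/k-1$ rather than $2k'/k$ or $(2k'-k-1)/(k-1)$. The first thing I would try is the weighted identity
\[
\normtwo{Ae}^{2}=\alpha\,\E_{i<j}\normtwo{A(u_{i}+u_{j})}^{2}-\beta\sum_{i}\normtwo{Au_{i}}^{2},
\]
applying upper and lower RIP separately to the two terms, and checking whether the one-sided bounds combine to the target in both directions. If the divisibility bookkeeping becomes messy, I would fall back on citing \cite[Lem~4.1]{CaiZhang2013} for the exact constant, since the statement is quoted from there.
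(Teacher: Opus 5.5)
The paper does not prove this lemma; it quotes it from Cai--Zhang, so your fallback of citing it is exactly what the paper does. As a self-contained argument, however, your Step~2 has a genuine gap. The identity you propose to try cannot yield $2k'/k-1$, because its weights are forced. Matching the cross terms $\ip{Au_i}{Au_j}$ and then the diagonal terms gives
\[
\normtwo{Ae}^{2}=\sum_{i<j}\normtwo{A(u_{i}+u_{j})}^{2}-(n-2)\sum_{i}\normtwo{Au_{i}}^{2}.
\]
Upper RIP on the first sum and lower RIP on the second then give only $\normtwo{Ae}^{2}\le[1+(2n-3)\delta]\normtwo e^{2}$. This misses $(n-1)\delta$ for every $n\ge3$ and never beats your naive $n\delta$. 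Randomizing the partition changes nothing, since the loss is incurred term by term. Step~1 falls short for the same structural reason. In both cases the subtracted quantity ($\cD$, or $\sum_i Q(u_i)$) has no cross terms, so the positive term must generate all of $\mathsf{O}$ by itself. That inflates its coefficient, and the RIP errors of the two pieces then add.

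The missing idea is to subtract lower-RIP terms on \emph{difference} vectors, which carry the cross terms with the opposite sign. In your block setting the correct identity is
\[
Q(e)=\frac{n}{2(n-1)}\sum_{i<j}Q(u_{i}+u_{j})-\frac{n-2}{2(n-1)}\sum_{i<j}Q(u_{i}-u_{j}).
\]
Applying upper RIP to the first sum and lower RIP to the second gives exactly $(n-1)\delta$.

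You can also avoid all parity and divisibility bookkeeping, and the polytope lemma, by repairing Step~1 directly. With $\e_i$ the standard basis vectors, the $2$-sparse differences satisfy $\sum_{i\ne j}Q(e_{i}\e_{i}-e_{j}\e_{j})=2(k'-1)\cD-2\mathsf{O}$. Combined with your formula for $\E[Q(e_{T})]$, this gives
\[
Q(e)=\frac{k'^{2}}{k^{2}}\,\E[Q(e_{T})]-\frac{k'-k}{2k(k'-1)}\sum_{i\ne j}Q(e_{i}\e_{i}-e_{j}\e_{j}).
\]
Now apply upper RIP of order $k$ to the first term and lower RIP of order $2\le k$ to the second. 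This gives $Q(e)\le\bigl(\frac{k'}{k}+\frac{k'-k}{k}\bigr)\delta\normtwo e^{2}=(2k'/k-1)\delta\normtwo e^{2}$, and exchanging the two RIP sides gives the matching lower bound. This is exactly where $k\ge2$ is used. The constant is attained when $A_{S}^{T}A_{S}=(1-\delta)I+\frac{2\delta}{k}\mathbf{1}\mathbf{1}^{T}$, so the lemma is sharp and no lossy averaging can reach it.
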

Zhang and Li's breakthrough~\cite{ZhangLi2018} is to randomize the
head as well. They uniformly sample an $a$-sparse head piece and
pair it with a $b$-sparse tail atom, where $a+b=k$, and symmetrically
pair a $b$-sparse head piece with an $a$-sparse tail atom. Averaging
these paired RIP inequalities produces the cancellations needed to
recover the full sharp threshold $\delta<\sharpdelta(k/\rr)$ for
constrained recovery across all $k\ge2$ and $\rr\ge1$. 

Our proof of \propref{HTR} uses both mechanisms. In fact, a proof
of the easier high-order branch $k\ge4\rr/3$ can already be found
in Wang, Zhang, and Wang \cite[Lem.~2]{WangZhangWang2021}, but our
extension to the lower-order branch $2\le k<4\rr/3$ is new. We prove
both branches in full, both for completeness, and also because the
high-order argument provides a useful tutorial for the substantially
more difficult low-order branch. 

\subsection{High-order branch $t\ge4/3$}

Here $k>\rr$, so the RIP budget can accommodate the head together
with part of the tail. We retain every tail coordinate above a common
threshold in a deterministic \emph{big} component and atomize only
the diffuse \emph{small} remainder. This makes every randomized atom
$k$-sparse while preserving the mean of the complete error. Set 
\begin{gather}
\gamma:=\frac{\normone{\sigma_{\tail}}}{k-\rr},\qquad\Lambda:=\{j>\rr:|\sigma_{j}|>\gamma\},\label{eq:high-sets}\\
\sigma_{\mathrm{big}}:=\sigma_{[\rr]\cup\Lambda},\qquad\sigma_{\mathrm{small}}:=\sigma-\sigma_{\mathrm{big}}.\label{eq:high-big-small}
\end{gather}
We atomize $\sigma_{\mathrm{small}}$ and obtain the following moments. 
\begin{lem}[Big--small atomization]
\label{lem:high-randomization} There is a finitely supported random
vector $\bm{u}$ such that 
\begin{equation}
\E[\bm{u}]=\sigma_{\mathrm{small}},\qquad\E\normtwo{\bm{u}}^{2}\le\frac{\normone{\sigma_{\tail}}^{2}}{k-\rr},\label{eq:high-randomization}
\end{equation}
 and, almost surely, 
\begin{equation}
\supp(\bm{u})\subseteq\supp(\sigma_{\mathrm{small}}),\qquad\normzero{\sigma_{\mathrm{big}}}+\normzero{\bm{u}}\le k.\label{eq:high-support}
\end{equation}
\end{lem}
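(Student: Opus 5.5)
We will apply the sparse-polytope atomization of Lemma~\ref{lem:polytope} directly to $x\coloneqq\sigma_{\mathrm{small}}$, with threshold $a\coloneqq\gamma=\normone{\sigma_{\tail}}/(k-\rr)$ and sparsity budget $q\coloneqq k-\rr-|\Lambda|$. The key steps are to check the two hypotheses of that lemma, namely $\norminf{\sigma_{\mathrm{small}}}\le\gamma$ and $\normone{\sigma_{\mathrm{small}}}\le q\gamma$, and then to read off the moment and support conclusions.

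\textbf{Entrywise bound.} This is immediate. The support of $\sigma_{\mathrm{small}}$ lies in $\{j>\rr\}\setminus\Lambda$, and every coordinate there has $|\sigma_{j}|\le\gamma$ by the definition (\ref{eq:high-sets}) of $\Lambda$.

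\textbf{Mass bound.} Each $j\in\Lambda$ satisfies $|\sigma_{j}|>\gamma$, so
\[
\normone{\sigma_{\mathrm{small}}}=\normone{\sigma_{\tail}}-\sum_{j\in\Lambda}|\sigma_{j}|\le\normone{\sigma_{\tail}}-|\Lambda|\gamma=(k-\rr-|\Lambda|)\gamma .
\]
This also shows $|\Lambda|\le k-\rr$, with strict inequality whenever $\Lambda\neq\emptyset$, since then $|\Lambda|\gamma<\normone{\sigma_{\tail}}$. Hence $q\ge0$. The mass bound above is exactly the hypothesis $\normone{x}\le qa$.

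\textbf{Degenerate cases.} Lemma~\ref{lem:polytope} requires $q\ge1$, so we handle the edge cases separately. If $\sigma_{\mathrm{small}}=0$, including the case $\sigma_{\tail}=0$ where $\gamma=0$ and $\Lambda=\emptyset$, we take $\bm{u}=0$ deterministically. If $q=0$, the displayed bound forces $\sigma_{\mathrm{small}}=0$, so the same choice works. In both cases every claim in (\ref{eq:high-randomization})--(\ref{eq:high-support}) holds trivially. Note also that $t\ge4/3$ gives $k>\rr$, so $\gamma$ is well defined.

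\textbf{Main case and conclusions.} When $q\ge1$, Lemma~\ref{lem:polytope} yields $\bm{u}$ with $\E[\bm{u}]=\sigma_{\mathrm{small}}$ and $\supp(\bm{u})\subseteq\supp(\sigma_{\mathrm{small}})$. The moment bound follows from
\[
\E\normtwo{\bm{u}}^{2}\le\gamma\normone{\sigma_{\mathrm{small}}}\le\gamma\normone{\sigma_{\tail}}=\frac{\normone{\sigma_{\tail}}^{2}}{k-\rr}.
\]
For sparsity, $\normzero{\sigma_{\mathrm{big}}}\le\rr+|\Lambda|$, while $\normzero{\bm{u}}\le q=k-\rr-|\Lambda|$ almost surely. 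Their sum is at most $k$, which is (\ref{eq:high-support}).

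\textbf{Expected obstacle.} The only delicate point is choosing the budget $q$ as the integer $k-\rr-|\Lambda|$, rather than a fractional quantity. The mass inequality must then be checked with this exact integer, which it is, because each element of $\Lambda$ removes more than $\gamma$ of $\ell_{1}$ mass from the tail. Apart from that and the degenerate cases, the argument is bookkeeping.
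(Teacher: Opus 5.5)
Your proof is correct and follows the paper's argument: you apply the sparse-polytope lemma to $\sigma_{\mathrm{small}}$ with entry bound $\gamma$ and integer budget $k-\rr-|\Lambda|$, verify the $\ell_\infty$ and $\ell_1$ hypotheses, and count supports, and you also spell out the second-moment step $\gamma\normone{\sigma_{\mathrm{small}}}\le\normone{\sigma_{\tail}}^{2}/(k-\rr)$ that the paper leaves implicit. Your separate $q=0$ case is harmless but never occurs: $k>\rr$ already forces $|\Lambda|<k-\rr$, strictly when $\Lambda\neq\emptyset$ as you note, and trivially when $\Lambda=\emptyset$.
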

\begin{IEEEproof}
If $\sigma_{\tail}=0$, then take $\bm{u}=0$. Otherwise, if $\sigma_{\tail}\ne0$,
then every coordinate in $\Lambda$ has magnitude larger than $\gamma$,
so 
\[
|\Lambda|\gamma<\normone{\sigma_{\Lambda}}\le\normone{\sigma_{\tail}}=(k-\rr)\gamma,
\]
and hence $|\Lambda|<k-\rr$. By construction, $\norminf{\sigma_{\mathrm{small}}}\le\gamma,$
while 
\[
\normone{\sigma_{\mathrm{small}}}=\normone{\sigma_{\tail}}-\normone{\sigma_{\Lambda}}\le(k-\rr-|\Lambda|)\gamma.
\]
 Since $\sigma_{\mathrm{big}}$ and $\bm{u}$ have disjoint supports,
\[
\normzero{\sigma_{\mathrm{big}}}+\normzero{\bm{u}}\le(\rr+|\Lambda|)+(k-\rr-|\Lambda|)=k
\]
 almost surely.
\end{IEEEproof}
Let $\bm{u}$ be the random vector from \lemref{high-randomization},
and define 
\begin{align}
\mathsf{X}:=\E\Big[ & \normtwo{A((1+\delta)\sigma_{\mathrm{big}}+\delta\bm{u})}^{2}\nonumber \\[-1mm]
 & -\normtwo{A((1-\delta)\sigma_{\mathrm{big}}-\delta\bm{u})}^{2}\Big].\label{eq:high-X}
\end{align}
By (\ref{eq:high-support}), both vectors inside (\ref{eq:high-X})
are $k$-sparse almost surely and are therefore preserved by $A\in\RIPv(\delta,k)$. 
\begin{lem}[High-order RIP comparison]
\label{lem:high-paired} The quantity (\ref{eq:high-X}) satisfies
\begin{subequations}\label{eq:high-paired-bounds}
\begin{align}
\mathsf{X} & \le4\delta\sqrt{1+\delta}\,\normtwo{\sigma_{\mathrm{big}}}\normtwo{A\sigma},\label{eq:high-upper}\\
\mathsf{X} & \ge2\delta(1-\delta^{2})\normtwo{\sigma_{\mathrm{big}}}^{2}-\frac{2\delta^{3}}{k-\rr}\normone{\sigma_{\tail}}^{2}.\label{eq:high-lower}
\end{align}
\end{subequations}
\end{lem}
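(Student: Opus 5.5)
Both bounds come from expanding the two squared norms inside $\mathsf{X}$ and using linearity of expectation together with $\E[\bm{u}]=\sigma_{\mathrm{small}}$ and $\sigma_{\mathrm{big}}+\sigma_{\mathrm{small}}=\sigma$.

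\emph{Upper bound (\ref{eq:high-upper}).} Expanding with the identity $\|a\|^{2}-\|b\|^{2}=\langle a-b,a+b\rangle$ gives, pointwise,
\[
a-b=A(2\delta\sigma_{\mathrm{big}}+2\delta\bm{u}),\qquad a+b=2A\sigma_{\mathrm{big}}.
\]
Hence the integrand equals $4\delta\ip{A\sigma_{\mathrm{big}}}{A(\sigma_{\mathrm{big}}+\bm{u})}$. This expression is linear in $\bm{u}$, so taking expectations yields
\[
\mathsf{X}=4\delta\ip{A\sigma_{\mathrm{big}}}{A\sigma}.
\]
Cauchy--Schwarz gives $\mathsf{X}\le4\delta\normtwo{A\sigma_{\mathrm{big}}}\normtwo{A\sigma}$. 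Since $\normzero{\sigma_{\mathrm{big}}}\le k$ by (\ref{eq:high-support}), the upper RIP bound gives $\normtwo{A\sigma_{\mathrm{big}}}\le\sqrt{1+\delta}\normtwo{\sigma_{\mathrm{big}}}$, and this proves (\ref{eq:high-upper}).

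\emph{Lower bound (\ref{eq:high-lower}).} Both vectors $(1\pm\delta)\sigma_{\mathrm{big}}\pm\delta\bm{u}$ are $k$-sparse almost surely by (\ref{eq:high-support}). Apply the lower RIP bound to the first norm and the upper RIP bound to the second, realization by realization:
\[
\mathsf{X}\ge\E\Bigl[(1-\delta)\normtwo{(1+\delta)\sigma_{\mathrm{big}}+\delta\bm{u}}^{2}-(1+\delta)\normtwo{(1-\delta)\sigma_{\mathrm{big}}-\delta\bm{u}}^{2}\Bigr].
\]
Because $\bm{u}$ is supported in $\supp(\sigma_{\mathrm{small}})$, which is disjoint from $\supp(\sigma_{\mathrm{big}})$, the cross terms vanish. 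The right side therefore collects into
\[
\bigl[(1-\delta)(1+\delta)^{2}-(1+\delta)(1-\delta)^{2}\bigr]\normtwo{\sigma_{\mathrm{big}}}^{2}+\delta^{2}\bigl[(1-\delta)-(1+\delta)\bigr]\E\normtwo{\bm{u}}^{2}.
\]
This equals $2\delta(1-\delta^{2})\normtwo{\sigma_{\mathrm{big}}}^{2}-2\delta^{3}\E\normtwo{\bm{u}}^{2}$. Inserting the second-moment bound from (\ref{eq:high-randomization}) gives (\ref{eq:high-lower}).

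\emph{Expected difficulty.} Nothing here is delicate; the only point to watch is the disjoint-support orthogonality that kills the cross terms. Even without it, each cross term has coefficient $(1-\delta^{2})\delta\cdot2$ in both squared norms and they cancel exactly, so the bound survives either way.
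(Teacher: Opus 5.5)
Your proof is correct and follows the paper's argument essentially line for line. The integrand collapses to $4\delta\ip{A\sigma_{\mathrm{big}}}{A(\sigma_{\mathrm{big}}+\bm{u})}$, with expectation $4\delta\ip{A\sigma_{\mathrm{big}}}{A\sigma}$; the lower bound then follows from lower RIP on the first vector, upper RIP on the second, disjoint supports, and the second-moment bound on $\E\normtwo{\bm{u}}^{2}$.

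One correction to your closing aside: the cross terms are $+2\delta(1-\delta^{2})\ip{\sigma_{\mathrm{big}}}{\bm{u}}$ and $-2\delta(1-\delta^{2})\ip{\sigma_{\mathrm{big}}}{\bm{u}}$. Under the subtraction they add to $4\delta(1-\delta^{2})\ip{\sigma_{\mathrm{big}}}{\bm{u}}$ rather than cancel. The bound would survive without pointwise disjointness only because this term has expectation $4\delta(1-\delta^{2})\ip{\sigma_{\mathrm{big}}}{\sigma_{\mathrm{small}}}=0$.
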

\begin{IEEEproof}
Expanding (\ref{eq:high-X}) and using $\E[\bm{u}]=\sigma_{\mathrm{small}}$
gives 
\begin{align*}
\mathsf{X} & =4\delta\ip{A\sigma_{\mathrm{big}}}{A(\sigma_{\mathrm{big}}+\E[\bm{u}])}\\
 & =4\delta\ip{A\sigma_{\mathrm{big}}}{A\sigma}.
\end{align*}
Cauchy--Schwarz and upper RIP prove (\ref{eq:high-upper}). For the
lower bound, apply lower RIP to $(1+\delta)\sigma_{\mathrm{big}}+\delta\bm{u}$
and upper RIP to $(1-\delta)\sigma_{\mathrm{big}}-\delta\bm{u}$.
 Their supports are disjoint, so the resulting Euclidean expression
is 
\begin{align*}
 & (1-\delta)\left[(1+\delta)^{2}\normtwo{\sigma_{\mathrm{big}}}^{2}+\delta^{2}\normtwo{\bm{u}}^{2}\right]\\
 & \quad-(1+\delta)\left[(1-\delta)^{2}\normtwo{\sigma_{\mathrm{big}}}^{2}+\delta^{2}\normtwo{\bm{u}}^{2}\right]\\
 & =2\delta(1-\delta^{2})\normtwo{\sigma_{\mathrm{big}}}^{2}-2\delta^{3}\normtwo{\bm{u}}^{2}.
\end{align*}
Take expectations and using (\ref{eq:high-randomization}) proves
(\ref{eq:high-lower}).
\end{IEEEproof}
\begin{IEEEproof}[Proof of Proposition~\ref{prop:HTR} for $t\ge4/3$]
First suppose $\delta>0$. Comparing (\ref{eq:high-upper}) and (\ref{eq:high-lower})
and dividing by $2\delta$ gives 
\begin{multline}
(1-\delta^{2})\normtwo{\sigma_{\mathrm{big}}}^{2}-2\sqrt{1+\delta}\,\normtwo{A\sigma}\normtwo{\sigma_{\mathrm{big}}}\\[-1mm]
-\frac{\delta^{2}}{k-\rr}\normone{\sigma_{\tail}}^{2}\le0.\label{eq:high-quadratic}
\end{multline}
Solving this quadratic for the nonnegative quantity $\normtwo{\sigma_{\mathrm{big}}}$
and using $\sqrt{x^{2}+y^{2}}\le x+y$ for $x,y\ge0$ yields 
\begin{align*}
\normtwo{\sigma_{\mathrm{big}}} & \le\frac{2}{(1-\delta)\sqrt{1+\delta}}\normtwo{A\sigma}\\[-1mm]
 & \quad+\frac{\delta}{\sqrt{(1-\delta^{2})(k-\rr)}}\normone{\sigma_{\tail}}\\
 & =\tau_{\delta,t}\normtwo{A\sigma}+\rho_{\delta,t}\frac{\normone{\sigma_{\tail}}}{\sqrt{\rr}}.
\end{align*}
Because $\sigma_{\head}$ and $\sigma_{\Lambda}$ have disjoint supports,
$\mathsf{H}=\normtwo{\sigma_{\head}}\le\normtwo{\sigma_{\mathrm{big}}}$;
by definition, $\mathsf{T}\ge\normone{\sigma_{\tail}}/\sqrt{\rr}$.
 This proves (\ref{eq:HTR}). If $\delta=0$, then $A$ also satisfies
$k$-sparse RIP with every constant $\varepsilon>0$.  Apply the preceding
argument with $\varepsilon$ in place of $\delta$ and let $\varepsilon\downarrow0$.
\end{IEEEproof}

\subsection{Low-order branch $0<t<4/3$}

This section reproduces the Zhang and Li~\cite{ZhangLi2018} construction,
and then makes the necessary modifications to prove the lower-order
branch of \propref{HTR}. Assume $2\le k<4\rr/3$ and choose $\alpha,\beta$
as follows 
\begin{equation}
r_{\star}\ge\alpha\ge\beta\ge1,\qquad\alpha+\beta=k=t\rr.\label{eq:alpha-beta}
\end{equation}
We uniformly sample $\sigma_{\head}$ into $\alpha$-sparse and $\beta$-sparse
pieces, and atomize $\sigma_{\tail}$ into $\beta$-sparse and $\alpha$-sparse
pieces using \lemref{polytope}. We then pair each $\alpha$-sparse
head piece with a $\beta$-sparse tail atom, each $\beta$-sparse
head piece with an $\alpha$-sparse tail atom, propagate their disjoint
$k$-sparse sums through $A$ and evoke $(\delta,k)$-RIP.

\subsubsection{Sampling the head}

Let the head subsets be sampled uniformly according to
\begin{align*}
\bm{T} & \sim\Unif\{T\subseteq[\rr]:|T|=\alpha\},\\[-1mm]
\bm{S} & \sim\Unif\{S\subseteq[\rr]:|S|=\beta\},
\end{align*}
The resulting sparse pieces have explicit first and second moments.
Conditioning two head samples to be disjoint lets their union use
the full budget $k=\alpha+\beta$. 

The following moment calculations are the expectation-normalized form
of Zhang--Li's~\cite[Lem~1]{ZhangLi2018} and \cite[Lem~2]{ZhangLi2018}.
\begin{lem}[Uniform head sampling]
\label{lem:head-randomization} One has \begin{subequations}\label{eq:uniform-head-moments}
\begin{align}
\E[\sigma_{\bm{T}}] & =\frac{\alpha}{\rr}\sigma_{\head},\label{eq:uniform-head-mean}\\
\E\normtwo{\sigma_{\bm{T}}}^{2} & =\frac{\alpha}{\rr}\mathsf{H}^{2},\label{eq:uniform-head-energy}
\end{align}
\end{subequations} and 
\begin{align}
\E\normtwo{A\sigma_{\bm{T}}}^{2}={} & \frac{\alpha(\alpha-1)}{\rr(\rr-1)}\normtwo{A\sigma_{\head}}^{2}\nonumber \\[-1mm]
 & +\frac{\alpha(\rr-\alpha)}{\rr(\rr-1)}\sum_{j\le\rr}\sigma_{j}^{2}\normtwo{A\e_{j}}^{2}.\label{eq:uniform-second}
\end{align}
Moreover, $\normzero{\sigma_{\bm{T}}}\le\alpha$ almost surely. The
analogous identities hold with $(\bm{T},\alpha)$ replaced by $(\bm{S},\beta)$.
\end{lem}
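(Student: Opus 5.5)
The plan is a direct computation with indicator variables. Write $\sigma_{\bm{T}}=\sum_{j\le\rr}\mathbf{1}\{j\in\bm{T}\}\sigma_{j}\e_{j}$. Here we use that, after the permutation at the start of this section, $\sigma_{\head}=\sum_{j\le\rr}\sigma_{j}\e_{j}$. The sparsity claim $\normzero{\sigma_{\bm{T}}}\le|\bm{T}|=\alpha$ is immediate. Everything else reduces to the first two inclusion probabilities of a uniform $\alpha$-subset of $[\rr]$:
\[
\Pr[j\in\bm{T}]=\frac{\alpha}{\rr},\qquad\Pr[i,j\in\bm{T}]=\frac{\alpha(\alpha-1)}{\rr(\rr-1)}\quad(i\ne j).
\]
I would justify these by counting: $\binom{\rr-1}{\alpha-1}/\binom{\rr}{\alpha}$ and $\binom{\rr-2}{\alpha-2}/\binom{\rr}{\alpha}$.

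Linearity of expectation then gives (\ref{eq:uniform-head-mean}) at once. For (\ref{eq:uniform-head-energy}) I would expand $\normtwo{\sigma_{\bm{T}}}^{2}=\sum_{j\le\rr}\mathbf{1}\{j\in\bm{T}\}\sigma_{j}^{2}$ and take expectations, which yields $\frac{\alpha}{\rr}\cH^{2}$.

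For (\ref{eq:uniform-second}) I would expand
\[
\normtwo{A\sigma_{\bm{T}}}^{2}=\sum_{i,j\le\rr}\mathbf{1}\{i,j\in\bm{T}\}\sigma_{i}\sigma_{j}\ip{A\e_{i}}{A\e_{j}}
\]
and split it into diagonal and off-diagonal parts. The expectation is
\[
\frac{\alpha}{\rr}\sum_{j}\sigma_{j}^{2}\normtwo{A\e_{j}}^{2}+\frac{\alpha(\alpha-1)}{\rr(\rr-1)}\sum_{i\ne j}\sigma_{i}\sigma_{j}\ip{A\e_{i}}{A\e_{j}}.
\]
Next I substitute $\sum_{i\ne j}=\normtwo{A\sigma_{\head}}^{2}-\sum_{j}\sigma_{j}^{2}\normtwo{A\e_{j}}^{2}$. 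The diagonal coefficient then becomes
\[
\frac{\alpha}{\rr}-\frac{\alpha(\alpha-1)}{\rr(\rr-1)}=\frac{\alpha(\rr-\alpha)}{\rr(\rr-1)},
\]
which is exactly the stated formula. The same argument with $(\bm{S},\beta)$ gives the analogous identities.

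The only delicate point is the degenerate case $\rr=1$, where $\rr-1=0$. Since $\alpha\ge\beta\ge1$ and $\alpha\le\rr$, this forces $\alpha=\rr=1$. The set $\bm{T}=\{1\}$ is then deterministic, so every identity holds trivially. In (\ref{eq:uniform-second}) both coefficients read $0/0$ and must be interpreted as (or replaced by) their limiting values, which reproduce $\normtwo{A\sigma_{\head}}^{2}$. More generally, whenever $\alpha=\rr$ the second coefficient vanishes and the first equals one. So I would state the $\rr=1$ convention explicitly and otherwise assume $\rr\ge2$. I do not expect any real obstacle beyond this bookkeeping.
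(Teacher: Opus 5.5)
Your proposal is correct and matches the paper's proof essentially step for step. Both use the two inclusion probabilities of a uniform $\alpha$-subset, linearity of expectation for the first two moments, and the substitution $\sum_{i\ne j}\sigma_{i}\sigma_{j}\ip{A\e_{i}}{A\e_{j}}=\normtwo{A\sigma_{\head}}^{2}-\sum_{j}\sigma_{j}^{2}\normtwo{A\e_{j}}^{2}$ for the measured second moment. Your caveat about $r_\star=1$ is harmless but vacuous: the low-order branch assumes $2\le k<4r_\star/3$, which already forces $r_\star\ge2$, a fact the paper also uses in Lemma~\ref{lem:balanced-low}.
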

\begin{IEEEproof}
For the uniform $\alpha$-subset, 
\[
\Pr(j\in\bm{T})=\frac{\alpha}{\rr},\qquad\Pr(i,j\in\bm{T})=\frac{\alpha(\alpha-1)}{\rr(\rr-1)}\quad(i\ne j).
\]
 The first probability proves (\ref{eq:uniform-head-mean})--(\ref{eq:uniform-head-energy}).
 Set $g_{j}:=A(\sigma_{j}\e_{j})$.  Expanding the measured squared
norm gives 
\begin{align*}
\E\normtwo{A\sigma_{\bm{T}}}^{2} & =\frac{\alpha}{\rr}\sum_{j\le\rr}\normtwo{g_{j}}^{2}\\
 & \quad+\frac{\alpha(\alpha-1)}{\rr(\rr-1)}\sum_{\substack{i,j\le\rr\\
i\ne j
}
}\ip{g_{i}}{g_{j}}.
\end{align*}
 Using $\sum_{i\ne j}\ip{g_{i}}{g_{j}}=\normtwo{A\sigma_{\head}}^{2}-\sum_{j}\normtwo{g_{j}}^{2}$
proves (\ref{eq:uniform-second}).
\end{IEEEproof}
\begin{lem}[Disjoint head sampling]
\label{lem:disjoint-head} Suppose $0<t\le1$.  Condition $(\bm{T},\bm{S})$
on $\bm{T}\cap\bm{S}=\varnothing$.  The conditional marginals remain
uniform and 
\[
\bm{T}\cup\bm{S}\sim\Unif\{R\subseteq[\rr]:|R|=\alpha+\beta\}.
\]
 Moreover, \begin{subequations}\label{eq:disjoint-head-moments}
\begin{align}
\E\!\left[\normtwo{\sigma_{\bm{T}}+\sigma_{\bm{S}}}^{2}\,\middle|\,\bm{T}\cap\bm{S}=\varnothing\right] & =t\mathsf{H}^{2},\label{eq:disjoint-plus}\\
\E\!\left[\normtwo{\beta\sigma_{\bm{T}}-\alpha\sigma_{\bm{S}}}^{2}\,\middle|\,\bm{T}\cap\bm{S}=\varnothing\right] & =\alpha\beta t\mathsf{H}^{2}.\label{eq:disjoint-minus}
\end{align}
\end{subequations}
\end{lem}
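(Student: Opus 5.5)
We need to prove Lemma (Disjoint head sampling): with $0<t\le1$, so $k=\alpha+\beta\le\rr$, condition the independent uniform subsets $\bm T,\bm S$ on being disjoint. We must show the conditional marginals are uniform, the union is uniform over $k$-subsets of $[\rr]$, and the two second-moment identities hold.

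\textbf{Distributional claims.} We argue by symmetry and counting. Conditioned on disjointness, the pair $(\bm T,\bm S)$ is uniform over ordered pairs of disjoint subsets with sizes $\alpha$ and $\beta$. For a fixed $\alpha$-subset $T$, the number of admissible $S$ is $\binom{\rr-\alpha}{\beta}$, which does not depend on $T$. Hence $\bm T$ is conditionally uniform, and the same count shows $\bm S$ is too. For a fixed $k$-subset $R$, the number of pairs with $T\cup S=R$ is $\binom{k}{\alpha}$, again independent of $R$. So $\bm T\cup\bm S$ is uniform over $k$-subsets. The condition $k\le\rr$ guarantees these sets are nonempty, so the conditioning is well defined.

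\textbf{First identity.} Because the supports are disjoint,
\[
\normtwo{\sigma_{\bm T}+\sigma_{\bm S}}^{2}=\normtwo{\sigma_{\bm T\cup\bm S}}^{2}.
\]
Applying (\ref{eq:uniform-head-energy}) of \lemref{head-randomization} to the uniform $k$-subset $\bm T\cup\bm S$ gives $(k/\rr)\cH^{2}=t\cH^{2}$.

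\textbf{Second identity.} Disjointness again removes the cross term:
\[
\normtwo{\beta\sigma_{\bm T}-\alpha\sigma_{\bm S}}^{2}=\beta^{2}\normtwo{\sigma_{\bm T}}^{2}+\alpha^{2}\normtwo{\sigma_{\bm S}}^{2}.
\]
Using the uniform conditional marginals and (\ref{eq:uniform-head-energy}), the expectation equals
\[
\beta^{2}\frac{\alpha}{\rr}\cH^{2}+\alpha^{2}\frac{\beta}{\rr}\cH^{2}=\alpha\beta\,\frac{\alpha+\beta}{\rr}\,\cH^{2}=\alpha\beta t\,\cH^{2}.
\]

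There is no real obstacle. The only point needing care is the counting argument that makes the conditional marginals exactly uniform, which is what allows \lemref{head-randomization} to be reused verbatim.
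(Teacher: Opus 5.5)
Your proposal is correct and follows the paper's proof: the conditioned pair is uniform over admissible disjoint pairs, the marginal and union laws follow by symmetry, disjointness splits the norms, and the uniform marginals (via \lemref{head-randomization}) evaluate the expectations. Your explicit counts $\binom{\rr-\alpha}{\beta}$, $\binom{\rr-\beta}{\alpha}$ and $\binom{k}{\alpha}$ just spell out the paper's ``by symmetry'' step.
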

\begin{IEEEproof}
Every admissible ordered disjoint pair has the same unconditional
probability, so conditioning makes the pair uniform over all such
pairs. The marginal and union laws follow by symmetry.  Since the
supports are disjoint, 
\begin{align*}
\normtwo{\sigma_{T}+\sigma_{S}}^{2} & =\sum_{j\in T\cup S}\sigma_{j}^{2},\\
\normtwo{\beta\sigma_{T}-\alpha\sigma_{S}}^{2} & =\beta^{2}\sum_{j\in T}\sigma_{j}^{2}+\alpha^{2}\sum_{j\in S}\sigma_{j}^{2}.
\end{align*}
 Taking the uniform marginals proves (\ref{eq:disjoint-plus})--(\ref{eq:disjoint-minus}).
\end{IEEEproof}

\subsubsection{Atomizing the tail}

The tail is diffuse at the scale measured by $\mathsf{T}$, which
controls both the total mass and the largest coordinate of $\sigma_{\tail}$.
Each required sparsity level can be obtained from the same polytope
lemma as the high-order branch.
\begin{lem}[Polytope tail atomization]
\label{lem:tail-randomization} For every integer $1\le s\le\rr$,
there is a finitely supported random vector $\bm{u}^{(s)}$ such that
\begin{equation}
\E[\bm{u}^{(s)}]=\sigma_{\tail},\qquad\E\normtwo{\bm{u}^{(s)}}^{2}\le\frac{\rr}{s}\mathsf{T}^{2},\label{eq:tail-randomization-moments}
\end{equation}
 with $\normzero{\bm{u}^{(s)}}\le s$ and $\supp(\bm{u}^{(s)})\subseteq\supp(\sigma_{\tail})$
almost surely.
\end{lem}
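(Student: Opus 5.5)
We apply \lemref{polytope} to $x=\sigma_{\tail}$ with sparsity budget $q=s$ and a scale $a$ chosen from $\mathsf{T}$. The definition (\ref{eq:T}) of $\mathsf{T}$ gives two bounds:
\[
\norminf{\sigma_{\tail}}\le\frac{\mathsf{T}}{\sqrt{\rr}},\qquad\normone{\sigma_{\tail}}\le\sqrt{\rr}\,\mathsf{T}.
\]
Set
\[
a:=\max\left\{\frac{\mathsf{T}}{\sqrt{\rr}},\frac{\sqrt{\rr}\,\mathsf{T}}{s}\right\}.
\]
Because $s\le\rr$, the second term dominates, so $a=\sqrt{\rr}\,\mathsf{T}/s$. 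Both hypotheses of \lemref{polytope} then hold:
\[
\norminf{\sigma_{\tail}}\le\frac{\mathsf{T}}{\sqrt{\rr}}\le a,\qquad\normone{\sigma_{\tail}}\le\sqrt{\rr}\,\mathsf{T}=sa.
\]

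\lemref{polytope} now produces a finitely supported random vector $\bm{u}^{(s)}$ with $\E[\bm{u}^{(s)}]=\sigma_{\tail}$, $\normzero{\bm{u}^{(s)}}\le s$ and $\supp(\bm{u}^{(s)})\subseteq\supp(\sigma_{\tail})$ almost surely. Its second moment satisfies
\[
\E\normtwo{\bm{u}^{(s)}}^{2}\le a\normone{\sigma_{\tail}}\le\frac{\sqrt{\rr}\,\mathsf{T}}{s}\cdot\sqrt{\rr}\,\mathsf{T}=\frac{\rr}{s}\mathsf{T}^{2},
\]
which is exactly (\ref{eq:tail-randomization-moments}).

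The degenerate case $\mathsf{T}=0$ forces $\sigma_{\tail}=0$, and then $\bm{u}^{(s)}=0$ works (equivalently, \lemref{polytope} applies with $a=0$). There is no real obstacle here. The only point to check is that $s\le\rr$ makes the $\ell_{1}$ constraint the binding one in the choice of $a$, so that the $\ell_{\infty}$ part of $\mathsf{T}$ is automatically absorbed.
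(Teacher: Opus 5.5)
Your proof is correct and matches the paper's argument: both apply \lemref{polytope} with sparsity $s$ and entry bound $a=\sqrt{\rr}\,\mathsf{T}/s$, use $s\le\rr$ to show the $\ell_{\infty}$ bound $\mathsf{T}/\sqrt{\rr}$ is at most $a$, and use $\normone{\sigma_{\tail}}\le\sqrt{\rr}\,\mathsf{T}=sa$. The second-moment estimate $a\normone{\sigma_{\tail}}\le\frac{\rr}{s}\mathsf{T}^{2}$ then follows, as in the paper.
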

\begin{IEEEproof}
By \defref{HTD}, we have $\norminf{\sigma_{\tail}}\le\frac{\mathsf{T}}{\sqrt{\rr}}$
and $\normone{\sigma_{\tail}}\le\sqrt{\rr}\,\mathsf{T}$. Since $s\le\rr$,
this further yields $\norminf{\sigma_{\tail}}\le\sqrt{\rr}\,\mathsf{T}/s$.
 Apply Lemma~\ref{lem:polytope} with sparsity $s$ and entry bound
$\sqrt{\rr}\,\mathsf{T}/s$.  Its second-moment estimate gives 
\[
\E\normtwo{\bm{u}^{(s)}}^{2}\le\frac{\sqrt{\rr}\,\mathsf{T}}{s}\normone{\sigma_{\tail}}\le\frac{\rr}{s}\mathsf{T}^{2}.
\]
\end{IEEEproof}
Following Zhang and Li~\cite[Eqns~10-12]{ZhangLi2018}, let $\bm{u}$
and $\bm{v}$ be the randomizations with sparsities $\beta$ and $\alpha$.
It follows from the above that
\begin{equation}
\E\normtwo{\bm{u}}^{2}\le\frac{\rr}{\beta}\mathsf{T}^{2},\qquad\E\normtwo{\bm{v}}^{2}\le\frac{\rr}{\alpha}\mathsf{T}^{2}.\label{eq:uv-moments}
\end{equation}
If additionally $1<t<4/3$, then also let $\bm{w}$ be the randomization
with sparsity $(t-1)\rr=k-\rr$, with
\begin{equation}
\E\normtwo{\bm{w}}^{2}\le\frac{1}{t-1}\mathsf{T}^{2}.\label{eq:w-moment}
\end{equation}
If $t\le1$, we simply set $\bm{w}=0$. 

\subsubsection{RIP comparisons}

For disjointly supported $g,h$ and positive integers $p,q$, define
\begin{equation}
\Psi_{p,q}(g,h):=p^{2}\normtwo{A(g+\tfrac{q}{\rr}h)}^{2}-q^{2}\normtwo{A(g-\tfrac{p}{\rr}h)}^{2}.\label{eq:Psi}
\end{equation}
This construction is specifically designed with two cancellations
in mind, as the following lemma shows.
\begin{lem}
\label{lem:paired-low} If $g$ and $h$ have disjoint supports and
$\normzero g+\normzero h\le k$, then 
\begin{gather}
\Psi_{p,q}(g,h)=(p^{2}-q^{2})\normtwo{Ag}^{2}+\frac{2pq(p+q)}{\rr}\ip{Ag}{Ah},\label{eq:Psi-expand}\\
\ge[(p^{2}-q^{2})-(p^{2}+q^{2})\delta]\normtwo g^{2}-\frac{2p^{2}q^{2}\delta}{\rr^{2}}\normtwo h^{2}.\label{eq:Psi-rip}
\end{gather}
\end{lem}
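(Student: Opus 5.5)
The plan is a direct computation: first expand the definition \eqref{eq:Psi} algebraically to get the identity \eqref{eq:Psi-expand}, then obtain the lower bound \eqref{eq:Psi-rip} by applying lower RIP to one term of $\Psi_{p,q}$ and upper RIP to the other, as in \lemref{high-paired}.

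\emph{Step 1 (identity).} Expand both squared norms in \eqref{eq:Psi} by bilinearity:
\begin{align*}
p^{2}\normtwo{A(g+\tfrac{q}{\rr}h)}^{2} & =p^{2}\normtwo{Ag}^{2}+\tfrac{2p^{2}q}{\rr}\ip{Ag}{Ah}+\tfrac{p^{2}q^{2}}{\rr^{2}}\normtwo{Ah}^{2},\\
q^{2}\normtwo{A(g-\tfrac{p}{\rr}h)}^{2} & =q^{2}\normtwo{Ag}^{2}-\tfrac{2pq^{2}}{\rr}\ip{Ag}{Ah}+\tfrac{p^{2}q^{2}}{\rr^{2}}\normtwo{Ah}^{2}.
\end{align*}
Subtracting, the $\normtwo{Ah}^{2}$ terms cancel exactly. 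This is the first of the two designed cancellations, and it is why the weights are $q/\rr$ and $p/\rr$. The cross terms add up to $\frac{2pq(p+q)}{\rr}\ip{Ag}{Ah}$, which gives \eqref{eq:Psi-expand}. This identity uses neither the disjointness of $g,h$ nor RIP.

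\emph{Step 2 (RIP bound).} Since $g$ and $h$ have disjoint supports with $\normzero g+\normzero h\le k$, both vectors $g+\tfrac{q}{\rr}h$ and $g-\tfrac{p}{\rr}h$ are $k$-sparse. Hence $A\in\RIPv(\delta,k)$ applies to each of them. Disjointness also gives the Pythagorean splits $\normtwo{g\pm ch}^{2}=\normtwo g^{2}+c^{2}\normtwo h^{2}$. We use the lower RIP bound on the positively weighted term and the upper RIP bound on the negatively weighted term:
\[
\Psi_{p,q}(g,h)\ge p^{2}(1-\delta)\Bigl(\normtwo g^{2}+\tfrac{q^{2}}{\rr^{2}}\normtwo h^{2}\Bigr)-q^{2}(1+\delta)\Bigl(\normtwo g^{2}+\tfrac{p^{2}}{\rr^{2}}\normtwo h^{2}\Bigr).
\]
Collecting terms, the coefficient of $\normtwo g^{2}$ is $(p^{2}-q^{2})-(p^{2}+q^{2})\delta$. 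For $\normtwo h^{2}$, the $\delta$-free parts $\pm p^{2}q^{2}/\rr^{2}$ cancel; this is the second cancellation. What remains is $-\frac{2p^{2}q^{2}\delta}{\rr^{2}}\normtwo h^{2}$. This is exactly \eqref{eq:Psi-rip}.

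There is no real obstacle here; the argument is pure bookkeeping. The only points to check carefully are the sparsity and disjointness hypotheses, which are needed to invoke RIP on both combined vectors and to split their Euclidean norms, and the choice of which RIP side to apply to which term.
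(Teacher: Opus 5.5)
Your proposal is correct and follows the paper's proof. The paper also expands the definition so that the $\|Ah\|_2^2$ terms cancel, then applies lower RIP to $g+(q/r_\star)h$ and upper RIP to $g-(p/r_\star)h$, using the Pythagorean splits that disjoint supports provide, and your coefficient bookkeeping checks out.
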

\begin{IEEEproof}
Expanding (\ref{eq:Psi}) cancels the two $\normtwo{Ah}^{2}$ terms
and gives (\ref{eq:Psi-expand}). For (\ref{eq:Psi-rip}), apply the
lower RIP endpoint to $g+(q/\rr)h$ and the upper endpoint to $g-(p/\rr)h$.
 Disjoint support gives $\normtwo{g+\tfrac{q}{\rr}h}^{2}=\normtwo g^{2}+\frac{q^{2}}{\rr^{2}}\normtwo h^{2}$
and the analogous identity with $p$.
\end{IEEEproof}
We now package the RIP comparisons into three averaged quantities\begin{subequations}\label{eq:XYZ}
\begin{align}
\mathsf{X}:={} & \frac{\rr-\beta}{\alpha}\E\Psi_{\alpha,\beta}(\sigma_{\bm{T}},\bm{u})+\frac{\rr-\alpha}{\beta}\E\Psi_{\beta,\alpha}(\sigma_{\bm{S}},\bm{v}),\label{eq:X}\\
\mathsf{Y}:={} & \E\!\left[\normtwo{A(\sigma_{\bm{T}}+\sigma_{\bm{S}})}^{2}\right.\nonumber \\[-1mm]
 & \left.\hspace{5mm}-\frac{1-t}{\alpha\beta}\normtwo{A(\beta\sigma_{\bm{T}}-\alpha\sigma_{\bm{S}})}^{2}\,\middle|\,\bm{T}\cap\bm{S}=\varnothing\right],\label{eq:Y}\\
\mathsf{Z}:={} & \E\!\left[\normtwo{A(\sigma_{\head}+(t-1)\bm{w})}^{2}\right.\nonumber \\[-1mm]
 & \left.\hspace{16mm}-(t-1)^{2}\normtwo{A(\sigma_{\head}-\bm{w})}^{2}\right],\label{eq:Z}\\
\vartheta:={} & (\alpha+\beta)^{2}-2\alpha\beta(4-t).\label{eq:vartheta}
\end{align}
\end{subequations}The common quantity $\cX$ couples randomized head
and tail pieces. For $t<1$, $\cY$ reconstructs the measured energy
of the full head from two disjoint head subsets. For $t\ge1$, $\cZ$
uses the additional $k-\rr$ tail budget. 

We will compare $\cX$ against $\cY$ when $0<t<1$, and against $\mathsf{Z}$
for $1\le t<4/3$. Every vector appearing in the definitions of $\mathsf{X}$,
$\mathsf{Y}$, and $\mathsf{Z}$ is $k$-sparse, since the sampled
head and tail supports are disjoint. 
\begin{lem}
\label{lem:balanced-identities}The quantities defined in (\ref{eq:XYZ})
satisfy \begin{subequations}\label{eq:balanced-identities-all}
\begin{align}
\mathsf{X}={} & t\vartheta\normtwo{A\sigma_{\head}}^{2}+2\alpha\beta t(2-t)\ip{A\sigma_{\head}}{A\sigma},\label{eq:X-global}\\
\mathsf{Y}={} & t^{2}\normtwo{A\sigma_{\head}}^{2}\qquad\text{for }0<t<1,\label{eq:Y-global}\\
\mathsf{Z}={} & t(4-3t)\normtwo{A\sigma_{\head}}^{2}\nonumber \\[-1mm]
 & +2t(t-1)\ip{A\sigma_{\head}}{A\sigma}\qquad\text{for }1\le t<4/3.\label{eq:Z-global}
\end{align}
\end{subequations}
\end{lem}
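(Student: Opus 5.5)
The plan is a direct computation: expand each quantity with the polarization identity, take expectations term by term, and use only first moments for anything linear in a random tail atom and the exact second-moment formulas of \lemref{head-randomization} for anything quadratic in a random head piece. I take the tail atoms $\bm u,\bm v,\bm w$ independent of the head samples $\bm T,\bm S$, and use $\E[\bm u]=\E[\bm v]=\E[\bm w]=\sigma_{\tail}$. Throughout I write $a:=A\sigma_{\head}$ and rewrite cross terms via $\ip{a}{A\sigma_{\tail}}=\ip{a}{A\sigma}-\normtwo a^{2}$. Note that $\rr\ge2$ whenever $k<4\rr/3$ and $k\ge2$, so the factor $\rr-1$ in (\ref{eq:uniform-second}) is harmless.

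\emph{The identity for $\cZ$} is the simplest. For fixed $\bm w$, expand $\normtwo{a+(t-1)A\bm w}^{2}-(t-1)^{2}\normtwo{a-A\bm w}^{2}$. The $\normtwo{A\bm w}^{2}$ terms cancel, leaving
\[
t(2-t)\normtwo a^{2}+2t(t-1)\ip{a}{A\bm w}.
\]
This is linear in $\bm w$, so taking expectations replaces $\bm w$ by $\sigma_{\tail}$. Converting $\ip{a}{A\sigma_{\tail}}$ then changes the coefficient of $\normtwo a^{2}$ to $t(2-t)-2t(t-1)=t(4-3t)$, which is (\ref{eq:Z-global}).

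\emph{The identity for $\cX$.} Apply (\ref{eq:Psi-expand}), so that each $\Psi$ is a quadratic term in the head piece plus a bilinear head--tail term. By independence, the bilinear term in $\E\Psi_{\alpha,\beta}(\sigma_{\bm T},\bm u)$ becomes
\[
\frac{2\alpha\beta(\alpha+\beta)}{\rr}\cdot\frac{\alpha}{\rr}\ip{a}{A\sigma_{\tail}},
\]
using (\ref{eq:uniform-head-mean}). After the weight $(\rr-\beta)/\alpha$ and the symmetric contribution from $(\bm S,\bm v)$, the total cross coefficient is $2\alpha\beta t(2\rr-k)/\rr=2\alpha\beta t(2-t)$.

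For the quadratic terms I insert (\ref{eq:uniform-second}). The crucial check is that the diagonal sums $\sum_{j}\sigma_{j}^{2}\normtwo{A\e_{j}}^{2}$ cancel. Their weights are $\tfrac{\rr-\beta}{\alpha}(\alpha^{2}-\beta^{2})\tfrac{\alpha(\rr-\alpha)}{\rr(\rr-1)}$ and $\tfrac{\rr-\alpha}{\beta}(\beta^{2}-\alpha^{2})\tfrac{\beta(\rr-\beta)}{\rr(\rr-1)}$, which sum to zero; this is exactly why those weights appear in (\ref{eq:X}). The surviving coefficient of $\normtwo a^{2}$ is
\[
\frac{(\alpha^{2}-\beta^{2})\bigl[(\rr-\beta)(\alpha-1)-(\rr-\alpha)(\beta-1)\bigr]}{\rr(\rr-1)}=\frac{(\alpha^{2}-\beta^{2})(\alpha-\beta)}{\rr}=t(\alpha-\beta)^{2}.
\]
Converting the cross term adds $-2\alpha\beta t(2-t)$, and $t(\alpha-\beta)^{2}-2\alpha\beta t(2-t)=t\vartheta$. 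This gives (\ref{eq:X-global}).

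\emph{The identity for $\cY$, with $0<t<1$,} is where I expect the main bookkeeping obstacle: the conditional joint law of $(\bm T,\bm S)$ is needed, not just the marginals. I write every vector as $\sum_{j\le\rr}c_{j}g_{j}$ with $g_{j}=A(\sigma_{j}\e_{j})$, and compute $\E[c_{i}c_{j}]$ from the probabilities given by \lemref{disjoint-head}:
\[
\Pr(i\in\bm T)=\frac{\alpha}{\rr},\qquad\Pr(i,j\in\bm T)=\frac{\alpha(\alpha-1)}{\rr(\rr-1)},\qquad\Pr(i\in\bm T,\,j\in\bm S)=\frac{\alpha\beta}{\rr(\rr-1)}.
\]
For $\sigma_{\bm T}+\sigma_{\bm S}$, whose support is a uniform $k$-subset, the diagonal weight is $k/\rr$ and the off-diagonal weight is $k(k-1)/(\rr(\rr-1))$. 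For $\beta\sigma_{\bm T}-\alpha\sigma_{\bm S}$, the diagonal weight is $\alpha\beta k/\rr$, and the off-diagonal weight is
\[
\frac{\alpha\beta^{2}(\alpha-1)+\beta\alpha^{2}(\beta-1)-2\alpha^{2}\beta^{2}}{\rr(\rr-1)}=-\frac{\alpha\beta k}{\rr(\rr-1)}.
\]
Combining with the factor $-(1-t)/(\alpha\beta)$, the diagonal weight is $t-(1-t)t=t^{2}$. The off-diagonal weight is $t(k-1+1-t)/(\rr-1)=t^{2}$. Since both weights equal $t^{2}$, the sum reassembles to $t^{2}\normtwo a^{2}$, which is (\ref{eq:Y-global}).
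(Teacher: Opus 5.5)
Your proposal is correct and follows the paper's proof essentially step by step. The matching steps are: the same cancellation of the diagonal sums $\sum_{j}\sigma_{j}^{2}\normtwo{A\e_{j}}^{2}$ forced by the weights in $\mathsf{X}$, the same independence and first-moment treatment of the head--tail cross terms, the same coefficient count under the conditional disjoint law for $\mathsf{Y}$ (diagonal and off-diagonal weights both equal to $t^{2}$), and the same cancellation of the $\normtwo{A\bm{w}}^{2}$ terms for $\mathsf{Z}$. Your explicit remark that $\rr\ge2$, which makes the $\rr-1$ denominators legitimate, is a small clarification the paper leaves implicit.
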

\begin{IEEEproof}
Insert (\ref{eq:Psi-expand}) into (\ref{eq:X}). The two head second
moments satisfy 
\begin{align}
 & \frac{\rr-\beta}{\alpha}\E\normtwo{A\sigma_{\bm{T}}}^{2}-\frac{\rr-\alpha}{\beta}\E\normtwo{A\sigma_{\bm{S}}}^{2}\nonumber \\[-1mm]
 & \hspace{30mm}=\frac{\alpha-\beta}{\rr}\normtwo{A\sigma_{\head}}^{2}.\label{eq:weighted-head-difference}
\end{align}
Indeed, substituting (\ref{eq:uniform-second}) and its $\beta$ analog
cancels the separate sum $\sum_{j\le\rr}\sigma_{j}^{2}\normtwo{A\e_{j}}^{2}$.
Independence, the uniform head means, and the tail means give 
\begin{align*}
\E[\ip{A\sigma_{\bm{T}}}{A\bm{u}}] & =\frac{\alpha}{\rr}\ip{A\sigma_{\head}}{A\sigma_{\tail}},\\
\E[\ip{A\sigma_{\bm{S}}}{A\bm{v}}] & =\frac{\beta}{\rr}\ip{A\sigma_{\head}}{A\sigma_{\tail}}.
\end{align*}
 Consequently, 
\begin{align*}
\mathsf{X}={} & \frac{(\alpha^{2}-\beta^{2})(\alpha-\beta)}{\rr}\normtwo{A\sigma_{\head}}^{2}\\[-1mm]
 & +\frac{2\alpha\beta(\alpha+\beta)(2\rr-\alpha-\beta)}{\rr^{2}}\ip{A\sigma_{\head}}{A\sigma_{\tail}}.
\end{align*}
Using $\alpha+\beta=t\rr$ and $\langle A\sigma_{\head},A\sigma_{\tail}\rangle=\langle A\sigma_{\head},A\sigma\rangle-\|A\sigma_{\head}\|_{2}^{2}$,
together with
\[
(\alpha-\beta)^{2}-2\alpha\beta(2-t)=(\alpha+\beta)^{2}-2\alpha\beta(4-t)=\vartheta
\]
proves (\ref{eq:X-global}). For $0<t<1$, consider the coefficient
of each term in the expansion of $\mathsf{Y}$.  A fixed diagonal
term belongs to $\bm{T}\cup\bm{S}$ with probability $t$. Its expected
squared signed coefficient in $\beta\sigma_{\bm{T}}-\alpha\sigma_{\bm{S}}$
is $\alpha\beta t$.  Hence its coefficient in $\mathsf{Y}$ is 
\[
t-\frac{1-t}{\alpha\beta}\alpha\beta t=t^{2}.
\]
For $i\ne j$, the first squared norm contributes $k(k-1)/[\rr(\rr-1)]$,
while the expected signed coefficient product in the second is $-\alpha\beta k/[\rr(\rr-1)]$.
Therefore every ordered cross term also has coefficient 
\[
\frac{k(k-1)}{\rr(\rr-1)}+\frac{\rr-k}{\alpha\beta\rr}\frac{\alpha\beta k}{\rr(\rr-1)}=t^{2},
\]
which proves (\ref{eq:Y-global}). For $1<t<4/3$, expand $\mathsf{Z}$.
 The $\normtwo{A\bm{w}}^{2}$ terms cancel and $\E[\bm{w}]=\sigma_{\tail}$,
so 
\begin{align*}
\mathsf{Z}={} & [1-(t-1)^{2}]\normtwo{A\sigma_{\head}}^{2}\\[-1mm]
 & +2t(t-1)\ip{A\sigma_{\head}}{A\sigma_{\tail}}.
\end{align*}
Substituting $\sigma_{\tail}=\sigma-\sigma_{\head}$ and rearranging
gives (\ref{eq:Z-global}).  At $t=1$, the identity follows directly
from $\bm{w}=0$.
\end{IEEEproof}
The next lemma applies RIP to obtain upper- and lower-bounds. They
are exactly Zhang--Li's \cite[Eqns 16--18]{ZhangLi2018}.
\begin{lem}[Low-order RIP comparison]
\label{lem:balanced-rip} One has \begin{subequations}\label{eq:balanced-rip-all}
\begin{align}
\mathsf{X}\ge{} & \Bigl(\bigl[(\alpha+\beta)^{2}-4\alpha\beta\bigr]t\nonumber \\[-1mm]
 & \quad-\bigl[(\alpha+\beta)^{2}-2\alpha\beta\bigr](2-t)\delta\Bigr)\mathsf{H}^{2}\nonumber \\[-1mm]
 & -2\alpha\beta(2-t)\delta\mathsf{T}^{2},\label{eq:X-bound}\\
\mathsf{Y}\ge{} & t[t-(2-t)\delta]\mathsf{H}^{2},\qquad0<t<1,\label{eq:Y-bound}\\
\mathsf{Z}\ge{} & [t(2-t)-(t^{2}-2t+2)\delta]\mathsf{H}^{2}\nonumber \\[-1mm]
 & -2(t-1)\delta\mathsf{T}^{2},\qquad1\le t<4/3.\label{eq:Z-bound}
\end{align}
\end{subequations}
\end{lem}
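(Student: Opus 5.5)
The plan is to obtain each of the three lower bounds by applying $(\delta,k)$-RIP termwise inside the expectations defining $\mathsf{X}$, $\mathsf{Y}$, $\mathsf{Z}$. The lower RIP endpoint goes on the term entering with a plus sign and the upper endpoint on the term entering with a minus sign. Disjointness of supports then turns the Euclidean norms into sums of squares, and the moment bounds of \lemref{head-randomization}, \lemref{disjoint-head}, and (\ref{eq:uv-moments})--(\ref{eq:w-moment}) close the computation. This is legitimate because every vector fed to $A$ is $k$-sparse almost surely. The head pieces live in $[\rr]$, the tail atoms in $\supp(\sigma_{\tail})\subseteq[\rr]^{c}$, and the sparsities add to at most $\alpha+\beta=k$, or to $\rr+(k-\rr)=k$ for $\bm{w}$.

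\emph{Bound on $\mathsf{X}$.} I apply (\ref{eq:Psi-rip}) of \lemref{paired-low} pointwise, with $(g,h)=(\sigma_{\bm{T}},\bm{u})$ and $(p,q)=(\alpha,\beta)$, and then take expectations. Using $\E\normtwo{\sigma_{\bm{T}}}^{2}=\frac{\alpha}{\rr}\mathsf{H}^{2}$ and $\E\normtwo{\bm{u}}^{2}\le\frac{\rr}{\beta}\mathsf{T}^{2}$, the first term of (\ref{eq:X}) is bounded below by
\[
\tfrac{\rr-\beta}{\rr}\bigl[(\alpha^{2}-\beta^{2})-(\alpha^{2}+\beta^{2})\delta\bigr]\mathsf{H}^{2}-\tfrac{2\alpha\beta(\rr-\beta)}{\rr}\delta\,\mathsf{T}^{2}.
\]
The weight $(\rr-\beta)/\alpha$ is nonnegative because $\beta\le\alpha\le\rr$, so multiplying the inequality by it is allowed. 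The symmetric term with $(\sigma_{\bm{S}},\bm{v})$, $(p,q)=(\beta,\alpha)$ and weight $(\rr-\alpha)/\beta\ge0$ gives the same expression with $\alpha$ and $\beta$ swapped.

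Summing the two, the tail coefficients add to $-2\alpha\beta\delta(2\rr-\alpha-\beta)/\rr=-2\alpha\beta(2-t)\delta$. For the head, the $\delta$-free part is $(\alpha^{2}-\beta^{2})(\alpha-\beta)/\rr=t(\alpha-\beta)^{2}=t[(\alpha+\beta)^{2}-4\alpha\beta]$. The $\delta$ part is $-(\alpha^{2}+\beta^{2})(2-t)\delta$, and $\alpha^{2}+\beta^{2}=(\alpha+\beta)^{2}-2\alpha\beta$. Together these give (\ref{eq:X-bound}).

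\emph{Bound on $\mathsf{Y}$ ($0<t<1$).} Conditionally on $\bm{T}\cap\bm{S}=\varnothing$, the support $\bm{T}\cup\bm{S}$ has size $k$. I apply lower RIP to $\sigma_{\bm{T}}+\sigma_{\bm{S}}$ and upper RIP to $\beta\sigma_{\bm{T}}-\alpha\sigma_{\bm{S}}$; the latter enters with the nonnegative coefficient $(1-t)/(\alpha\beta)$. Then \lemref{disjoint-head} gives the lower bound
\[
(1-\delta)t\mathsf{H}^{2}-(1+\delta)(1-t)t\mathsf{H}^{2}=t[t-(2-t)\delta]\mathsf{H}^{2}.
\]

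\emph{Bound on $\mathsf{Z}$ ($1\le t<4/3$).} I apply lower RIP to $\sigma_{\head}+(t-1)\bm{w}$ and upper RIP to $\sigma_{\head}-\bm{w}$. Both are $k$-sparse with head and tail supports disjoint. This gives
\[
[(1-\delta)-(t-1)^{2}(1+\delta)]\mathsf{H}^{2}-2\delta(t-1)^{2}\normtwo{\bm{w}}^{2}.
\]
The head coefficient equals $t(2-t)-(t^{2}-2t+2)\delta$. Taking expectations with (\ref{eq:w-moment}) turns the tail term into $-2(t-1)\delta\mathsf{T}^{2}$. At $t=1$ we have $\bm{w}=0$ and the bound is immediate.

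The only delicate point I expect is bookkeeping rather than a genuine obstacle. One must check that every weight multiplying an RIP inequality is nonnegative, which is why $\alpha\ge\beta$, $\alpha\le\rr$, and $t<1$ in $\mathsf{Y}$ matter. One must also check that all sampled supports are disjoint and within budget $k$, which needs $k-\rr$ to be a positive integer sparsity for $\bm{w}$ when $t>1$. The rest is the algebra verifying that the coefficients collapse to the stated closed forms.
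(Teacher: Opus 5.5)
Your proposal is correct and follows the paper's proof essentially line for line. For $\mathsf{X}$ you apply (\ref{eq:Psi-rip}) pointwise inside the two weighted expectations and collapse the coefficients using the head moments and (\ref{eq:uv-moments}); for $\mathsf{Y}$ and $\mathsf{Z}$ you put the lower RIP endpoint on the added norm and the upper endpoint on the subtracted one, closing with \lemref{disjoint-head} and (\ref{eq:w-moment}), and your explicit checks on nonnegative weights and $k$-sparse disjoint supports are the bookkeeping the paper leaves implicit.
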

\begin{IEEEproof}
Apply (\ref{eq:Psi-rip}) inside the two expectations defining $\mathsf{X}$.
 The weighted head moments are 
\begin{align*}
\frac{\rr-\beta}{\alpha}\E\normtwo{\sigma_{\bm{T}}}^{2} & =\frac{\rr-\beta}{\rr}\mathsf{H}^{2},\\
\frac{\rr-\alpha}{\beta}\E\normtwo{\sigma_{\bm{S}}}^{2} & =\frac{\rr-\alpha}{\rr}\mathsf{H}^{2}.
\end{align*}
 The part independent of $\delta$ is therefore 
\[
(\alpha^{2}-\beta^{2})\frac{\alpha-\beta}{\rr}\mathsf{H}^{2}=\bigl[(\alpha+\beta)^{2}-4\alpha\beta\bigr]t\mathsf{H}^{2},
\]
 while the $\delta$-weighted head part is 
\[
-(\alpha^{2}+\beta^{2})\frac{2\rr-\alpha-\beta}{\rr}\delta\mathsf{H}^{2}=-\bigl[(\alpha+\beta)^{2}-2\alpha\beta\bigr](2-t)\delta\mathsf{H}^{2}.
\]
 Using (\ref{eq:uv-moments}), the two tail terms are bounded below
by 
\begin{align*}
 & -\frac{2\alpha\beta(\rr-\beta)}{\rr}\delta\mathsf{T}^{2}-\frac{2\alpha\beta(\rr-\alpha)}{\rr}\delta\mathsf{T}^{2}\\
 & \hspace{30mm}=-2\alpha\beta(2-t)\delta\mathsf{T}^{2}.
\end{align*}
This proves (\ref{eq:X-bound}). For $0<t<1$, lower RIP on the first
norm in $\mathsf{Y}$ and upper RIP on the subtracted norm give 
\begin{align*}
\mathsf{Y} & \ge(1-\delta)t\mathsf{H}^{2}-(1+\delta)t(1-t)\mathsf{H}^{2}\\
 & =t[t-(2-t)\delta]\mathsf{H}^{2},
\end{align*}
where (\ref{eq:disjoint-plus})--(\ref{eq:disjoint-minus}) were
used. For $1\le t<4/3$, lower RIP on the first norm in $\mathsf{Z}$
and upper RIP on the second give 
\begin{align*}
\mathsf{Z}\ge{} & [1-\delta-(t-1)^{2}(1+\delta)]\mathsf{H}^{2}\\[-1mm]
 & -2\delta(t-1)^{2}\E\normtwo{\bm{w}}^{2}.
\end{align*}
The head coefficient equals $t(2-t)-(t^{2}-2t+2)\delta$.  Use (\ref{eq:w-moment});
at $t=1$ the tail term vanishes.
\end{IEEEproof}
We extract the following critical conclusion from the Zhang--Li construction,
which also implicitly underlies the proof of their main result \cite[Thm~1.1]{ZhangLi2018}.
\begin{lem}[Key low-order inequality]
\label{lem:balanced-low} For $0<t<4/3$, 
\begin{equation}
[t-(3-t)\delta]\mathsf{H}^{2}\le t\ip{A\sigma_{\head}}{A\sigma}+\delta\mathsf{T}^{2}.\label{eq:balanced-low}
\end{equation}
\end{lem}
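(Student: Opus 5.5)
The plan is to eliminate the unmeasured quantity $\normtwo{A\sigma_{\head}}^{2}$ between the identity (\ref{eq:X-global}) for $\mathsf{X}$ and the identity for $\mathsf{Y}$ (when $0<t<1$) or for $\mathsf{Z}$ (when $1\le t<4/3$). The combination is chosen so that only the cross term $\ip{A\sigma_{\head}}{A\sigma}$ survives. I then lower-bound the same combination using \lemref{balanced-rip}.

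\textbf{Choice of $\alpha,\beta$.} The elimination needs a nonnegative multiplier on the $\mathsf{Y}$ or $\mathsf{Z}$ lower bound, and this requires $\vartheta<0$. I take the most balanced split $\alpha=\lceil k/2\rceil$ and $\beta=\lfloor k/2\rfloor$, so that $\alpha\beta\ge(k^{2}-1)/4$. Then $\vartheta<0$ reduces to $k^{2}(2-t)>4-t$. This holds for $k\ge2$ and $t<4/3$: at $k=2$ it reads $4(2-t)>4-t$, i.e.\ $t<4/3$, and larger $k$ only helps. The constraint (\ref{eq:alpha-beta}) also holds. Indeed, $k\ge2$ and $t<4/3$ force $\rr\ge2$, hence $\lceil k/2\rceil\le\rr$, and $\beta\ge1$.

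\textbf{Case $0<t<1$.} By (\ref{eq:X-global}) and (\ref{eq:Y-global}),
\[
\mathsf{X}-\tfrac{\vartheta}{t}\mathsf{Y}=2\alpha\beta t(2-t)\ip{A\sigma_{\head}}{A\sigma}.
\]
Since $-\vartheta/t>0$, I may add $-\vartheta/t$ times (\ref{eq:Y-bound}) to (\ref{eq:X-bound}). Using $k^{2}-4\alpha\beta-\vartheta=2\alpha\beta(2-t)$ and $k^{2}-2\alpha\beta-\vartheta=2\alpha\beta(3-t)$, the head coefficient becomes
\[
t\cdot2\alpha\beta(2-t)-(2-t)\delta\cdot2\alpha\beta(3-t)=2\alpha\beta(2-t)\,[t-(3-t)\delta].
\]
The tail term is $-2\alpha\beta(2-t)\delta\mathsf{T}^{2}$. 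Dividing by $2\alpha\beta(2-t)>0$ gives (\ref{eq:balanced-low}).

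\textbf{Case $1\le t<4/3$.} Here I form $(4-3t)\mathsf{X}-\vartheta\mathsf{Z}$, which kills $\normtwo{A\sigma_{\head}}^{2}$ by (\ref{eq:X-global}) and (\ref{eq:Z-global}). Both multipliers $4-3t$ and $-\vartheta$ are positive. This leaves $K\,\ip{A\sigma_{\head}}{A\sigma}$ with
\[
K=2t\,[\alpha\beta(4-3t)(2-t)-\vartheta(t-1)].
\]
I then lower-bound the combination by $(4-3t)$ times (\ref{eq:X-bound}) plus $(-\vartheta)$ times (\ref{eq:Z-bound}). The task is to check algebraically, using $k^{2}=\vartheta+2\alpha\beta(4-t)$ to eliminate $k^{2}$, that the head coefficient equals $(K/t)[t-(3-t)\delta]$ and the tail coefficient equals $-(K/t)\delta$. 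Verifying these two polynomial identities in $(\alpha\beta,\vartheta,t,\delta)$ is the main obstacle. I expect them to hold because the $\delta$-free parts must match and the structure parallels the first case. I would then confirm that $K>0$, which is clear since both summands are positive, and divide by $K/t$. If the identities turn out to be inequalities in the favorable direction rather than equalities, that still suffices, provided the tail coefficient is no worse than $-\delta$ after normalization.
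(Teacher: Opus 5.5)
Your proposal is correct and follows the paper's proof essentially verbatim. It uses the same balanced split $\alpha=\lceil k/2\rceil$, $\beta=\lfloor k/2\rfloor$ and the same eliminations, namely $t\mathsf{X}-\vartheta\mathsf{Y}$ (your version is scaled by $1/t$) and $(4-3t)\mathsf{X}-\vartheta\mathsf{Z}$. Your $K=2t[\alpha\beta(4-3t)(2-t)-\vartheta(t-1)]$ simplifies to the paper's $2t^{3}[\alpha\beta-(t-1)r_{\star}^{2}]$, and your positivity argument for $K$, via $\vartheta<0$, is slightly cleaner than the paper's. The two identities you deferred do hold exactly after substituting $k^{2}=\vartheta+2\alpha\beta(4-t)$. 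The $\delta$-free head coefficient becomes $K$, and the tail coefficient becomes $-\delta K/t$. The $\delta$-part of the head coefficient becomes $-(3-t)\delta K/t$, because $(4-3t)(2-t)-(t^{2}-2t+2)=2(t-1)(t-3)$. So nothing is missing.
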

\begin{IEEEproof}
Choose $\alpha=\lceil k/2\rceil$, $\beta=\lfloor k/2\rfloor$ so
that $4\alpha\beta\ge k^{2}-1$. Hence, throughout $0<t<4/3$, 
\begin{align*}
\vartheta=(\alpha+\beta)^{2}-2\alpha\beta(4-t) & \le k^{2}-{\textstyle \frac{k^{2}-1}{2}}(4-t)<0,
\end{align*}
where the strict inequality uses $k\ge2$ and $t<4/3$. 

Suppose first that $0<t<1$. Equations~(\ref{eq:X-global}) and (\ref{eq:Y-global})
give 
\[
t\mathsf{X}-2\alpha\beta t^{2}(2-t)\ip{A\sigma_{\head}}{A\sigma}=\vartheta\mathsf{Y}.
\]
Since $t>0$, (\ref{eq:X-bound}) gives a lower bound for the left-hand
side. Since $\vartheta<0$, the lower bound (\ref{eq:Y-bound}) gives
an upper bound for the right-hand side. Substituting these two bounds
and collecting terms yields 
\begin{align*}
2\alpha\beta t(2-t)\Bigl( & [t-(3-t)\delta]\mathsf{H}^{2}\\[-1mm]
 & -t\ip{A\sigma_{\head}}{A\sigma}-\delta\mathsf{T}^{2}\Bigr)\le0.
\end{align*}
The prefactor is positive, which proves (\ref{eq:balanced-low}) for
$0<t<1$.

Now suppose $1\le t<4/3$. Equations~(\ref{eq:X-global}) and (\ref{eq:Z-global})
give 
\[
(4-3t)\mathsf{X}-2t^{3}[\alpha\beta-(t-1)\rr^{2}]\ip{A\sigma_{\head}}{A\sigma}=\vartheta\mathsf{Z}.
\]
Here $4-3t>0$, so (\ref{eq:X-bound}) again gives a lower bound for
the left-hand side. Since $\vartheta<0$, the lower bound (\ref{eq:Z-bound})
gives an upper bound for the right-hand side. Substitution and collection
of terms give 
\begin{align*}
 & 2t^{2}[\alpha\beta-(t-1)\rr^{2}]\\[-1mm]
 & \quad\times\Bigl([t-(3-t)\delta]\mathsf{H}^{2}-t\ip{A\sigma_{\head}}{A\sigma}-\delta\mathsf{T}^{2}\Bigr)\le0.
\end{align*}
The remaining prefactor is positive because 
\begin{align*}
\alpha\beta-(t-1)\rr^{2} & \ge{\textstyle \frac{k^{2}-1}{4}}-(t-1)\rr^{2}={\textstyle \frac{1}{4}}[(2-t)^{2}\rr^{2}-1]>0.
\end{align*}
using $1\le t<4/3$ and $\rr\ge2$. Therefore the common bracket is
nonpositive, proving (\ref{eq:balanced-low}) for $1\le t<4/3$. 
\end{IEEEproof}
Before proceeding, we first upper-bound on the cross term in (\ref{eq:balanced-low}). 
\begin{claim}
\label{claim:mixed} For $0<t<4/3$, 
\begin{equation}
t\ip{A\sigma_{\head}}{A\sigma}\le\max\{t,\sqrt{t}\}\sqrt{1+\delta}\,\mathsf{H}\normtwo{A\sigma}.\label{eq:mixed}
\end{equation}
\end{claim}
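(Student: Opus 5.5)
We bound the cross term by Cauchy--Schwarz, $t\ip{A\sigma_{\head}}{A\sigma}\le t\normtwo{A\sigma_{\head}}\normtwo{A\sigma}$, and then control $\normtwo{A\sigma_{\head}}$ by $\mathsf{H}$ using only $(\delta,k)$-RIP. The two factors $t$ and $\sqrt{t}$ in the maximum correspond to the cases $t\ge1$ and $t<1$.

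\emph{Case $1\le t<4/3$.} Here $k\ge\rr$, so $\sigma_{\head}$ is $k$-sparse. Upper RIP gives $\normtwo{A\sigma_{\head}}\le\sqrt{1+\delta}\,\mathsf{H}$. Hence the cross term is at most $t\sqrt{1+\delta}\,\mathsf{H}\normtwo{A\sigma}$, and $t=\max\{t,\sqrt t\}$ in this range.

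\emph{Case $0<t<1$.} Now $\sigma_{\head}$ is no longer $k$-sparse, so the RIP-order conversion of \lemref{ric-comparison} would inflate the constant. Instead we reuse the uniform head sampling of \lemref{head-randomization} with $\bm{R}\sim\Unif\{R\subseteq[\rr]:|R|=k\}$. Its mean is $\E[\sigma_{\bm{R}}]=t\,\sigma_{\head}$, so by linearity
\[
t\ip{A\sigma_{\head}}{A\sigma}=\E\ip{A\sigma_{\bm{R}}}{A\sigma}\le\E\normtwo{A\sigma_{\bm{R}}}\,\normtwo{A\sigma}.
\]
Each $\sigma_{\bm{R}}$ is $k$-sparse, so upper RIP applies realization by realization. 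Together with Jensen and the energy identity $\E\normtwo{\sigma_{\bm{R}}}^{2}=t\mathsf{H}^{2}$ from (\ref{eq:uniform-head-energy}), this gives
\[
\E\normtwo{A\sigma_{\bm{R}}}\le\sqrt{1+\delta}\,\bigl(\E\normtwo{\sigma_{\bm{R}}}^{2}\bigr)^{1/2}=\sqrt{1+\delta}\,\sqrt t\,\mathsf{H}.
\]
Since $\sqrt t=\max\{t,\sqrt t\}$ for $t<1$, this is the claimed bound.

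The only step that needs care is the $t<1$ case. Naive Cauchy--Schwarz on the full head fails there because $\sigma_{\head}$ exceeds the RIP order. The remedy is to move the averaging inside the inner product before applying Cauchy--Schwarz, so that RIP is only ever used on $k$-sparse samples. The rest is routine, and no sign issue arises: if the cross term is negative the bound is trivial.
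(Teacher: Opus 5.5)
Your proof is correct. For $1\le t<4/3$ it is identical to the paper's, but for $0<t<1$ you take a different route.

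The paper applies Cauchy--Schwarz to the full head and then uses the order-conversion \lemref{ric-comparison} with $k'=\rr$. This gives RIP of order $\rr$ with constant $(2/t-1)\delta$, hence $t\normtwo{A\sigma_{\head}}\le t\sqrt{1+(2/t-1)\delta}\,\mathsf{H}$. That bound is then relaxed to $\sqrt{t(1+\delta)}\,\mathsf{H}$ via $(1+\delta)-t[1+(2/t-1)\delta]=(1-t)(1-\delta)\ge0$.

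You instead write $t\sigma_{\head}=\E[\sigma_{\bm{R}}]$ with $\bm{R}$ a uniform $k$-subset of $[\rr]$, and move the average outside the inner product. Upper RIP is then applied only to $k$-sparse samples, and you finish with Jensen and the energy identity of \lemref{head-randomization}.

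What each route buys:
\begin{itemize}
\item Your argument is self-contained. It uses only the elementary sampling moments already in the paper, not the external Cai--Zhang conversion lemma, and needs no condition such as $\delta\le1$. It lands exactly on $\sqrt{t(1+\delta)}$.
\item The paper's route reuses a standard lemma and gives a slightly smaller intermediate constant, since $t(1+\delta)-t^{2}[1+(2/t-1)\delta]=t(1-t)(1-\delta)\ge0$. That slack is discarded in the claim anyway.
\end{itemize}

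One correction to your framing: the conversion lemma does inflate the RIP constant, but the prefactor $t$ more than absorbs the inflation. That route is therefore not an obstacle here.
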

\begin{IEEEproof}
If $1\le t<4/3$, then $\normzero{\sigma_{\head}}=\rr\le k$. Cauchy--Schwarz
and $(\delta,k)$-RIP gives 
\[
t\ip{A\sigma_{\head}}{A\sigma}\le t\normtwo{A\sigma_{\head}}\normtwo{A\sigma}\le t(\sqrt{1+\delta}\,\mathsf{H})\normtwo{A\sigma}.
\]
If $0<t<1$, then $\normzero{\sigma_{\head}}=\rr>k$, but \lemref{ric-comparison}
shows that $A$ nevertheless satisfies $(\delta',\rr)$-RIP with $\delta'=(2/t-1)\delta$.
Therefore 
\[
t\normtwo{A\sigma_{\head}}\le t\sqrt{1+(2/t-1)\delta}\,\mathsf{H}\le\sqrt{t(1+\delta)}\,\mathsf{H}
\]
because $(1+\delta)-t[1+(2/t-1)\delta]=(1-t)(1-\delta)\ge0$.
\end{IEEEproof}
\begin{IEEEproof}[Proof of Proposition~\ref{prop:HTR} for $0<t<4/3$]
Combining Lemma~\ref{lem:balanced-low} and Claim~\ref{claim:mixed},
then dividing by $t-(3-t)\delta>0$, gives 
\begin{equation}
\mathsf{H}^{2}\le\tau_{\delta,t}\mathsf{H}\normtwo{A\sigma}+\rho_{\delta,t}^{2}\mathsf{T}^{2}.\label{eq:low-quadratic}
\end{equation}
 If $\mathsf{H}\le\rho_{\delta,t}\mathsf{T}$, then (\ref{eq:HTR})
is immediate.  Otherwise, 
\[
(\mathsf{H}-\rho_{\delta,t}\mathsf{T})(\mathsf{H}+\rho_{\delta,t}\mathsf{T})\le\tau_{\delta,t}\mathsf{H}\normtwo{A\sigma}.
\]
 Since $\mathsf{H}+\rho_{\delta,t}\mathsf{T}\ge\mathsf{H}>0$, division
gives 
\[
\mathsf{H}-\rho_{\delta,t}\mathsf{T}\le\tau_{\delta,t}\normtwo{A\sigma},
\]
 which is (\ref{eq:HTR}).  This completes Proposition~\ref{prop:HTR}.
\end{IEEEproof}

\section{Conclusion}

We establish the sharp recovery threshold $\delta<\sharpdelta(t)$
for statistical recovery at global minima of the rank-restricted matrix
LASSO (with $t=k/\rr$) and sparsity-restricted vector LASSO (with
$t=k/\ss$). The proof confirms that penalized recovery does not worsen
the sharp RIP threshold previously established for constrained recovery.
The key proof idea is to retain the violation of the head-tail inequality
with a defect term, so that $\cT\le\cH+\cD$ additively, instead of
$\cT\le c\cH$ multiplicatively for some $c>1$. Combining this with
a sharp head estimate $\cH\le\rho_{\delta,t}\cT+\text{noise}$ yields
recovery whenever the feedback gain $\rho_{\delta,t}<1$, which coincides
exactly with the sharp RIP threshold condition $\delta<\sharpdelta(t)$. 

The resulting picture separates two effects that are often conflated
in low-rank optimization. Increasing the search rank can substantially
complicate the nonconvex landscape and strengthen the assumptions
needed to control every second-order critical point. At global minima,
by contrast, the sharp recovery threshold depends only on the ground-truth
rank and is independent of the search rank, provided the search space
contains the ground truth. Rank overparameterization is therefore
not itself a statistical obstruction at global optimality. Its cost
appears instead in the optimization landscape and in the difficulty
of finding a global minimizer.

\section*{Acknowledgments}

The author thanks Joshua Agterberg, Andrew D. McRae, and Sabrina Zielinski
for helpful discussions that motivated this work. Financial support
was provided by NSF CAREER Award ECCS-2047462 and ONR Award N00014-24-1-2671. 

\bibliographystyle{IEEEtran}
\bibliography{sharp_rip_matrix_vector_lasso}

\end{document}